\newcommand{\cmark}{\ding{51}}
\newcommand{\xmark}{\ding{55}}
\DeclareMathOperator*{\argmax}{arg\,max} 
\DeclareMathOperator{\tr}{Tr}
\DeclareMathOperator{\diag}{diag}
\DeclareMathOperator*{\eig}{eig}
\DeclareMathOperator{\erf}{erf}
\DeclareMathOperator{\erfc}{erfc}
\declaretheorem[sibling=theorem]{definition,corollary,proposition,lemma,conjecture,assumption,remark}
\crefname{assumption}{Assumption}{Assumptions}
\newcommand{\ba}{\ensuremath{\bm{a}}}
\renewcommand{\a}{\ensuremath{\mathbf{a}}}
\newcommand{\A}{\ensuremath{\mathcal{A}}}
\newcommand{\aoptt}{\ensuremath{\a_t^*}}
\newcommand{\C}{\ensuremath{\mathcal{C}}}
\newcommand{\D}{\ensuremath{\mathcal{D}}}
\newcommand{\E}{\ensuremath{\mathbb{E}}}
\newcommand{\Eb}{\ensuremath{\mathbb{E}}}
\newcommand{\Ec}{\ensuremath{\mathcal{E}}}
\newcommand{\f}{\ensuremath{\mathbf{f}}}
\newcommand{\G}{\ensuremath{\mathcal{G}}}
\newcommand{\GP}{\ensuremath{\mathcal{GP}}}
\newcommand{\I}{\ensuremath{\mathbf{I}}}
\renewcommand{\k}{\ensuremath{\mathbf{k}}}
\newcommand{\K}{\ensuremath{\mathbf{K}}}
\renewcommand{\L}{\ensuremath{\mathcal{L}}}
\newcommand{\Lb}{\ensuremath{\mathbf{L}}}
\newcommand{\m}{\ensuremath{\mathbf{m}}}
\newcommand{\cM}{\ensuremath{\mathcal{M}}}
\newcommand{\N}{\ensuremath{\mathcal{N}}}
\renewcommand{\O}{\ensuremath{\mathcal{O}}}
\newcommand{\Pb}{\ensuremath{\mathbb{P}}}
\renewcommand{\P}{\ensuremath{\mathcal{P}}}
\newcommand{\p}{\ensuremath{\mathbf{p}}}
\newcommand{\br}{\ensuremath{\bm{r}}}
\newcommand{\boldr}{\ensuremath{\mathbf{r}}}
\newcommand{\R}{\ensuremath{\mathbb{R}}}
\renewcommand{\S}{\ensuremath{\mathcal{S}}}
\newcommand{\U}{\ensuremath{\mathbf{U}}}
\renewcommand{\u}{\ensuremath{\mathbf{u}}}
\newcommand{\V}{\ensuremath{\mathcal{V}}}
\newcommand{\W}{\ensuremath{\mathbf{W}}}
\newcommand{\Xc}{\ensuremath{\mathcal{X}}}
\newcommand{\y}{\ensuremath{\mathbf{y}}}
\newcommand{\Z}{\ensuremath{\mathbf{Z}}}
\newcommand{\bDelta}{\ensuremath{\mathbf{\Delta}}}
\newcommand{\btheta}{\ensuremath{\bm{\theta}}}
\newcommand{\bLam}{\ensuremath{\mathbf{\Lambda}}}
\newcommand{\bmu}{\ensuremath{\bm{\mu}}}
\newcommand{\bvarsigma}{\ensuremath{\bm{\varsigma}}}
\newcommand{\bsigma}{\ensuremath{\bm{\sigma}}}
\newcommand{\bSigma}{\ensuremath{\bm{\Sigma}}}
\newcommand{\Edet}{\ensuremath{E^{\textrm{det}}}}
\newcommand{\sigmadet}{\ensuremath{\sigma_{\textrm{det}}}}
\newcommand{\BR}{\ensuremath{\text{BR}}}
\newcommand{\sumt}{\ensuremath{\sum_{t \in [T]}}}
\title{Bayesian Analysis of Combinatorial Gaussian Process Bandits}
\author{Jack Sandberg\textsuperscript{1}, Niklas {\AA}kerblom\textsuperscript{1,2} \& Morteza Haghir Chehreghani\textsuperscript{1} \\
\textsuperscript{1}Department of Computer Science and Engineering, \\
\,\,Chalmers University of Technology \& University of Gothenburg\\
\textsuperscript{2}Volvo Car Corporation\\
\texttt{\{jack.sandberg, morteza.chehreghani\}@chalmers.se}, \\
\texttt{niklas.akerblom@volvocars.com}
}
\begin{document}

\maketitle

\begin{abstract}
We consider the combinatorial volatile Gaussian process (GP) semi-bandit problem. Each round, an agent is provided a set of available base arms and must select a subset of them to maxmize the long-term cumulative reward. We study the Bayesian setting and provide novel Bayesian cumulative regret bounds for three GP-based algorithms: GP-UCB, GP-BayesUCB and GP-TS. Our bounds extend previous results for GP-UCB and GP-TS to the \emph{infinite}, \emph{volatile} and \emph{combinatorial} setting, and to the best of our knowledge, we provide the first regret bound for GP-BayesUCB. Volatile arms encompass other widely considered bandit problems such as contextual bandits.
Furthermore, we employ our framework to address the challenging real-world problem of online energy-efficient navigation, where we demonstrate its effectiveness compared to the alternatives.
\end{abstract}

\section{Introduction} \label{chapter:Introduction}
The multi-armed bandit (MAB) problem is a classical problem in which an agent repeatedly has to choose between a number of available actions to perform (commonly called {\it arms}) and receives rewards depending on the selected action. 
The goal of the agent is to minimize its expected cumulative regret over a certain time horizon, either finite or infinite, where regret is defined as the expected difference in reward between the agent's selected arm and the best arm \citep{robbins_aspects_1985}. The MAB problem has applications in healthcare, advertising, telecommunications and more \citep{bouneffouf_survey_2020}.

The combinatorial MAB \citep{gai_combinatorial_2012,cesa-bianchi_combinatorial_2012,chen_combinatorial_2013} considers a problem where the agent must select a subset of the available base arms, a {\it super arm}, in every round. The large number of super arms necessitates efficient exploration and may require solving difficult optimization problems. 

The arms and environments in bandit applications often have some side-information (or context) that is correlated with the reward, e.g., the titles or user specifications in a news recommendation problem. 
In the contextual MAB \citep{li_contextual-bandit_2010,krause_contextual_2011,agarwal_taming_2014,zhou_survey_2016}, before selecting an arm, the agent is provided a context vector (for the entire environment or each individual arm) that may (randomly) vary over time. By utilizing the information in the context the agent can learn the expected rewards more efficiently.

When the set of arms or contexts is continuous (infinite), it is necessary to impose smoothness assumptions on the expected reward since the agent can only explore a finite number of arms. 
A common assumption is that the expected reward is a sample from a Gaussian process (GP) over the arm or context set. 
For a sufficiently smooth GP, this ensures that arms which lie close in the arm space have similar expected rewards. Integrating GPs into bandits can yield higher sample efficiency and improved learning.

\defcitealias{nika_contextual_2022}{Nika, 2022}
\defcitealias{russo_learning_2014}{Russo, 2014}
\defcitealias{kandasamy_parallelised_2018}{Kandasamy, 2018}
\defcitealias{srinivas_information-theoretic_2012}{Srinivas, 2012}
\begin{table*}
    \centering
    \caption{Comparison of similar work in GP MABs { where $T$ is the horizon, $K$ is the maximum super arm size, and $\gamma_{T}$ ($\gamma_{TK}$) is the maximum information gain from $T$ ($TK$) base arms. Note that \cite{takeno_randomized_2023,takeno_posterior_2024} obtain a regret bound of $\O (\sqrt{T \gamma_T})$ for IRGP-UCB, GP-TS and PIMS in the finite setting.}}
    \label{tab:comparison}   
\resizebox{\textwidth}{!}{%
\defcitealias{takeno_randomized_2023}{Takeno, 2023; 2024}
    \begin{tabular}{c c c c c c c} \toprule
            & {\bf Ours} & \citepalias{nika_contextual_2022} & \citepalias{takeno_randomized_2023} & \citepalias{kandasamy_parallelised_2018} & \citepalias{russo_learning_2014} & \citepalias{srinivas_information-theoretic_2012}\\ \midrule
         Infinite/Finite &  Infinite & Infinite & Infinite & Infinite & Finite & Infinite\\
         Volatile/Static & Volatile & Volatile & Static & Static & Volatile & Static\\
         Combinatorial & \cmark & \cmark & \xmark & \xmark & \xmark & \xmark \\
         Bayesian/Frequentist & Bayesian & Frequentist & Bayesian & Bayesian & Bayesian & Frequentist\\
         GP-UCB & \cmark & \cmark & \cmark & \xmark & \cmark & \cmark \\
         GP-TS & \cmark & \xmark & \cmark & \cmark & \cmark & \xmark \\
         GP-BUCB & \cmark & \xmark & \xmark & \xmark & \xmark & \xmark\\ 
         {Regret} & {$K \sqrt{T \gamma_{TK} \log T}$} & {$K \sqrt{T \gamma_{TK} \log T}$} & {$\sqrt{T \gamma_T \log T}$} & {$\sqrt{T \gamma_T \log T}$} & {$\sqrt{T \gamma_T \log T}$} & {$\sqrt{T \gamma_T \log T}$} \\
         \bottomrule
    \end{tabular}
}
\end{table*}

In \cref{tab:comparison}, we provide an overview and comparison of similar work in GP MABs. 
The seminal paper of \citet{srinivas_information-theoretic_2012} first introduced the GP-UCB algorithm, which combines {\it upper confidence bounds} (UCB) with GPs for MABs with finite or infinite arm sets. \citeauthor{srinivas_information-theoretic_2012} provided frequentist regret bounds for GP-UCB on a MAB problem with a compact (infinite) arm space. Later work by \citet{russo_learning_2014} provided Bayesian regret bounds for GP-UCB and GP-TS, a similar algorithm based on Thompson sampling \citep{thompson_likelihood_1933}, in the finite-arm setting with volatile arms. 
Volatile arms (often called {\it time-varying} or {\it sleeping} arms) means that not all arms are available to the agent in every round. This is a general formulation that encompasses other MAB extensions such as the contextual MAB.  

For the infinite arm setting, \citeauthor{russo_learning_2014} only hinted that the proof follows by discretization arguments. 
Using discretization, the recent work by \citet{takeno_randomized_2023,takeno_posterior_2024} derives Bayesian regret bounds for GP-UCB and GP-TS in the infinite arm setting - but without volatile arms.

The combinatorial {\it and} contextual MAB with changing arm sets (C3-MAB) incorporates both extensions and has received much interest recently \citep{qin_contextual_2014,chen_contextual_2018,nika_contextual_2020,nika_contextual_2022,elahi_contextual_2023}, with applications in online advertisement, epidemic control and base station assignment \citep{nuara_combinatorial-bandit_2018,lin_optimal_2022,shi_efficient_2023}.
Recent work by \citet{nika_contextual_2022} considered the C3-MAB with base arm rewards sampled from a GP. \citet{nika_contextual_2022} provided high probability regret bounds for a combinatorial variant of GP-UCB with an approximation oracle.

In this work, we present novel Bayesian regret bounds for both GP-UCB and GP-TS in the combinatorial volatile Gaussian process semi-bandit problem that extend previous regret bounds for GP-UCB and GP-TS to the infinite, volatile and combinatorial setting. 
As discussed above, our results hold for the contextual setting as it is a special case of the volatile setting. 
Additionally, we present a Bayesian regret bound for a third bandit algorithm called GP-BayesUCB (GP-BUCB) which is based on the BayesUCB algorithm of \citet{kaufmann_bayesian_2012}. Whilst GP-BUCB was introduced by \citet{nuara_combinatorial-bandit_2018} for a combinatorial bandit problem, to the best of our knowledge there are no regret bounds for GP-BUCB - even in the non-combinatorial setting.
We demonstrate that the parametrization of GP-BUCB is more flexible than GP-UCB and is less prone to over-exploration whilst retaining theoretical guarantees.

Furthermore, we demonstrate the applicability of combinatorial and contextual GP bandits to large scale problems with experiments on an online energy-efficient navigation problem for electric vehicles on real-world road networks. With the increasing emergence of electric vehicles, addressing this problem is crucial to mitigating the so-called \emph{range anxiety}.
\citet{akerblom_online_2023} introduced a combinatorial MAB framework using Bayesian inference to learn the energy consumption on each road segment. In this paper, we extend the framework of \citeauthor{akerblom_online_2023} to a contextual setting and apply it to real-world road networks. The experimental results demonstrate that the contextual GP model achieves lower regret than the Bayesian inference model. 

Our contributions can be summarized as follows.
\begin{itemize}
    \item We extend previous Bayesian regret bounds for GP-UCB and GP-TS to the \emph{infinite}, \emph{volatile} (previous results only held for finite volatile arms) and \emph{combinatorial} setting.
    \item To the best of our knowledge, we establish the first regret bound for GP-BayesUCB.
    \item We develop a combinatorial and contextual bandit framework for the important real-world application of online energy-efficient navigation.
\end{itemize}

\section{Setup and  Algorithms} \label{sec:setupAndAlgos}
In this section, we formulate our bandit problem, introduce Gaussian process bandit algorithms, and define the information gain, a quantity that is essential for GP bandits.

\subsection{Problem formulation} \label{sec:banditformulation}
We begin by formulating the combinatorial volatile Gaussian process semi-bandit problem. 
Let $\A \subset \R^d$ denote the set of base arms in a $d$-dimensional space and $\S = \{ \a | \a \subset \A \} \subset 2^{\A}$ denote the set of feasible super arms. {Note that $|\A|$ can either be finite or infinite, and $2^\A$ denotes the power set of $\A$}. The expected reward for the base arms $f(a) \sim \GP (\mu(a), k(a, a'))$ is assumed to be a sample from a Gaussian process with mean function $\mu(a) : \A \xrightarrow{} \R$ and covariance function $k(a, a'): \A \times \A \xrightarrow{} [-1, 1]$.

At time $t$, a possibly random and finite\footnote{{ The restriction $|\mathcal{A}_t| < \infty$ prevents issues with limit points since the agent can only select the same base arm once. This limitation is not necessary in a non-combinatorial setting.}} subset of base arms $\A_t \subseteq \A$ is available to the agent. In a combinatorial setting, the agent must select a feasible subset of base arms, a {\it super arm}, $\a_t \in \S_t$ where $\S_t \subset 2^{\A_t}$ is the set of feasible and available super arms. To facilitate a feasible combinatorial problem, the super arms have a maximum size $K$ ($|\a| \leq K$ $\forall \a \in \S_t$). The agent observes the rewards of the selected base arms (semi-bandit feedback) $\boldr_t = \{r_{t, a} | a \in \a_t \}$ where the base arm reward $r_{t, a} = f(a) + \epsilon_{t,a}$ is a sum of the expected reward and i.i.d. Gaussian noise with zero mean and variance $\varsigma^2$. Motivated by the online energy-efficient navigation problem in \cref{sec:OnlineEENavigation}, the total reward is assumed to be additive, and the agent also observes this reward at time $t$: $R_t = \sum_{a \in \a_t} r_{t, a}$. The total number of time steps, the horizon, is denoted by $T$. Let $H_t$ denote the history $(\A_1, \S_1, \a_1, \boldr_1, \ldots, \A_{t-1}, \S_{t-1}, \a_{t-1}, \boldr_{t-1}, \A_t, \S_{t})$ of past observations and the currently available arms at time $t$.

In this work, we are interested in minimizing the Bayesian cumulative regret which, with a horizon of $T$, is defined as 
\begin{equation}
    \BR(T) = \E \Big[ \sum_{t \in [T]} f(\a_t^*) - f(\a_t) \Big] ,
\end{equation}
where $[T] := \{1, ..., T\}$, $\a_t^* = \argmax_{\a \in \S_t} f(\a)$ and $f(\a) = \sum_{a \in \a} f(a)$. As discussed by \citet{russo_learning_2014}, allowing stochastic arm sets permits us to consider broader sets of bandit problems, an example of particular interest to us will be contextual models. { Even though $\mathcal{A}_t$ is finite, note that the infinite case $|\mathcal{A}| = \infty$ is of great importance since it it is necessary for the context to be a continuous random variable.}

\cref{alg:combSemiBandit} provides a framework for the introduced bandit problem. In the framework, the agent parameters $\btheta_t$ are defined for a general agent and are not specified here. Similarly, $\U_t$ denotes the set of base arm indices which could be upper confidence bounds or a posterior sample, depending on the algorithm used. 

\begin{algorithm}
    \caption{Framework for combinatorial volatile semi-bandit problem} \label{alg:combSemiBandit}
    \begin{algorithmic}[1]
        \REQUIRE Prior agent parameters $\btheta_0$, base arm set $\A$, super arm set $\S$, horizon $T$.
        \FOR{ $t \gets 1, \ldots, T$ }
            \STATE $\A_t, \S_t \gets \text{ObserveAvailableArms}(\A, \S)$ 
            \STATE $\U_t \gets \text{GetBaseArmIndices}(t, \btheta_{t-1})$
            \STATE $\a_t \gets \text{SelectOptimalSuperArm}(\S_t, \U_t)$
            \STATE $\boldr_t \gets \text{ObserveRewards}(\ba_t)$
            \STATE $\btheta_t \gets \text{UpdateParameters}(\ba_t, \br_t, \btheta_{t-1})$
        \ENDFOR
    \end{algorithmic}
\end{algorithm}

\subsection{Bayesian framework for combinatorial Gaussian process bandits} \label{sec:GPsAndBandits}

A Gaussian process $f(a) \sim \GP(\mu(a), k(a,a'))$ is a collection of random variables such that for any subset $\{a_1, \ldots, a_N\} \subset \A$ the vector $\f = [f(a_1), \ldots, f(a_N)]$ has a multivariate Gaussian distribution. 
We take a Bayesian view of the combinatorial problem and consider $\GP(\mu, k)$ as a prior over the base arm rewards. GPs are very useful for defining and solving bandit problems, due to their probabilistic nature and the flexibility they provide through the design of suitable kernels.

Let $N_{t-1} = \sum_{\tau = [t-1]} | \a_\tau |$ denote the total number of base arms selected up to time $t-1$ and let $a_1, \ldots, a_{N_{t-1}}$ denote the arms selected before time $t$. Additionally, let $\y \in \R^{N_{t-1}}$ denote the corresponding observed base arm rewards and $\bmu = [\mu(a_1), \ldots, \mu(a_{N_{t-1}})]^\top$ denote the corresponding prior expected base arm mean rewards. Then, for any $a \in \A$, the posterior GP distribution is given by:
\begin{align}
    \mu_{t-1}(a) &= \mu(a) + \k(a)^\top \left( \K + \varsigma^2 I \right)^{-1} (\y - \bmu)^\top \\
    k_{t-1}(a, a') &= k(a, a') - \k(a)^\top \left( \K + \varsigma^2 I \right)^{-1} \k(a') ,
\end{align}
where $\K = \big( k(a_i, a_j) \big)_{i,j = 1}^{N_{t-1}}$ is the covariance matrix of the previously selected arms and $\k(a) = \left[ k(a, a_1), \ldots, k(a, a_{N_{t-1}}) \right]^\top$ is the covariance between $a$ and the previously selected arms. Let $\sigma_{t-1}(a)$ and $\sigma_{t-1}^2(a)$ denote the posterior standard deviation and variance respectively. 

In \citeyear{srinivas_information-theoretic_2012}, \citeauthor{srinivas_information-theoretic_2012} introduced the GP-UCB algorithm, which selects the next arm based on an upper confidence bound. In our combinatorial setting, the GP-UCB algorithm selects the super arm
$
    \a_t = \argmax_{\a \in \S_t} U_t(\a)$ 
where $ U_t(\a) = \mu_{t-1}(\a) + \sqrt{\beta_t} \sigma_{t-1}(\a)$, $\mu_{t-1}(\a) = \sum_{a \in \a} \mu_{t-1}(a)$, $\sigma_{t-1}(\a) = \sum_{a \in \a} \sigma_{t-1}(a)$ and $\beta_t$ is a confidence parameter, typically of order $\O(\log t)$. 
\citet{kaufmann_bayesian_2012} introduced Bayes-UCB, which selects the arm with the largest $(1-\eta_t)$-quantile, where the quantile parameter $\eta_t$ was of order $\O\left(1/t\right)$. Adapted to the combinatorial Gaussian process setting, we suggest the following selection rule for Bayes-UCB:
$
    \a_t = \argmax_{\a \in \S_t} \sum_{a \in \a} Q\left(1 - \eta_t, \N\left(\mu_{t-1}(a), \sigma^2_{t-1}(a)\right) \right), 
$ where $Q(p, \lambda)$ denotes the $p$-quantile of the distribution $\lambda$. We refer to this adapted version as GP-BUCB. Note that for $\lambda = \N(\mu, \sigma^2)$, the $p$-quantile is given by $Q(p, \N(\mu, \sigma^2)) = \mu + \sigma \sqrt{2} \erf^{-1} \left( 2 p - 1 \right)$ where $\erf^{-1}(\cdot)$ is the inverse of the error function. Thus, GP-BUCB can be seen as a variant of GP-UCB where $\beta_t = 2 \left( \erf^{-1} \left( 1 - 2 \eta_t \right)\right)^2$.
GP-TS \citep{russo_learning_2014,chowdhury_kernelized_2017} selects the next arm randomly by using posterior sampling. If $\hat{f}_t(a) \sim \GP(\mu_{t-1}, k_{t-1})$ is a sample from the posterior distribution, then, in the combinatorial setting, GP-TS selects the super arm 
$\a_t = \argmax_{\a \in \S_t} \hat{f}_t(\a), $ where $\hat{f}_t(\a) = \sum_{a \in \a} \hat{f}_t(a)$.

\subsection{Information gain} \label{sec:infogain}
The regret bounds of most GP bandit algorithms depend on a parameter called the maximal information gain $\gamma_T$ \citep{srinivas_information-theoretic_2012,vakili_information_2021}. The maximal information gain (MIG) describes how the uncertainty of $f$ diminishes as the best set of sampling points $\a \subset \A, |\a| \leq T$ grows in size $T$. The MIG is defined using the mutual information between the observations $\y_{\a}$ at locations $\a$ and expected reward function $f$:
\begin{equation}
    \gamma_T := \sup_{\a \subset \A, |\a| \leq T} I(\y_{\a}; f) ,
\end{equation}
where $I(\y_{\a}; f) = H(\y_\a) - H(\y_\a | f)$ and $H(\cdot)$ denotes the entropy. Both the true value of $\gamma_T$ and most upper bounds depend on the kernel function $k$ defining the GP from which $f$ is sampled from. \citet{srinivas_information-theoretic_2012} initially introduced bounds on $\gamma_T$ for common kernels, such as the Matérn and RBF kernels. For the RBF kernel, \citeauthor{srinivas_information-theoretic_2012} showed that $\gamma_T = \O \left( \log^{d+1}(T) \right)$. Later, \citeauthor{vakili_information_2021} \citeyearpar{vakili_information_2021} presented a general method of bounding $\gamma_T$ that utilizes the eigendecay of the kernel $k$. Using this method, \citeauthor{vakili_information_2021} obtained improved bounds on the Matérn kernel with smoothness parameter $\nu$: $\gamma_T = \O \left( T^{\frac{d}{2 \nu + d}} \log^{\frac{2\nu}{2\nu + d}} (T) \right)$. To apply these bounds, we require that $\A$ is compact.

\section{Regret Analysis}
\label{sec:regretAnalysisMain}
Whilst the work of \citet{chen_combinatorial_2013} can be seen as a standard combinatorial framework, we adopt the framework of \citep{russo_learning_2014} since it is better suited for Bayesian bandits with volatile and infinite arms. 
\citet{russo_learning_2014} first provided a Bayesian regret bound for GP-UCB in a volatile (but non-combinatorial) setting with a finite arm set. Recently, \citet{takeno_randomized_2023} presented explicit proof for the Bayesian regret of GP-UCB with a compact and static arm set. In this section, we present novel Bayesian regret bounds for both GP-UCB and GP-TS in a combinatorial and volatile setting (including the contextual setting). Additionally, to the best of our knowledge, we present the first Bayesian regret bound for GP-BayesUCB. Similar to previous work, we first consider the finite arm case, $|\A| < \infty$, and then consider the infinite case, {$|\A| = \infty$}, by extending the finite arm results via a discretization.

\subsection{Finite case}
We start by highlighting our technical contributions for GP-BUCB. 
Following the proof framework of \citet{russo_learning_2014}, we seek to bound two terms: $\E[ f(\a_t^*) - U_t(\a^*_t)]$ and $\E[ U_t(\a_t) - f(\a_t) ]$. For GP-BUCB, establishing an upper bound for the second term requires us to work around the non-elementary function $\erf^{-1}(u)$. Using Thm. 2 of \citet{chang_chernoff-type_2011}, we find that $\erf^{-1}(u) \geq \sqrt{-\omega^{-1} \log ((1 - u)/\vartheta)}$ for $\omega > 1$ and $0 < \vartheta \leq \sqrt{2e/\pi}\sqrt{\omega-1}/\omega$, see \cref{lem:erfInvLB}. The bound is tighter for larger values of the parameter $\vartheta$ \citep{chang_chernoff-type_2011}, thus we set $\vartheta$ to its maximum value whilst $\omega$ is kept as a tunable parameter. Recall that the quantile parameter $\eta_t$ determines how quickly the confidence bound grows and the order $\xi > 0$ (s.t. $\eta_t = \O(t^{-\xi})$) is another tunable parameter. As shown in the lemma below, these parameters influence the bound we get.

\begin{restatable}{lemma}{lemfMinusUStarBayesUCB} \label{lem:fMinusUStarBayesUCB}
    Let $C_\omega = \left( \sqrt{\pi} \omega / \sqrt{2e (\omega - 1)} \right)^{1/\omega}$, then for
    GP-BUCB with confidence parameter $\beta_t = 2 \left( \erf^{-1}(1 - 2 \eta_t ) \right)^2$ and $\eta_t = \frac{\sqrt{2\pi}^\omega}{2|\A|^\omega t^\xi}$, $\xi > 0$, $\omega > 1$,
    \begin{equation}
        \sumt \E [f(\a_t^*) - U_t(\a_t^*)] \leq 
        C_{\omega} \cdot
        \begin{cases}
            \frac{\omega}{\omega - \xi} T^{1 - \frac{\xi}{\omega}} 
            \quad &\text{if } \xi/\omega < 1,\\
            1 + \log T \quad &\text{if } \xi/\omega = 1,\\
            \frac{\xi}{\xi - \omega} 
            \quad &\text{if } \xi/\omega > 1.
        \end{cases} \nonumber
    \end{equation}
\end{restatable}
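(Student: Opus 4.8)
The plan is to follow the proof framework of \citet{russo_learning_2014} for the first of the two terms it isolates, namely $\E[f(\aoptt) - U_t(\aoptt)]$: condition on the history $H_t$, under which the posterior law of $f$ is $\GP(\mu_{t-1}, k_{t-1})$, and control the over-optimism of the index at the (random) optimal super arm. Write $U_t(a) = \mu_{t-1}(a) + \sqrt{\beta_t}\,\sigma_{t-1}(a)$ for the per-base-arm index, so that $U_t(\a) = \sum_{a\in\a} U_t(a)$. Combining additivity of $f$ and $U_t$ with the elementary bound $\sum_i x_i \le \sum_i x_i^{+}$ gives, pathwise,
\[
    f(\aoptt) - U_t(\aoptt) \;\le\; \sum_{a\in\aoptt}\bigl(f(a) - U_t(a)\bigr)^{+} \;\le\; \sum_{a\in\A_t}\bigl(f(a) - U_t(a)\bigr)^{+},
\]
where the last step uses $\aoptt \subseteq \A_t$ and nonnegativity of the summands. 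The right-hand side no longer depends on which super arm is optimal, so each term can be bounded separately.

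Next I would compute the per-arm conditional expectation. Given $H_t$, $f(a) \sim \N(\mu_{t-1}(a), \sigma_{t-1}^2(a))$, so $f(a) - U_t(a) \eqd \sigma_{t-1}(a)\,(Z - \sqrt{\beta_t})$ with $Z\sim\N(0,1)$; moreover $\sqrt{\beta_t} = \sqrt2\,\erf^{-1}(1-2\eta_t) \ge 0$ since $\eta_t \le 1/2$. The standard Gaussian partial-expectation identity $\E[(Z-c)^{+}] = \varphi(c) - c\,(1-\Phi(c)) \le \varphi(c)$ for $c\ge 0$ then yields
\[
    \E\bigl[(f(a)-U_t(a))^{+}\mid H_t\bigr] \;=\; \sigma_{t-1}(a)\,\E\bigl[(Z-\sqrt{\beta_t})^{+}\bigr] \;\le\; \sigma_{t-1}(a)\,\varphi(\sqrt{\beta_t}) \;\le\; \tfrac{1}{\sqrt{2\pi}}\,e^{-\beta_t/2},
\]
using that the posterior variance never exceeds the prior variance, $\sigma_{t-1}(a) \le \sqrt{k(a,a)} \le 1$. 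Summing over $a\in\A_t$ and using $|\A_t|\le|\A|$ gives $\E[f(\aoptt) - U_t(\aoptt)\mid H_t] \le \frac{|\A|}{\sqrt{2\pi}}\,e^{-\beta_t/2}$.

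The crux is to convert $e^{-\beta_t/2} = e^{-(\erf^{-1}(1-2\eta_t))^{2}}$ into a negative power of $t$. For this I would apply \cref{lem:erfInvLB} with $u = 1-2\eta_t$ and $\vartheta$ at its maximal admissible value $\vartheta = \sqrt{2e(\omega-1)}/(\sqrt\pi\,\omega)$, which gives $(\erf^{-1}(1-2\eta_t))^{2} \ge \omega^{-1}\log\!\bigl(\vartheta/(2\eta_t)\bigr)$ and hence $e^{-\beta_t/2} \le (2\eta_t/\vartheta)^{1/\omega}$. Substituting $\eta_t = (2\pi)^{\omega/2}/(2|\A|^{\omega}t^{\xi})$, the factors of $|\A|$ and of $\sqrt{2\pi}$ cancel against the prefactor $|\A|/\sqrt{2\pi}$, and $\vartheta^{-1/\omega}$ becomes exactly $C_\omega$, leaving $\E[f(\aoptt) - U_t(\aoptt)\mid H_t] \le C_\omega\, t^{-\xi/\omega}$. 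Taking expectations over $H_t$ and summing over $t\in[T]$, it remains to bound $\sum_{t=1}^{T} t^{-\xi/\omega}$; the integral comparison $\sum_{t=1}^{T} t^{-\alpha} \le 1 + \int_1^{T} x^{-\alpha}\,\d x$ with $\alpha = \xi/\omega$ splits into the three cases $\xi/\omega < 1$, $\xi/\omega = 1$, $\xi/\omega > 1$ and produces precisely the stated right-hand side.

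The only genuinely delicate point is the algebraic matching in the previous paragraph: the constants appearing in $\eta_t$ are reverse-engineered so that invoking \cref{lem:erfInvLB} at the extremal $\vartheta$ collapses the bound to the clean constant $C_\omega$ and a pure power of $t$. One must also verify the side conditions ($\eta_t \le 1/2$, needed for $\sqrt{\beta_t}$ to be real and for the Gaussian tail estimate, and $2\eta_t \le \vartheta$, needed for \cref{lem:erfInvLB} to apply), which hold under the prescribed parametrization provided $|\A|$ is not too small. Everything else — the positive-part decomposition, the Gaussian tail estimate, and the integral comparison — is routine.
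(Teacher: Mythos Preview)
Your proposal is correct and follows essentially the same route as the paper's proof: the positive-part decomposition over base arms in $\A$, the Gaussian tail estimate $\E[(f(a)-U_t(a))^+\mid H_t]\le \tfrac{1}{\sqrt{2\pi}}e^{-\beta_t/2}$, the application of \cref{lem:erfInvLB} at the extremal $\vartheta$ to collapse the bound to $C_\omega\,t^{-\xi/\omega}$, and the three-case integral comparison are all exactly what the paper does. Your attention to the side conditions on $\eta_t$ is a nice addition that the paper leaves implicit.
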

\citet{kaufmann_bayesian_2012} studied (non-GP and non-combinatorial) Bayes-UCB for a Bernoulli bandit with $\xi = 1$ whilst our analysis permit any $\xi > 0$. \cref{lem:fMinusUStarBayesUCB} shows that the ratio $\xi/\omega$ determines if the bound for the right term is sublinear, logarithmic or constant w.r.t $T$ for GP-BUCB. The equivalent bounds for GP-UCB and GP-TS are both constant if $\beta_t = 2 \log \left( |\A| t^2 / \sqrt{2 \pi} \right)$, see \cref{lem:fMinusUStarFiniteUpperBound}, thus we assume $\xi / \omega > 1$ to simplify the regret bounds.

\citet{srinivas_information-theoretic_2012} showed, in a non-combinatorial setting, that the sum of posterior variances can be bounded by the information gain between the sampled points and the expected reward function $f$. Lemma 3 in \citet{nika_contextual_2022} (adopted to our setting in \cref{lem:infogain}) generalizes this result to a combinatorial setting. The result depends on the maximum eigenvalue of all possible posterior covariance matrices of size at most $K$, which we denote as $\lambda^*_K$.

Then, we present the main theorems for GP-UCB, GP-BUCB and GP-TS in the finite case, see \cref{app:finiteproofs} for the proofs.

\begin{restatable}[Finite regret bounds]{theorem}{thmFinRegBounds} \label{thm:FinRegretBounds}
    Let $C_K := 2 (\lambda^*_K + \varsigma^2)$. When $\A$ is finite, the Bayesian regret of
    \begin{enumerate}[label=(\roman*)]
        \item GP-UCB with $\beta_t = 2 \log (|\A| t^2 / \sqrt{2 \pi})$ is bounded as
         $    \BR(T) \leq \frac{\pi^2}{6} + \sqrt{ C_K T K \beta_T \gamma_{TK} }.
        $
        \item GP-BUCB with $\beta_t = 2 \left( \erf^{-1}(1 - 2 \eta_t ) \right)^2$ for $\eta_t = \frac{\sqrt{2\pi}^\omega}{2|\A|^\omega t^\xi}$, $\xi > \omega > 1$ is bounded as
        $
            \BR(T) \leq 
            \sqrt{ C_K T K \beta_T \gamma_{TK} }
            + C_\omega \cdot \frac{\xi}{\xi - \omega}
        $
        where $C_\omega = \big( \sqrt{\pi} \omega /\sqrt{2e (\omega - 1)}\big)^{1/\omega}$.
        \item GP-TS is bounded as
        $
        \BR(T) \leq \frac{\pi^2}{3} + 2\sqrt{ C_K T K \beta_T \gamma_{TK} }
        $
        where $\beta_t = 2 \log \left( |\A| t^2 / \sqrt{2\pi} \right)$.
    \end{enumerate}
\end{restatable}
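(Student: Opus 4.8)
The plan is to follow the decomposition used by \citet{russo_learning_2014}: for each algorithm, write the per-round regret as
\[
    f(\a_t^*) - f(\a_t) = \underbrace{\big( f(\a_t^*) - U_t(\a_t^*) \big)}_{\text{(I)}} + \underbrace{\big( U_t(\a_t^*) - U_t(\a_t) \big)}_{\text{(II)}} + \underbrace{\big( U_t(\a_t) - f(\a_t) \big)}_{\text{(III)}},
\]
where $U_t$ is the relevant index function (for GP-TS one instead compares against the posterior sample $\hat f_t$, using the key fact that $\a_t^*$ and $\a_t$ are identically distributed given $H_t$, so $\E[\hat f_t(\a_t)] = \E[\hat f_t(\a_t^*)] = \E[f(\a_t^*)]$). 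Term (II) is $\le 0$ by the greedy choice of $\a_t$ (or handled by the posterior-matching argument for GP-TS), so the whole regret is controlled by $\sumt\E[\text{(I)}]$ and $\sumt\E[\text{(III)}]$. Term (I) is exactly what \cref{lem:fMinusUStarBayesUCB} (for GP-BUCB) and \cref{lem:fMinusUStarFiniteUpperBound} (for GP-UCB and GP-TS) bound: with the stated $\beta_t$, these contribute the additive constants $\tfrac{\pi^2}{6}$, $C_\omega\tfrac{\xi}{\xi-\omega}$, and $\tfrac{\pi^2}{3}$ respectively (the doubling in the GP-TS constant and leading term comes from bounding both $\E[\hat f_t(\a_t) - U_t(\a_t)]$ and $\E[f(\a_t^*) - U_t(\a_t^*)]$ by the same $\pi^2/6$-type sum).

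First I would bound term (III). For GP-UCB and GP-BUCB, $U_t(\a_t) - f(\a_t) = \sum_{a\in\a_t}\big(\mu_{t-1}(a) + \sqrt{\beta_t}\,\sigma_{t-1}(a) - f(a)\big)$. Taking expectations and using a union bound over $a \in \A$ together with a Gaussian tail bound (the posterior of $f(a)$ given $H_t$ is $\N(\mu_{t-1}(a),\sigma_{t-1}^2(a))$), one shows $\E[U_t(\a_t) - f(\a_t)] \le \sqrt{\beta_t}\,\E\big[\sum_{a\in\a_t}\sigma_{t-1}(a)\big] + (\text{lower-order correction})$; for GP-BUCB the $\erf^{-1}$-based $\beta_t$ is exactly chosen (via \cref{lem:erfInvLB}) so that the bad-event correction again sums to the constant in \cref{lem:fMinusUStarBayesUCB}. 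For GP-TS the analogous step bounds $\E[\hat f_t(\a_t) - f(\a_t)]$ by $\sqrt{\beta_t}\,\E[\sum_{a\in\a_t}\sigma_{t-1}(a)]$ up to a constant, using that $\hat f_t(a)$ and $f(a)$ are conditionally independent Gaussians with the same posterior variance, hence their difference is sub-Gaussian with parameter $\sqrt{2}\,\sigma_{t-1}(a)$ — this is the source of the extra factor of $2$ in the leading term of part (iii).

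Next I would convert the accumulated standard-deviation sum into the information gain. Summing over $t$ and applying Cauchy–Schwarz in two stages — once over the $\le K$ arms inside each super arm and once over the $T$ rounds — gives
\[
    \sumt \E\Big[\sum_{a\in\a_t}\sigma_{t-1}(a)\Big] \le \sqrt{TK}\,\sqrt{\,\E\Big[\sumt\sum_{a\in\a_t}\sigma_{t-1}^2(a)\Big]}.
\]
Then \cref{lem:infogain} (the combinatorial generalization of the \citet{srinivas_information-theoretic_2012} bound, via Lemma~3 of \citet{nika_contextual_2022}) bounds $\sumt\sum_{a\in\a_t}\sigma_{t-1}^2(a) \le C' \gamma_{TK}$ with the constant absorbed into $C_K = 2(\lambda_K^* + \varsigma^2)$, since at most $TK$ base arms are selected in total and $\lambda_K^*$ controls the largest posterior covariance eigenvalue. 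Combining, the leading term is $\sqrt{\beta_T}\cdot\sqrt{TK}\cdot\sqrt{C_K\gamma_{TK}} = \sqrt{C_K TK\beta_T\gamma_{TK}}$ (doubled for GP-TS), and adding the constants from term (I) yields the three stated bounds; one also uses $\beta_t \le \beta_T$ since $\beta_t$ is nondecreasing in $t$.

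The main obstacle I anticipate is the bookkeeping in term (III) for GP-BUCB: the $\erf^{-1}$-based confidence width is not a clean $\O(\sqrt{\log t})$ expression, so one must carefully invoke \cref{lem:erfInvLB} with the chosen $\vartheta,\omega$ to show that the probability of $f(a)$ exceeding its $(1-\eta_t)$-quantile upper confidence bound is at most $\eta_t$, and then that $\sum_t |\A|\,\eta_t = \sum_t \sqrt{2\pi}^{\,\omega}/(2 t^\xi) \cdot (\text{per-arm scaling})$ telescopes to the constant $C_\omega\,\xi/(\xi-\omega)$ under the assumption $\xi > \omega > 1$ — matching the $\xi/\omega>1$ branch of \cref{lem:fMinusUStarBayesUCB}. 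The other delicate point is ensuring the Cauchy–Schwarz/union-bound constants are exactly $C_K$ and the additive terms exactly $\pi^2/6$, $\pi^2/3$; these follow the template of \citet{russo_learning_2014} and \citet{takeno_randomized_2023} but must be tracked through the combinatorial $K$-sum.
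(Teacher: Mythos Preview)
Your decomposition and overall plan match the paper's proof exactly: \cref{lem:BRDecomposition} gives terms (I) and (III) (with (II) handled by optimality for UCB and by posterior matching for TS), \cref{lem:fMinusUStarFiniteUpperBound}/\cref{lem:fMinusUStarBayesUCB} control (I), and Cauchy--Schwarz together with \cref{lem:infogain} gives $\sqrt{C_K TK\beta_T\gamma_{TK}}$ from (III). Two places in your sketch are off, however, and would cost you time or constants if you carried them out as written.

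First, for GP-UCB and GP-BUCB there is \emph{no} lower-order or bad-event correction in term (III). Because $\a_t$ is $H_t$-measurable for any UCB-type rule, $\E[f(\a_t)\mid H_t]=\mu_{t-1}(\a_t)$ and hence $\E[U_t(\a_t)-f(\a_t)]=\E[\sqrt{\beta_t}\,\sigma_{t-1}(\a_t)]$ exactly (this is \cref{lem:UMinusfUCB}). The $\erf^{-1}$ lower bound from \cref{lem:erfInvLB} is used \emph{only} in term (I), to turn $\exp(-\beta_t/2)$ into $(2\eta_t/\vartheta)^{1/\omega}$ and obtain the $C_\omega\,\xi/(\xi-\omega)$ constant of \cref{lem:fMinusUStarBayesUCB}. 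Your description places this machinery in the wrong term.

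Second, for GP-TS the factor $2$ does not come from a $\sqrt{2}\,\sigma_{t-1}$ sub-Gaussian parameter for $\hat f_t(a)-f(a)$; that route does not cleanly produce the stated leading constant because $\a_t$ depends on $\hat f_t$. The paper stays with the deterministic index $U_t$ throughout (using $\a_t^*\mid H_t \eqd \a_t\mid H_t$ so $\E[U_t(\a_t^*)]=\E[U_t(\a_t)]$), and then for term (III) writes $U_t(\a_t)-f(\a_t)=\big(U_t(\a_t)-L_t(\a_t)\big)+\big(L_t(\a_t)-f(\a_t)\big)=2\sqrt{\beta_t}\,\sigma_{t-1}(\a_t)+\big(L_t(\a_t)-f(\a_t)\big)$, bounding the last piece by $\pi^2/6$ via the same positive-part argument as \cref{lem:fMinusUStarFiniteUpperBound} by symmetry (\cref{lem:UMinusfThompson}). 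That is where both the $2$ and the extra $\pi^2/6$ come from.
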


For all three algorithms (if $\xi/\omega > 1$ for GP-BUCB), we find that $\BR(T) = \O(\sqrt{\lambda^*_K TK \beta_T \gamma_{TK} })$ where $\gamma_{TK}$ is the MIG from $TK$ base arms. Using the bounds of $\gamma_T$ from \cref{sec:infogain}, we get that the regret is sublinear in $T$ for both the RBF and Matérn kernels. The closest work, by \citet{nika_contextual_2022}, obtains a frequentist regret bound of the same order. \citet{nika_contextual_2022} noted that $\lambda^*_K = \O(K)$ which gives a linear dependence on $K$ in the worst case \citep{zhan_extremal_2005}. 
{ For a linear kernel, the setting of \citet{wenEfficientLearningLargeScale2015} is similar to our setting and they obtain $\mathcal{O}(K \sqrt{\log K})$ and $\mathcal{O}(K)$ dependencies on $K$ whereas our dependency is $\mathcal{O}(K \sqrt{\log K})$. For combinatorial semi-bandits with linear reward functions (but independent arms), \citet{merlisTight2020} obtain a $\Omega(\sqrt{K / \log K})$ lower bound which would suggest a gap of $\sqrt{K} \log K$ for the linear kernel.}
When $K = 1$, our results match the non-combinatorial results for GP-UCB. However, the improved random GP-UCB (IRGP-UCB) and GP-TS of \citet{takeno_randomized_2023,takeno_posterior_2024} has a Bayesian regret of $\O(\sqrt{T \gamma_T})$ in the finite case, suggesting that a $\sqrt{\beta_T} = \O(\sqrt{\log T})$ improvement is possible.
{ To our knowledge, there are no known lower bounds for the Bayesian regret of GP-bandit algorithms in general. However, for the SE-kernel the non-Bayesian regret satisfies $\Omega({\sqrt{T (\log T)^{d/2}}})$ (\cite{scarlettTight2018,caiLower2021}). Taken at face value, this would imply that our bounds are tight up to logarithmic factors of $T$.}

\subsection{Infinite case}
The infinite case{, $|\mathcal{A}| = \infty$,} is an important generalization since many decision problems have a continuum of actions to select from. Based on our framing of contextual MABs as a subset of volatile MAB, an infinite arm set permits contexts with support on infinite domains such as continuous time. This setting is often analytically more difficult and requires the following additional assumptions:
\begin{restatable}[Regularity assumptions]{assumption}{assPrior} \label{ass:Prior}
    Assume $\A \subset [0, C_1]^d$ is a compact and convex set for some $C_1 > 0$.  
    Furthermore, assume that $\mu$ and $k$ are both $L$-Lipschitz on $\A$ and $\A \times \A$, respectively, for some $L > 0$. In addition, for $f \sim \GP(\mu, k)$ assume that there exists constants $C_2, C_3 > 0$ such that:
    \begin{equation}
        \Pb \left( \sup_{a \in \A} \left| \frac{\partial f}{\partial a^{(j)}} \right| > l \right) \leq C_2 \exp \left( - \frac{l^2}{C_3^2} \right),\label{eq:sampleDerivativeTailBound}
    \end{equation}
    for $j \in \{ 1, \ldots, d\}$ and $l > 0$ where $a^{(j)}$ denotes the $j$-th element of $a$.
\end{restatable}

Whilst the high probability bound on the derivatives of the sample paths is a common assumption in the literature \citep{srinivas_information-theoretic_2012,kandasamy_parallelised_2018,takeno_randomized_2023}, we additionally require that both $\mu$ and $k$ are Lipschitz but this is not particularly restrictive, see \cref{rem:AssumptionLipschitz}.

Following \citet{srinivas_information-theoretic_2012}, proofs for the compact case tend to use a discretization $\D_t \subset \A$ where each dimension is divided into $\tau_t$ points such that $|\D_t| = \tau_t^d$. Let $[a]_{\D_t}$ denote the nearest point in $\D_t$ for $a \in \A$ and similarly let $[\a]_{\D_t} = \{ [a]_{\D_t} | a \in \a \}$ for $\a \subset \A$. Due to the assumption of volatile arms, we require the following finer discretization (as compared to \citeauthor{takeno_randomized_2023}, \citeyear{takeno_randomized_2023}):
\begin{restatable}[Discretization size]{assumption}{assDiscretizationSize} \label{ass:DiscretizationSize}
    Let $\tau_t$ denote the number of discretization points per dimension and assume that
    \begin{subequations}
        \begin{align}[left={\empheqlbrace}]
            &\tau_t \geq 2 t^2 K L d C_1 ( 1 + tK \varsigma^{-1}), \label{ass:DiscEq1} \\
            &\tau_t / \beta_t \geq 8 t^4 K^2 L d C_1, \label{ass:DiscEq2} \\
            &\tau_t^2 / \beta_t \geq 8 t^5 K^3 L^2 d^2 C_1^2 \varsigma^{-2}, \label{ass:DiscEq3} \\
            &\tau_t \geq t^2 K d C_1 C_3 ( \sqrt{\log (C_2 d)} + \sqrt{\pi}/2 ) \label{ass:DiscEq4}
        \end{align}
    \end{subequations}
    where the constants $C_1,C_2,C_3$ and $L$ are given by \cref{ass:Prior} whilst the constants $d, K$ and $\varsigma$ are defined by the bandit problem (\cref{sec:banditformulation}).
\end{restatable}
We note that \cref{ass:DiscEq4} is equivalent to the discretization size used by \citet{takeno_randomized_2023} with an extra factor of $K$ to account for the combinatorial setting whilst we introduce \cref{ass:DiscEq1,ass:DiscEq2,ass:DiscEq3} to bound $U_t([\a]_{\D_t}) - U_t(\a)$. 
A key step to establish the regret bound of GP-UCB by \citet{takeno_randomized_2023} is to use the fact (for that setting) that $\a_t$ maximizes the upper confidence bound $U_t(\a)$ and thus $U_t([\a_t^*]_{\D_t}) - U_t(\a_t) \leq 0$. 
Since we consider a setting with volatile arms, $[\a_t^*]_{\D_t}$ is not necessarily a feasible super arm and our technical contribution in the infinite setting is an analysis of the discretization error of $U_t([\a]_{\D_t}) - U_t(\a)$.

\begin{restatable}{lemma}{lemposteriorUDiscretizationError} \label{lem:posteriorUDiscretizationError}
If $U_t(\a) = \mu_{t-1}(\a) + \sqrt{\beta_t} \sigma_{t-1}(\a)$, \cref{ass:Prior} holds and $\tau_t$ and $\beta_t$ satisfy \cref{ass:DiscEq1,ass:DiscEq2,ass:DiscEq3} in \cref{ass:DiscretizationSize}, then for any sequence of super arms $\a_t \in \S_t$ $t \geq 1$:
\begin{equation}
    \sumt \E \left[ U_t([\a_t]_{\D_t}) - U_t(\a_t) \right] \leq \frac{\pi^2}{6}.
\end{equation}
\end{restatable}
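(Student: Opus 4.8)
The plan is to bound the per-round discretization error $\E[U_t([\a_t]_{\D_t}) - U_t(\a_t)]$ by $1/t^2$ and then sum, using $\sumt 1/t^2 \le \pi^2/6$. Since $U_t(\a) = \sum_{a \in \a}\big(\mu_{t-1}(a) + \sqrt{\beta_t}\,\sigma_{t-1}(a)\big)$ and $|\a_t| \le K$, the error splits into a sum over at most $K$ base arms of a mean part $\mu_{t-1}([a]_{\D_t}) - \mu_{t-1}(a)$ and a standard-deviation part $\sqrt{\beta_t}\big(\sigma_{t-1}([a]_{\D_t}) - \sigma_{t-1}(a)\big)$. Because $\D_t$ uses $\tau_t$ points per coordinate on $[0,C_1]^d$, every $a \in \A$ satisfies $\|a - [a]_{\D_t}\|_1 \le d C_1/\tau_t$, so both parts are controlled once we quantify how fast $\mu_{t-1}$ and $\sigma_{t-1}$ vary over such a small displacement. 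I would treat the two parts separately, allocating a budget of $\tfrac{1}{2t^2}$ to each.

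For the standard-deviation part I would use only deterministic bounds. Viewing the posterior kernel $k_{t-1}$ through its RKHS feature map, the reverse triangle inequality gives $|\sigma_{t-1}(a) - \sigma_{t-1}(a')| \le \sqrt{k_{t-1}(a,a) - 2k_{t-1}(a,a') + k_{t-1}(a',a')}$, and since $(\K + \varsigma^2 I)^{-1} \succeq 0$ this is at most $\sqrt{k(a,a) - 2k(a,a') + k(a',a')} \le \sqrt{2L\|a-a'\|_1}$ by the $L$-Lipschitzness of $k$. A complementary, Lipschitz-type estimate for $\sigma_{t-1}$ follows by differentiating $\sigma_{t-1}^2$ and using $\|(\K + \varsigma^2 I)^{-1}\|_{\mathrm{op}} \le \varsigma^{-2}$ together with $\|(\K+\varsigma^2 I)^{-1/2}\k(a)\|_2^2 = k(a,a) - \sigma_{t-1}^2(a) \le 1$. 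Substituting $\|a - [a]_{\D_t}\|_1 \le dC_1/\tau_t$, multiplying by $\sqrt{\beta_t}$ and by the factor $K$, these two estimates are exactly what \cref{ass:DiscEq2,ass:DiscEq3} are calibrated to force below $\tfrac{1}{2t^2}$.

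The mean part is where randomness enters, and here I would use the explicit formula $\mu_{t-1}(a) = \mu(a) + \k(a)^\top(\K+\varsigma^2 I)^{-1}(\y - \bmu)$. The $\mu$-terms differ by at most $L\|a-[a]_{\D_t}\|_1$ by Lipschitzness of $\mu$; the remaining terms differ by $(\k(a) - \k([a]_{\D_t}))^\top (\K+\varsigma^2 I)^{-1}(\y-\bmu)$, which I bound by $\|\k(a) - \k([a]_{\D_t})\|_2 \cdot \varsigma^{-1}\cdot\|(\K+\varsigma^2 I)^{-1/2}(\y-\bmu)\|_2$, using $\|(\K+\varsigma^2 I)^{-1}(\y-\bmu)\|_2 \le \varsigma^{-1}\|(\K+\varsigma^2 I)^{-1/2}(\y-\bmu)\|_2$, and $\|\k(a)-\k([a]_{\D_t})\|_2 \le \sqrt{N_{t-1}}\,L\|a-[a]_{\D_t}\|_1 \le \sqrt{(t-1)K}\,L\,dC_1/\tau_t$. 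Taking expectations, the key fact is that, conditionally on the (adaptively chosen) sampling locations, $\y - \bmu$ is marginally $\N(0, \K + \varsigma^2 I)$, so $\E\big[(\y-\bmu)^\top(\K+\varsigma^2 I)^{-1}(\y-\bmu)\big] = N_{t-1} \le (t-1)K$ and hence $\E\|(\K+\varsigma^2 I)^{-1/2}(\y-\bmu)\|_2 \le \sqrt{(t-1)K}$ by Jensen; this is uniform in $\a_t$, so it survives the sum over $a \in \a_t$. Collecting the factors ($K$ arms, the $\sqrt{(t-1)K}$ from $\|\k(a)-\k([a]_{\D_t})\|_2$, the $\sqrt{(t-1)K}$ from the expectation, and $\varsigma^{-1}$) yields a bound of the form $KLdC_1\tau_t^{-1}\big(1 + (t-1)K\varsigma^{-1}\big)$, which \cref{ass:DiscEq1} is precisely designed to push below $\tfrac{1}{2t^2}$.

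Adding the two parts gives $\E[U_t([\a_t]_{\D_t}) - U_t(\a_t)] \le 1/t^2$ for each $t$, and summing over $t \in [T]$ yields $\pi^2/6$. The main obstacle is the mean part: obtaining a bound with only $\varsigma^{-1}$ rather than the naive $\varsigma^{-2}$ (which is what makes \cref{ass:DiscEq1,ass:DiscEq3} take the stated form) requires the split $\|(\K+\varsigma^2 I)^{-1}v\|_2 \le \varsigma^{-1}\|(\K+\varsigma^2 I)^{-1/2}v\|_2$ combined with $\|(\K+\varsigma^2 I)^{-1/2}\k(a)\|_2 \le 1$; and controlling $\E\big[(\y-\bmu)^\top(\K+\varsigma^2 I)^{-1}(\y-\bmu)\big]$ under adaptive sampling needs a filtration/tower argument (the standardized innovations being conditionally $\N(0,1)$) in place of the fixed-design $\chi^2$ identity.
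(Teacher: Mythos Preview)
Your plan matches the paper's proof: the lemma is proved there by combining \cref{lem:posteriorMeanDiscretizationError} and \cref{lem:posteriorStdDiscretizationError}, which bound the mean and standard-deviation contributions by $\pi^2/12$ each via exactly the Lipschitz estimates, the square-root trick $\|(\K+\varsigma^2 I)^{-1}v\|_2 \le \varsigma^{-1}\|(\K+\varsigma^2 I)^{-1/2}v\|_2$ (the paper phrases it through the Cholesky factor $\Lb$), and the $\chi^2$ moment plus Jensen that you describe. The one notable difference is in the standard-deviation part: the paper bounds $|\sigma_{t-1}^2(a')-\sigma_{t-1}^2(a)|$ directly and then takes a square root, which produces an extra $N_{t-1}L^2(dC_1/\tau_t)^2\varsigma^{-2}$ term and is precisely what \cref{ass:DiscEq3} is calibrated for; your RKHS/reverse-triangle bound $|\sigma_{t-1}(a)-\sigma_{t-1}(a')|\le\sqrt{k(a,a)-2k(a,a')+k(a',a')}\le\sqrt{2L\|a-a'\|_1}$ is more elementary and already suffices using only \cref{ass:DiscEq2}, so your ``complementary'' estimate (b) is not actually needed. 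Your flag about the adaptive-sampling subtlety in the $\chi^2$ identity is well placed; the paper simply asserts $\|\Lb^{-1}(\y-\bmu)\|_2^2\sim\chi^2$ with $N_{t-1}$ degrees of freedom without spelling out the innovations/tower argument you sketch.
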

To bound the difference in posterior mean, $\mu_{t-1}([\a ]_{\D_t}) - \mu_{t-1}(\a)$, we Cholesky decompose $\K + \varsigma^2 I = \mathbf{L} \mathbf{L}^\top$ and note that $|| \mathbf{L}^{-1}(\y - \bmu) ||_2$ has a chi distribution with at most $TK$ degrees of freedom. The difference in posterior standard deviation, $\sigma_{t-1}([\a]_{\D_t}) - \sigma_{t-1}(\a)$, is bounded by using that $k$ is Lipschitz, \cref{ass:DiscretizationSize} and other smaller steps.  

Next, we present our regret bounds for GP-UCB, GP-BUCB and GP-TS in the infinite setting:

\begin{restatable}[Infinite regret bounds]{theorem}{thmInfRegBounds} \label{thm:InfRegretBounds}
    If \cref{ass:Prior} holds and $\tau_t$ satisfies \cref{ass:DiscretizationSize}, then the Bayesian regret of
    \begin{enumerate}[label=(\roman*)]
        \item GP-UCB with $\beta_t = 2 \log (\tau_t^d t^2 / \sqrt{2 \pi} )$ is bounded as $\BR(T) \leq \frac{\pi^2}{2} + \sqrt{ C_K T K \beta_T \gamma_{TK} }.$
        \item GP-BUCB with $\beta_t = 2 \left( \erf^{-1}(1 - 2 \eta_t) \right)^2$ for $\eta_t = (2\pi)^{\omega/2} / \left( 2 \tau_t^{d\omega} t^{\xi} \right)$, $\xi > \omega > 1$ is bounded as 
        $
        \BR(T) \leq \frac{\pi^2}{3} + 
        \sqrt{ C_K T K \beta_T \gamma_{TK} } 
        + C_\omega \cdot
            \frac{\xi}{\xi - \omega} 
        $
        where $C_\omega = \big( \sqrt{\pi} \omega / \sqrt{2e (\omega - 1)} \big)^{1/\omega}$.
        \item GP-TS is bounded as $\BR(T) \leq \frac{2\pi^2}{3} + 2\sqrt{ C_K T K \beta_T \gamma_{TK} }.$
    \end{enumerate}
\end{restatable}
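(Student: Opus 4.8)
The plan is to reduce each of the three bounds to its finite-arm counterpart in \cref{thm:FinRegretBounds} by running the finite analysis over the time-varying discretization $\D_t$ (which has only $|\D_t| = \tau_t^d$ points) rather than over the infinite set $\A$, and then to absorb the discretization errors into two additive constants of size $\pi^2/6$. Writing $[\a_t^*]_{\D_t}$ for the snapped optimal super arm and $U_t(\a) = \mu_{t-1}(\a) + \sqrt{\beta_t}\,\sigma_{t-1}(\a)$ (with the GP-BUCB value of $\beta_t$ coming from the quantile identity in \cref{sec:GPsAndBandits}), for GP-UCB and GP-BUCB I would start from the telescoping decomposition
\begin{align}
    f(\a_t^*) - f(\a_t) &= \underbrace{\big[f(\a_t^*) - f([\a_t^*]_{\D_t})\big]}_{\text{(I)}} + \underbrace{\big[f([\a_t^*]_{\D_t}) - U_t([\a_t^*]_{\D_t})\big]}_{\text{(II)}} \nonumber \\
    &\quad + \underbrace{\big[U_t([\a_t^*]_{\D_t}) - U_t(\a_t^*)\big]}_{\text{(III)}} + \big[U_t(\a_t^*) - U_t(\a_t)\big] + \underbrace{\big[U_t(\a_t) - f(\a_t)\big]}_{\text{(IV)}} . \nonumber
\end{align}

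Then I would bound each piece. Term (I) is the discretization error of a GP sample path: under \cref{ass:Prior}, \cref{eq:sampleDerivativeTailBound} makes $f$ Lipschitz with an exponentially concentrated random constant, and since snapping moves each coordinate by at most $C_1/\tau_t$ and a super arm has at most $K$ base arms, \cref{ass:DiscEq4} is exactly the condition forcing $\E[\text{(I)}] \le t^{-2}$, hence $\sumt \E[\text{(I)}] \le \pi^2/6$. Term (II) is the ``left term'' of the finite analysis, but over the finite set $\D_t$: invoking \cref{lem:fMinusUStarFiniteUpperBound} for GP-UCB, or \cref{lem:fMinusUStarBayesUCB} for GP-BUCB, with $|\A|$ replaced by $\tau_t^d$, gives $\sumt\E[\text{(II)}] \le \pi^2/6$ and $\sumt\E[\text{(II)}] \le C_\omega\,\xi/(\xi-\omega)$ respectively --- this is precisely why the theorem prescribes $\beta_t = 2\log(\tau_t^d t^2/\sqrt{2\pi})$ and $\eta_t = (2\pi)^{\omega/2}/(2\tau_t^{d\omega}t^\xi)$. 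Term (III) is $\sumt\E[U_t([\a_t^*]_{\D_t}) - U_t(\a_t^*)]$, i.e.\ \cref{lem:posteriorUDiscretizationError} applied to the feasible sequence $\a_t^*$, so $\le \pi^2/6$; this is where \cref{ass:DiscEq1,ass:DiscEq2,ass:DiscEq3} are used. The fourth term $U_t(\a_t^*) - U_t(\a_t)$ is $\le 0$ since $\a_t^* \in \S_t$ and $\a_t$ maximizes $U_t$ over $\S_t$. Term (IV) is the ``right term'': $\a_t$ is $H_t$-measurable, so $\E[\mu_{t-1}(\a_t) - f(\a_t)] = 0$, and $\sumt\E[\text{(IV)}] = \sumt\sqrt{\beta_t}\,\E[\sigma_{t-1}(\a_t)] \le \sqrt{\beta_T}\,\sumt\E[\sigma_{t-1}(\a_t)]$, which a Cauchy--Schwarz step over the sampled base arms together with \cref{lem:infogain} bounds by $\sqrt{C_K T K \beta_T \gamma_{TK}}$. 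Summing gives $\BR(T) \le \pi^2/2 + \sqrt{C_K T K \beta_T \gamma_{TK}}$ for GP-UCB and $\BR(T) \le \pi^2/3 + \sqrt{C_K T K \beta_T \gamma_{TK}} + C_\omega\,\xi/(\xi-\omega)$ for GP-BUCB.

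For GP-TS the only structural change is that $\a_t$ does not maximize $U_t$, so the inequality ``$U_t(\a_t^*) - U_t(\a_t) \le 0$'' must be replaced by the posterior-matching identity $\E[g_t(\a_t^*)] = \E[g_t(\a_t)]$, valid for any $H_t$-measurable map $g_t$. Concretely I would split $f(\a_t^*) - f(\a_t) = \big[f(\a_t^*) - f([\a_t^*]_{\D_t})\big] + \big[f([\a_t^*]_{\D_t}) - f(\a_t)\big]$, bound the first bracket by term (I) as above, and on the second bracket run the finite GP-TS argument of \cref{thm:FinRegretBounds} with $|\A| \mapsto \tau_t^d$, the one extra ingredient being a single application of \cref{lem:posteriorUDiscretizationError} to the \emph{played} sequence $\a_t$ in order to move between $U_t([\a_t]_{\D_t})$ and $U_t(\a_t)$, which contributes another $\pi^2/6$. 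Adding $\pi^2/6 + \pi^2/6$ to the finite GP-TS bound $\pi^2/3 + 2\sqrt{C_K T K \beta_T \gamma_{TK}}$ yields $\BR(T) \le 2\pi^2/3 + 2\sqrt{C_K T K \beta_T \gamma_{TK}}$.

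I expect the main obstacle to be the interaction of volatility with discretization: unlike the static setting of \citet{takeno_randomized_2023}, the snapped optimum $[\a_t^*]_{\D_t}$ need not belong to $\S_t$, so one cannot cancel it against $U_t(\a_t)$ using optimality of $\a_t$; the decomposition must instead be routed through the genuinely feasible $\a_t^*$, and the price is a uniform control of $U_t([\a]_{\D_t}) - U_t(\a)$ over all super arms $\a$, which is exactly \cref{lem:posteriorUDiscretizationError} and is what forces the strengthened schedule \cref{ass:DiscEq1,ass:DiscEq2,ass:DiscEq3}. The remaining bookkeeping --- verifying that a $\tau_t$ meeting \cref{ass:DiscretizationSize} can be chosen polynomially in $t$ so that $\beta_t = \O(\log t)$ and the $\sqrt{\gamma_{TK}}$ rate is preserved, and tracking constants --- is routine.
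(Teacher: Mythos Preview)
Your proposal is correct and follows essentially the same route as the paper: the same five-term telescoping decomposition through $[\a_t^*]_{\D_t}$ and $\a_t^*$ for GP-UCB/GP-BUCB (with terms (I)--(IV) handled exactly as the paper handles its terms (1)--(5) via \cref{lem:fDiscretizationError}, \cref{lem:fMinusUStarFiniteUpperBound}/\cref{lem:fMinusUStarBayesUCB}, \cref{lem:posteriorUDiscretizationError}, optimality of $\a_t$, and \cref{lem:UMinusfUCB}+\cref{lem:expectedPosteriorStd}), and for GP-TS the same use of posterior matching $\a_t^* \mid H_t \overset{d}{=} \a_t \mid H_t$ to replace $U_t([\a_t^*]_{\D_t})$ by $U_t([\a_t]_{\D_t})$ before applying \cref{lem:posteriorUDiscretizationError}. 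Your identification of the key obstacle --- that $[\a_t^*]_{\D_t}$ need not lie in $\S_t$, forcing the route through the feasible $\a_t^*$ and hence \cref{lem:posteriorUDiscretizationError} --- is exactly the point the paper emphasizes.
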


The proofs are presented in \cref{app:infiniteproofs}. Similar to \citet{takeno_randomized_2023}, the regret is decomposed into multiple terms which are either bounded by the finite case or by using results such as \cref{lem:posteriorUDiscretizationError}. Because of the stochastic arm selection, the regret for GP-TS must be decomposed into more terms compared to GP-UCB, which increases the constants in the bound. As in the finite case, we get that $\BR(T) = \O(\sqrt{\lambda^*_K TK \beta_T \gamma_{TK} })$ for all three algorithms which matches the non-combinatorial result of \citet{takeno_randomized_2023} for $K=1$.

\section{Experiments} \label{sec:experiments}
In this section, we consider the important real-world application of online energy-efficient navigation for electric vehicles and formulate it as a combinatorial and contextual bandit problem.
Previous work by \citet{akerblom_online_2023} introduced a framework based on Bayesian inference to address the online navigation problem when no contextual information is available. Bayesian combinatorial bandits allow us to combine imperfect initial estimates with exploration to find efficient routes. 
In this work, we extend the framework to incorporate contextual information, enabling us to make use of correlations for even faster learning.   

\subsection{Bandit formulation of online energy efficient navigation problem}
\paragraph{The online energy-efficient navigation problem}\label{sec:OnlineEENavigation}
Consider a directed graph $\G(\V, \Ec)$ where the vertices $\V$ denote intersections of road segments and the edges $e = (u_1, u_2) \in \Ec$ denote the road segment from intersection $u_1$ to intersection $u_2$. Additionally, let $\L(\G) = \G( \Ec, \C_t )$ denote the directed line graph of $\G$ where the set of connections $\C_t \subseteq \left\{ (e_1, e_2) | e_1 = (u, v) \in \Ec, e_2 = (v, w) \in \Ec \right\}$ determine which turns are legal in the road network at time $t$. 
Assume that we are given a start vertex $u_1 \in \V$ and a goal vertex $u_n \in \V$. Let $\P_t$ denote the set of simple feasible paths from $u_1$ to $u_n$ at time $t$. A path $\p = \langle u_1, u_2, \ldots, u_n \rangle$ is legal if all the connections are legal, and $\p$ is simple if every vertex is visited at most once. 
At each time step $t$, we observe the set of available paths $\P_t$ and a context vector $x_{t, e} \in \R^{d}$ for each edge $e \in \Ec$. The context $x_{t,e}$ can include static features, such as the length of the road segment, and time-varying features, such as the congestion level. Based on the available connections and the context vector, we select a path $\p_t \in \P_t$ and observe the energy consumption associated with each edge in the path (negated reward): $R_t = \sum_{e \in \p_t} r_{t, e}$. The goal of online energy-efficient navigation is to minimize the total energy consumed over a horizon $T$. 
Note that the base arm set $\A_t$ corresponds to all edge-context tuples $(e, x_{t,e})$ and that the base arm space is defined as $\A = \Ec \times \Xc$ where $\Xc \subseteq [0, C_1]^d$ is a compact and convex set for some $C_1 > 0$. The super arm set $\S_t$ corresponds to sequences of edge-context tuples that form paths in $\P_t$.

\paragraph{Shortest paths with rectified Gaussians} \label{sec:TS-UCB-RectifiedGaussians}
Using {regenerative braking}, the energy consumption of an electric vehicle can be negative along individual road segments which presents challenges when we wish to find the most energy-efficient path. The most common shortest path algorithm, Dijkstra's algorithm \citep{w_note_1959}, does not permit negative edge weights. Whilst alternative shortest path algorithms, such as Bellman-Ford \citep{shimbel_structure_1954,bellman_routing_1958,ford_network_1956}, allow negative edge weights, they are significantly slower and do not return a path if the graph has a reachable negative cycle. To avoid the complexity associated with negative weights, we use the rectified normal distribution to get non-negative energy consumption estimates $U_{t,e}$ as input for Dijkstra's algorithm. 
\begin{wrapfigure}{r}{0.525\linewidth}
    \begin{minipage}{\linewidth}
    \begin{algorithm}[H]
        \caption{Compute Rectified Indices} \label{alg:RectifiedAlgos}
        \begin{algorithmic}[1]
            \INPUT \textsc{GetBaseArmIndices}$(t, \A_t, \btheta_{t-1}=(\bmu_{t-1}, \bsigma_{t-1}, \bvarsigma_{t-1}))$
                \FOR{each edge $e \in \A_t$ }
                    \STATE $\tilde{\mu}_e \gets \mu_{t-1,e} - \sqrt{\beta_t} \sigma_{t-1,e}$ \COMMENT{UCB} \addtocounter{ALC@line}{-1}
                    \STATE $\tilde{\mu}_e \gets Q(\frac{1}{t}, \N(\mu_{t-1,e}, \sigma^2_{t-1,e}))$ \COMMENT{BUCB} \addtocounter{ALC@line}{-1}
                    \STATE $\tilde{\mu}_e \gets$ Sample from $\N(\mu_{t-1,e}, \sigma^2_{t-1,e})$ \COMMENT{TS}
                    \STATE $U_{t, e} \gets \Eb[z_e]$ where $z_e \sim \N^R(\tilde{\mu}_e, \varsigma^2_{t-1,e})$ 
                \ENDFOR
                \STATE \textbf{return} $\U_t$
        \end{algorithmic}
    \end{algorithm}
    \end{minipage}
\end{wrapfigure}
Upper confidence bound methods output optimistic estimates $\tilde{\mu}_e \in \R$ whereas Thompson sampling outputs posterior estimates $\tilde{\mu}_e \in \R$ by sampling from the posterior $\N(\mu_{t-1,e}, \sigma^2_{t-1,e})$. To ensure non-negative weights, the edge weight $U_{t,e}$ is set to $\Eb[z_e]$ where $z_e$ is distributed as the rectified Gaussian $\N^R(\tilde{\mu}_e, \sigma^2_{t-1,e})$.
A random variable $Y = \max(0, X)$ is said to have a rectified Gaussian distribution $\N^R(\mu, \sigma^2)$ if $X \sim \N(\mu, \sigma^2)$. 
In \cref{alg:RectifiedAlgos}, we show how to integrate UCB, BUCB and Thompson sampling with rectification within the framework of \cref{alg:combSemiBandit}. 
The notation $\mu_{t-1, e}$ and $\sigma^2_{t-1,e}$ refer respectively to the posterior mean and variance of the expected energy consumption for edge $e$ whilst $\varsigma^2_{t-1,e}$ refers to the variance of the noise.
Since the number of edges $|\Ec|$ may be large, each edge is sampled independently in TS, as by \citet{nuara_combinatorial-bandit_2018}. 
Note that \cref{alg:RectifiedAlgos} decouples the probabilistic regression model and Thompson sampling. In the next sections, we describe two probabilistic regression models for energy-efficient navigation.

\begin{wrapfigure}{R}{0.5\linewidth}
    \begin{minipage}{\linewidth}
    \begin{algorithm}[H]
        \caption{SVGP Optimization Procedure} \label{alg:SVGPOptimization}
        \begin{algorithmic}[1]
            \INPUT \textsc{UpdateParameters}$(\a_{t}, \boldr_{t}, \btheta_{t-1})$
                \STATE Add $\a_t, \boldr_t$ to history.
                \STATE Set inducing points $\Z_t$ to top $M$ most visited edges.
                \FOR{$i \in \{1, \ldots, G\}$}
                    \STATE $\tilde{\a} ,\tilde{\boldr}$ $\gets$ Subsample batch of size $B$ from history.
                    \STATE Compute batch ELBO. 
                    \STATE Optimize variational parameters with NGD.
                \ENDFOR
                \STATE Compute $\bmu_{t}, \bsigma_{t}, \bvarsigma_t$ using the optimized GP.
                \STATE \textbf{return} $\bmu_{t}, \bsigma_{t}, \bvarsigma_t$ and the optimized variational parameters.
        \end{algorithmic}
    \end{algorithm}
    \end{minipage}
\end{wrapfigure}
\paragraph{GP regression for energy-efficient navigation} \label{sec:GPRegression}
To our knowledge, this study is the first combinatorial Gaussian process bandit solution for online energy-efficient navigation. 
The energy consumption depends on both the structure of the graph and the provided context.  
We use the graph Matérn kernel $k_G: \Ec \times \Ec \rightarrow \R $ from \citet{borovitskiy_matern_2021} to encode the structure of the line graph $\L(\G)$ into the GP and an ordinary $5/2$-Matérn kernel $k_f: \Xc \times \Xc \rightarrow \R$ to encode the dependence on the context. The two kernels are combined $k_{G \cdot f + f} = k_G \cdot k_f + k_f'$
where the two feature kernels $k_f$ and $k_f'$ use separate sets of lengthscale and outputscale parameters. 
The cubic cost of exact GPs prohibits their application to large datasets. The sparse variational Gaussian processes (SVGP) \citep{titsias_variational_2009,hensman_gaussian_2013} approximate the posterior distribution using a set of inducing points $\Z_t = \{ z_1, \ldots, z_{M} \}$ where $z_i \in \A$ and $M$ is significantly smaller than the number of datapoints. By defining a prior distribution $q(\u_t) = \N(\m_t, \mathbf{S}_t)$ for the inducing variables $\u_t$, an approximate GP posterior can be obtained such that the complexity to perform $N$ predictions is $\O(M^2 N)$, i.e. linear w.r.t. $N$. 
The variational paramaters $(\m_t, \mathbf{S}_t)$ are optimized by minimizing the evidence lower bound (ELBO) by performing $G$ stochastic (natural) gradient descent steps using batch size $B$.
Since the inducing points $z_i$ lie in a mixed discrete and continuous space ($\A = \Ec \times \Xc$ for $\Xc \subset [0, C_1]^d$), we heuristically set $z_i$ equal to the edge-context tuple of the $i$-th most visited edge at the start of the SVGP optimization. Then, the continuous dimensions of $z_i$ are optimized together with $(\m_t, \mathbf{S}_t)$ using natural gradient descent (NGD) \citep{salimbeni_natural_2018}. 
The procedure is described in \cref{alg:SVGPOptimization}. Further details of the kernels and parameter values are provided in \cref{app:kernelDetails,app:parameterValues}.

\paragraph{Bayesian inference for energy-efficient navigation} \label{sec:bayesInferenceEnergy}
\citet{akerblom_online_2023} introduced a framework for energy-efficient navigation using Bayesian inference to learn the distribution of the energy consumption in each road segment.
The key assumption is that the energy consumption of an electric vehicle driving along a road segment is stochastic and follows a Gaussian distribution with unknown mean and known variance. Additionally, it is assumed that the energy consumption along different edges is independent. Using a Gaussian prior, the posterior distribution for edge $e$ is computed using standard conjugate update rules.

\begin{wrapfigure}{r}{0.5\linewidth}
\begin{minipage}{\linewidth}
\begin{figure}[H]
    \centering
    \begin{subfigure}[c]{0.5\linewidth}
        \includegraphics[width=\linewidth]{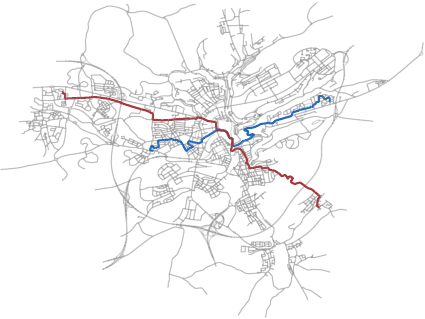}
    \end{subfigure}%
    \begin{subfigure}[c]{0.5\linewidth}
        \includegraphics[width=\linewidth]{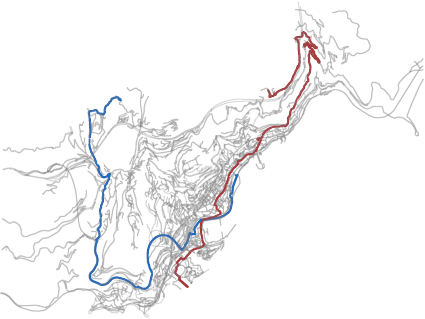}
    \end{subfigure}
    \caption{Road networks of Luxembourg (left) and Monaco (right) with evaluation routes A and B highlighted in blue and red.}
    \label{fig:networks}
\end{figure}
\end{minipage}
\end{wrapfigure}

\paragraph{Real-world road networks} In our experiments we use the road networks of Luxembourg and Monaco (\citeauthor{codeca_luxembourg_2017}, \citeyear{codeca_luxembourg_2017}; \citeauthor{codeca_monaco_2018}, \citeyear{codeca_monaco_2018}, based on data by \citeauthor{OpenStreetMap}, \citeyear{OpenStreetMap}). Elevation data \citep{administration_de_la_navigation_aerienne_digital_nodate} is added to the network using QGIS \nocite{qgis_development_team_qgis_2023} and the {\it netconvert} tool from SUMO. In \cref{fig:networks}, the two road networks are visualized along with two evaluation routes (A and B) per network. The evaluation routes span multiple regions of the network, allowing for many alternative paths. The context for each road segment consists of three fixed scalar properties: the length, the speed limit and the incline. Each property is standardized to have unit-variance. The prior expected energy consumption is computed by a deterministic model that assumes that the vehicle drives along an edge $e \in \Ec$, with length $\ell_e$ and inclination $\alpha_e$, at constant speed $v_e$. The expended energy is then
\begin{equation}
    \Edet_e := 
    ( m g \ell_e \sin(\alpha_e) + m g C_r \ell_e \cos(\alpha_e) + 0.5 C_d A \rho \ell_e v_e^2 ) /  3600 \eta
    .\label{eq:PriorEnergy} 
\end{equation}
The deterministic energy consumption $\Edet_e$ in \cref{eq:PriorEnergy} is given in Watt-hours and depends on the following vehicle-specific parameters: mass $m$, rolling resistance $C_r$, front surface area $A$, air drag coefficient $C_d$ and powertrain efficiency $\eta$. The gravitational acceleration $g$ and air density $\rho$ also determine $\Edet_e$. The parameter values are specified in \cref{tab:vehicleParameters} in \cref{app:parameterValues}. Let $\overline{\Edet} = \frac{1}{|\Ec|} \sum_{e \in \Ec} \Edet_e$ and $\sigmadet^2 = { \frac{1}{|\Ec|} \sum_{e \in \Ec} (\Edet_e - \overline{\Edet})^2 }$ denote the mean and variance of the deterministic energy consumption. The expected energy consumption is sampled from $\GP(\Edet, k_{G \cdot f + f})$ where the outputscale of $k_{G \cdot f + f}$ (i.e. the variance $\bsigma^2_0$) is set to $0.25^2 \sigmadet^2$. The noise variance $\bvarsigma^2$ is set to $0.1^2 \sigmadet^2$ for all edges and the kernel lengthscales are set to $1$. See \cref{app:experimentdetails} for further details.

\subsection{Results} \label{sec:results}
Here, we demonstrate our experimental studies in different settings. We begin by comparing GP algorithms to Bayesian inference methods, then we compare the parametrizations of GP-UCB and GP-BUCB. Finally, we study the impact of the kernel lengthscale. Visualizations of the exploration are provided in \cref{app:exploration}.
\paragraph{Investigation of different bandit algorithms} In our first experiment, we compare the three algorithms GP-UCB, GP-BUCB and GP-TS. We use the Bayesian inference (BI) method of \cite{akerblom_online_2023} with UCB, BUCB and TS as baselines. For UCB and BUCB (GP and BI), we use the $\beta_t$ parametrization given by \cref{thm:FinRegretBounds} with $\omega = 1$, $\xi = 1$. The six methods are evaluated 5 times each on the four routes in the Luxembourg and Monaco networks with a horizon of $T = 500$.
The cumulative regret is shown in \cref{fig:AllComparison}. The results show that the TS-based methods have significantly lower regret than both UCB and BUCB. Similarly, the GP-based methods generally have lower regret than their BI-based counterparts. Thereby, GP-TS yields the best results in terms of minimizing cumulative regret. Finally, we observe that GP-BUCB has lower regret than GP-UCB. In the next experiment, we investigate how the parametrization of these two algoritms affects the results.

\begin{figure}
    \centering
    \includegraphics[width=0.95\linewidth]{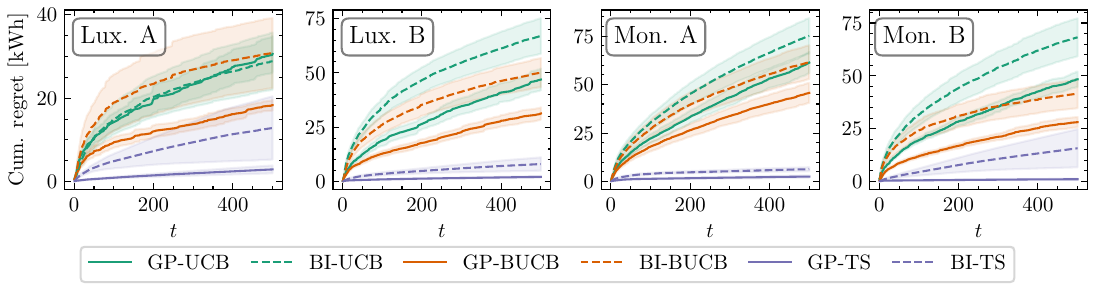}
    \caption{Cumulative regret for UCB, BUCB and TS using GP and Bayesian inference (BI) methods. The lines and regions correspond to the mean and $\pm 1$ standard error.}
    \label{fig:AllComparison}
\end{figure}
\paragraph{BUCB parametrization} As discussed in \cref{sec:GPsAndBandits}, GP-UCB and GP-BUCB differ mainly in their parametrization of the confidence parameter $\beta_t$. The confidence parameter determines the balance between exploration and exploitation. It is known that theoretical results tend to provide $\beta_t$ values that overexplore \citep{russo_learning_2014}. Using the parameters of $\beta_t$ for GP-BUCB ($\omega$ and $\xi$), we can tune GP-BUCB towards more exploitation whilst retaining theoretical guarantees. We compare two theoretically valid choices of parametrizations for GP-BUCB ($\omega = 1$, $\xi = 1$ and $\omega = 1, \xi = 0.5$) against two parametrizations of GP-UCB where the first is theoretically valid and the second has scaled $\beta_t$ by $0.5$. The four parametrizations are evaluated 5 times each on the four routes in the Luxembourg and Monaco networks with a horizon of $T = 500$. The cumulative regret and the $\beta_t$ values are shown in \cref{fig:RoutesAndUCBComparison}. The theoretically valid $\beta_t$ values for GP-BUCB are smaller than for GP-UCB. By lowering $\xi$ from 1 to 0.5, the quantile parameter $\eta_t$ goes from $\O(t^{-1})$ to $\O(t^{-0.5})$ and using $\omega = 1$ the theoretial cumulative regret remains $\O(\sqrt{T})$.\footnote{Technically, one must use $\xi \leq 0.5 - \delta$ and $\omega \geq 1 + \delta$ for some $\delta > 0$. However, we could choose $\delta$ to be small enough such that GP-BUCB would select the exact same routes in all experiments.} The experimental results indicate that the parametrization with lower $\beta_t$ generally has lower cumulative regret. Using GP-BUCB, we gain more control of $\beta_t$ without sacrificing the theoretical guarantees.  

\begin{figure}
    \centering
    \includegraphics[width=0.95\linewidth]{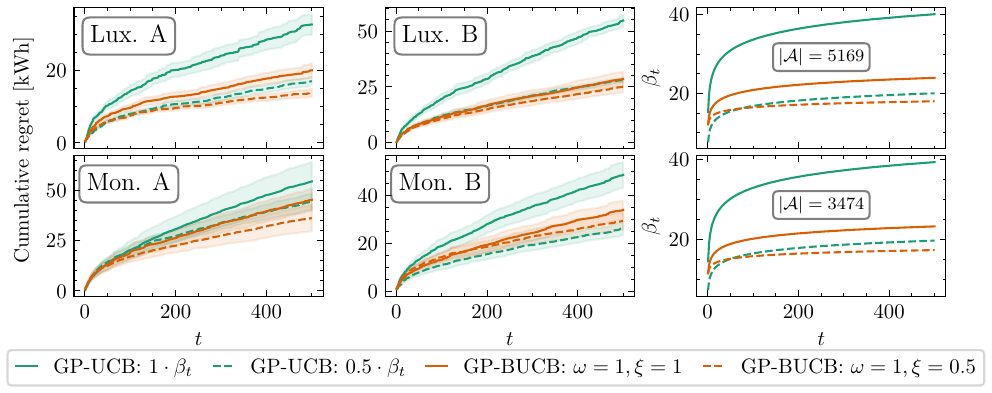}
    \caption{Cumulative regret of GP-UCB and GP-BUCB (left and middle column) for different parametrizations of $\beta_t$ (right column).}
    \label{fig:RoutesAndUCBComparison}
\end{figure}

\paragraph{Impact of lengthscale}
Finally, we investigate varying the kernel lengthscale to ensure our results are consistent and stable. A large lengthscale increases the correlation between edges, which should lower the regret of the GP-methods. Whilst a lower lengthscale decreases the correlation which should increase the regret of the GP-methods. We evaluate GP-BUCB and GP-TS against BI-BUCB and BI-TS with the kernel lengthscale varying between $0.1$ and $2.0$. Each combination of lengthscale and bandit-method is evaluated 5 times on all four routes with a horizon of $T = 500$. The final cumulative regret at $t = 500$ for the different lengthscales is shown in {\cref{fig:Lengthscale,fig:LengthscaleFinal} in \cref{app:lengthscaleExp}}. For GP-based methods, increasing the lengthscale increases the cumulative regret overall but for BI-based methods, there is no discernable pattern.

\section{Conclusion}
We presented novel Bayesian regret bounds for the combinatorial volatile Gaussian process semi-bandit for three GP-based bandit algorithms: GP-UCB, GP-BayesUCB and GP-TS. Additionally, we experimentally evaluated our contextual combinatorial GP method on the online energy-efficient navigation problem on real-world networks.

\subsubsection*{Acknowledgments}
The work of Jack Sandberg and Morteza Haghir Chehreghani was partially supported by the Wallenberg AI, Autonomous Systems and Software Program (WASP) funded by the Knut and Alice Wallenberg Foundation. The work of Niklas {\AA}kerblom was partially funded by the Strategic Vehicle Research and Innovation Programme (FFI) of Sweden, through the project EENE (reference number: 2018-01937). The computations were enabled by resources provided by the National Academic Infrastructure for Supercomputing in Sweden (NAISS), partially funded by the Swedish Research Council through grant agreement no. 2022-06725.

\bibliography{iclr2025_conference}
\bibliographystyle{iclr2025_conference}

\newpage
\appendix
\section{Proofs}
\subsection{Finite case} \label{app:finiteproofs}
In this section, we state and prove the regret bounds in the finite case for the three bandit algorithms GP-UCB, GP-BUCB and GP-TS. To begin, we establish lemmas that demonstrate the general procedure for the proofs and later we combine the lemmas to get the desired regret bounds. 

In the following lemma, the Bayesian regret is separated into two terms.

\begin{restatable}{lemma}{lemBRDecomposition} \label{lem:BRDecomposition}
    For GP-TS or any GP-UCB method the following upper bound on the Bayesian regret holds (with equality for GP-TS):
    \begin{equation}
        \BR(T) \leq \sumt \E[ f(\a_t^*) - U_t(\a^*_t)]
        + \E[ U_t(\a_t) - f(\a_t) ]. 
    \end{equation} 
\end{restatable}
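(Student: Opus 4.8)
`\textbf{Proof plan.}` The plan is to establish the decomposition by first handling the UCB-type algorithms (GP-UCB and GP-BUCB) and then GP-TS separately, since the former give an inequality while the latter gives equality. The common starting point is to write the per-round Bayesian regret as $\E[f(\a_t^*) - f(\a_t)]$ and insert the quantity $U_t(\a_t)$ via a telescoping identity:
\begin{equation}
    f(\a_t^*) - f(\a_t) = \bigl( f(\a_t^*) - U_t(\a_t) \bigr) + \bigl( U_t(\a_t) - f(\a_t) \bigr). \nonumber
\end{equation}
Summing over $t \in [T]$ and taking expectations, it then suffices to show $\E[f(\a_t^*) - U_t(\a_t)] \le \E[f(\a_t^*) - U_t(\a_t^*)]$, i.e. $\E[U_t(\a_t^*) - U_t(\a_t)] \le 0$, with equality (in distribution of the relevant quantities) in the TS case.

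For the UCB-type algorithms the key observation is that, by construction, $\a_t = \argmax_{\a \in \S_t} U_t(\a)$, so $U_t(\a_t) \ge U_t(\a_t^*)$ deterministically since $\a_t^* \in \S_t$. This immediately gives $U_t(\a_t^*) - U_t(\a_t) \le 0$ pointwise, hence in expectation, yielding the claimed inequality. For GP-TS, I would use the standard posterior-sampling argument: conditioned on the history $H_t$, the selected arm $\a_t$ and the optimal arm $\a_t^*$ are identically distributed, because $\hat f_t$ is drawn from the posterior $\GP(\mu_{t-1}, k_{t-1})$, which is the conditional law of $f$ given $H_t$, and both $\a_t = \argmax_{\a \in \S_t} \hat f_t(\a)$ and $\a_t^* = \argmax_{\a \in \S_t} f(\a)$ are the same measurable function of (a sample from the posterior, $\S_t$). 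Since $U_t(\cdot)$ is itself a deterministic function of $H_t$ (it depends only on $\mu_{t-1}$, $\sigma_{t-1}$, $\beta_t$), it follows that $\E[U_t(\a_t^*) \mid H_t] = \E[U_t(\a_t) \mid H_t]$, hence $\E[U_t(\a_t^*) - U_t(\a_t)] = 0$ and the decomposition holds with equality.

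The main subtlety — and the step I would be most careful about — is the measurability/conditioning argument for GP-TS: one must check that $\S_t$ (and $\A_t$) are $H_t$-measurable (they are, by the definition of $H_t$ in \cref{sec:banditformulation}), that the posterior sample $\hat f_t$ is drawn independently of everything except through $H_t$, and that $U_t$ truly depends only on $H_t$ and not on $\a_t$ or $f$. Given these, the equal-in-distribution argument is clean. A minor point to state carefully is that for GP-BUCB the "$U_t$" in the lemma refers to the GP-UCB-form surrogate $U_t(\a) = \mu_{t-1}(\a) + \sqrt{\beta_t}\,\sigma_{t-1}(\a)$ with $\beta_t = 2(\erf^{-1}(1-2\eta_t))^2$, which—as noted in \cref{sec:GPsAndBandits}—coincides with the quantile-based selection rule, so that the $\argmax$ property $U_t(\a_t) \ge U_t(\a_t^*)$ indeed holds; this is why the lemma is phrased for "any GP-UCB method." Finally, I would invoke the tower property $\E[\cdot] = \E[\E[\cdot \mid H_t]]$ to pass from the conditional statements to the unconditional sum, completing the proof.
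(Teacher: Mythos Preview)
Your proposal is correct and follows essentially the same approach as the paper: for UCB-type methods you use the argmax property $U_t(\a_t) \ge U_t(\a_t^*)$ to drop the middle term, and for GP-TS you use that $\a_t \mid H_t \eqd \a_t^* \mid H_t$ together with the $H_t$-measurability of $U_t$ to get $\E[U_t(\a_t^*) \mid H_t] = \E[U_t(\a_t) \mid H_t]$. The paper's proof is exactly this (citing \citet{russo_learning_2014} and \citet{takeno_randomized_2023}), and your additional remarks on measurability and on GP-BUCB being a GP-UCB instance with $\beta_t = 2(\erf^{-1}(1-2\eta_t))^2$ are correct refinements.
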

\begin{proof}
    The proof follows the procedure of Prop. 1 by \citet{russo_learning_2014} for GP-TS and Thm. B.1. by \citet{takeno_randomized_2023} for GP-UCB. For GP-TS,
    \begin{align}
        \BR(T) &= \sumt \E \left[ f(\a_t^*) - f(\a_t) \right] \\
        &= \sumt \E_{H_t} \left[ \E_t \left[ 
            f(\a^*_t) - U_t(\a_t^*) + U_t(\a_t) - f(\a_t) | H_t 
        \right] \right]
        && \left( \a_t^* | H_t \overset{d}{=} \a_t | H_t \right) \\
        &= \sumt \E[f(\a^*_t) - U_t(\a_t^*)]
        + \sumt \E[U_t(\a_t) - f(\a_t)].
    \end{align}
    Similarly, for any GP-UCB method,
    \begin{align}
        \BR(T) &= \sumt \E \left[ f(\a_t^*) - f(\a_t) \right] \\
        &= \sumt \E \left[ 
            f(\a^*_t) - U_t(\a_t^*) + U_t(\a_t^*) - U_t(\a_t) + U_t(\a_t) - f(\a_t) 
        \right]\\
        &\leq \sumt \E \left[ 
            f(\a^*_t) - U_t(\a_t^*) + U_t(\a_t) - f(\a_t)
        \right]
    \end{align}
    where the final step uses that $U_t(\a_t^*) - U_t(\a_t) \leq 0$ since $\a_t = \argmax_{\a \in \S_t} U_t(\a)$.
\end{proof}

Whilst \cref{lem:BRDecomposition} applies to all the considered bandit algorithms, the two terms in the decomposition requires knowing the specific bandit algorithm. Bounding the left term requires knowledge of the confidence parameter $\beta_t$. Therefore we present a lemma that applies to GP-UCB and GP-TS, and another lemma that applies to GP-BUCB.

\begin{restatable}{lemma}{lemfMinusUStarFiniteUpperBound} \label{lem:fMinusUStarFiniteUpperBound}
    If $|\A| < \infty$, then
    \begin{equation}
        \sumt \E [f(\a_t^*) - U_t(\a_t^*)] \leq \frac{\pi^2}{6}
    \end{equation}
    holds for GP-UCB and GP-TS with $\beta_t = 2 \log \left( |\A| t^2 / \sqrt{2 \pi} \right)$.
\end{restatable}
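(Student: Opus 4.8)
The plan is to exploit the additive structure of $f(\cdot)$ and $U_t(\cdot)$ over base arms, so that it suffices to control one scalar Gaussian tail per base arm and then union bound over $\A$ (which is finite) rather than over the exponentially large super-arm set $\S_t$. Note that the same optimistic function $U_t(\a) = \mu_{t-1}(\a) + \sqrt{\beta_t}\,\sigma_{t-1}(\a)$ enters the analysis of both GP-UCB and GP-TS (for GP-TS it is an analysis device, as in \cref{lem:BRDecomposition}), so the argument is identical for the two algorithms. First I would reduce to base arms: writing $U_t(a) := \mu_{t-1}(a) + \sqrt{\beta_t}\,\sigma_{t-1}(a)$ we have $U_t(\a) = \sum_{a\in\a}U_t(a)$ and $f(\a) = \sum_{a\in\a}f(a)$, so since $\a_t^*\subseteq\A$,
\[
    f(\a_t^*) - U_t(\a_t^*) = \sum_{a\in\a_t^*}\big(f(a)-U_t(a)\big) \le \sum_{a\in\a_t^*}\big(f(a)-U_t(a)\big)^+ \le \sum_{a\in\A}\big(f(a)-U_t(a)\big)^+ ,
\]
and taking expectations, $\E[f(\a_t^*)-U_t(\a_t^*)] \le \sum_{a\in\A}\E[(f(a)-U_t(a))^+]$.

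Next I would bound each term. Fix $a\in\A$ and condition on $H_t$: then $U_t(a)$ is $H_t$-measurable, and (since the arm-availability process is exogenous of $f$) the GP posterior gives $f(a)\mid H_t \sim \N(\mu_{t-1}(a),\sigma_{t-1}^2(a))$. Hence, with $Z\sim\N(0,1)$,
\[
    \E\big[(f(a)-U_t(a))^+ \mid H_t\big] = \sigma_{t-1}(a)\,\E\big[(Z-\sqrt{\beta_t})^+\big] \le \sigma_{t-1}(a)\,\tfrac{1}{\sqrt{2\pi}}e^{-\beta_t/2},
\]
using the elementary identity $\E[(Z-c)^+] = \tfrac{1}{\sqrt{2\pi}}e^{-c^2/2} - c\,\Pb(Z>c) \le \tfrac{1}{\sqrt{2\pi}}e^{-c^2/2}$ for $c\ge0$. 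Since posterior variance never exceeds prior variance and $k(a,a)\le 1$, we have $\sigma_{t-1}(a)\le 1$; plugging in $\beta_t = 2\log(|\A|t^2/\sqrt{2\pi})$ gives $\E[(f(a)-U_t(a))^+]\le \tfrac{1}{|\A|t^2}$. Summing over the $|\A|$ base arms yields $\E[f(\a_t^*)-U_t(\a_t^*)] \le t^{-2}$, and summing over $t$ gives $\sumt \E[f(\a_t^*)-U_t(\a_t^*)] \le \sum_{t\ge1}t^{-2} = \pi^2/6$.

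The one real subtlety is in the first step: avoiding a union bound over super arms. A direct argument intersecting confidence events over all $\a\in\S_t$ would cost a factor $\log|\S_t| = \Theta(K\log|\A|)$ inside $\beta_t$; the additive decomposition sidesteps this, at the price of summing the per-arm error over $\A$, which is exactly why the prescribed $\beta_t$ depends on $\log|\A|$ and not on $\log|\S_t|$. A minor point to check is that $\beta_t\ge 0$, so that $\sqrt{\beta_t}$ and the tail bound $\E[(Z-c)^+]\le\varphi(c)$ are valid; this holds under the stated choice once $|\A|t^2\ge\sqrt{2\pi}$.
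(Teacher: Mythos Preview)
Your proof is correct and follows essentially the same approach as the paper: decompose $f(\a_t^*)-U_t(\a_t^*)$ additively over base arms, replace by positive parts, enlarge the sum from $\a_t^*$ to all of $\A$, condition on $H_t$ so that $f(a)-U_t(a)\mid H_t\sim\N(-\sqrt{\beta_t}\,\sigma_{t-1}(a),\sigma_{t-1}^2(a))$, apply the Gaussian tail bound $\E[(X)_+]\le\tfrac{\sigma}{\sqrt{2\pi}}e^{-\mu^2/(2\sigma^2)}$, use $\sigma_{t-1}(a)\le1$, and sum. Your extra remarks on avoiding a super-arm union bound and on $\beta_t\ge0$ are sound and compatible with the paper's argument.
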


\begin{proof}
    The proof closely follows the proof of Thm. B.1 by \citet{takeno_randomized_2023}. Let $R_1 = \sumt \E[ f(\a_t^*) - U_t(\a^*_t) ]$, then
    \begin{align}
    R_1 &= \sumt \E_{ H_t } \left[ \E_t \left[ 
        f(\aoptt) - U_t(\aoptt)
    | H_t \right] \right] \\
    &= \sumt \E_{ H_t } \left[ \E_t \left[ 
        \sum_{a \in \aoptt} f(a) - U_t(a)
    \Bigg| H_t \right] \right] \\
    &\leq \sumt \E_{ H_t} \left[ \E_t \left[ 
        \sum_{a \in \aoptt} \left( f(a) - U_t(a) \right)_+
    \Bigg| H_t \right] \right] 
    && \left( (x)_+ := \max(0, x) \geq x \right)\\
    &\leq \sumt \E_{ H_t} \left[ \sum_{a \in \A} \E_t \left[ 
        \left( f(a) - U_t(a) \right)_+
    \Bigg| H_t \right] \right]. 
    && \left( \aoptt \subseteq \A \right)
    \end{align}
    Note that $f(a) - U_t(a) | H_t \sim \N( - \sqrt{\beta_t} \sigma_{t-1} (a), \sigma^2_{t-1}(a) )$. As \citet{russo_learning_2014}, by using that if $X \sim \N(\mu, \sigma^2)$ for $\mu \leq 0$, then $\E [ (X)_+ ] \leq \frac{\sigma}{\sqrt{2 \pi}} \exp \left( \frac{- \mu^2}{2 \sigma^2 }\right)$, we get the following for $R_1$:
    \begin{align}
        R_1 &\leq \sumt \E_{ H_t} \left[ \sum_{a \in \A}  
        \E_t \left[ 
            \frac{\sigma_{t-1}(a)}{\sqrt{2 \pi}} \exp \left( \frac{-\beta_t}{2} \right)    
        \Bigg| H_t \right] \right] \\
        &\leq \sumt 
            \frac{|\A|}{\sqrt{2 \pi}} \exp \left( \frac{-\beta_t}{2} \right) 
        && \left( \sigma^2_{t-1}(a) \leq k(a,a) \leq 1 \right) \\
        &\leq \sumt \frac{1}{t^2}
        && \left( \beta_t = 2 \log (|\A| t^2 / \sqrt{2 \pi} \right) \\
        &\leq \frac{\pi^2}{6}. 
        && \left( \sum_{t = 1}^{\infty} \frac{1}{t^2} = \frac{\pi^2}{6} \right)
    \end{align}
\end{proof}

\lemfMinusUStarBayesUCB*

\begin{proof}
    Following the proof of \cref{lem:fMinusUStarFiniteUpperBound}, we get that 
    \begin{align}
        \sumt \E \left[ f(\a_t^*) - U_t(\a^*_t)\right] \leq \sumt \frac{|\A|}{\sqrt{2\pi}} \exp\left( - \frac{\beta_t}{2} \right).
    \end{align}
    Note that, according to \cref{lem:erfInvLB}, $\erf^{-1}(u) \geq \sqrt{-\omega^{-1} \log((1-u)/\vartheta})$ for $\omega > 1$ and $\vartheta = \sqrt{2e/\pi} \sqrt{\omega-1} / \omega$. We use the largest value of $\vartheta$ permitted by \cref{lem:erfInvLB} since it yields the tightest bound. 
    Then,
    \begin{align}
        \sumt \frac{|\A|}{\sqrt{2\pi}} \exp\left( - \frac{\beta_t}{2} \right) &= 
        \sumt \frac{|\A|}{\sqrt{2\pi}} \exp \left( - \left( \erf^{-1} (1 - 2 \eta_t) \right)^2 \right) \\
        &\leq \sumt \frac{|\A|}{\sqrt{2\pi}} \exp \left( \omega^{-1} \log \left( \frac{1 - (1 - 2\eta_t)}{\vartheta} \right) \right) 
        && \left( \text{\cref{lem:erfInvLB}} \right)\\
        &= \sumt \frac{|\A|}{\sqrt{2\pi}} \left( \frac{2\eta_t}{\vartheta}\right)^{\frac{1}{\omega}} \\
        &= \sumt \vartheta^{-\frac{1}{\omega}} t^{-\frac{\xi}{\omega}}
        && \left( \text{Def. of } \eta_t\right) \\
        &= \left( \frac{\sqrt{\pi}\omega}{\sqrt{2e (\omega - 1)}} \right)^{\frac{1}{\omega}} \sumt t^{-\frac{\xi}{\omega}}.
        && \left( \text{Def. of } \vartheta \right)
    \end{align}
    The behaviour of $\sumt t^{-\frac{\xi}{\omega}}$ critically depends on the ratio $\xi/\omega$. First, if $\xi / \omega < 1$, then 
    \begin{equation}
        \sumt t^{-\frac{\xi}{\omega}} \leq \int_{0}^T t^{-\frac{\xi}{\omega}} \textrm{d}t = T^{1 - \frac{\xi}{\omega}} \frac{1}{1 - \frac{\xi}{\omega}}.
    \end{equation}
    Second, if $\xi / \omega = 1$, then
    \begin{equation}
        \sumt t^{-1} \leq 1 + \int_1^T t^{-1}dt = 1 + \log T.
    \end{equation}
    Finally, if $\xi / \omega > 1$, then 
    \begin{equation}
        \sumt t^{-\frac{\xi}{\omega}} \leq 1 + \int_{1}^{\infty} t^{-\frac{\xi}{\omega}} \textrm{d}t = 1 + \left[ \frac{1}{1 - \frac{\xi}{\omega}} t^{1 - \frac{\xi}{\omega}}\right]_{1}^\infty = 1 - \frac{1}{1 - \frac{\xi}{\omega}} = \frac{\xi}{\xi - \omega}.
    \end{equation}
\end{proof}

Before we bound the right term in \cref{lem:BRDecomposition}, we introduce a lemma for the confidence radius that applies to all the bandit algorithms considered.

\begin{restatable}{lemma}{lemexpectedPosteriorStd} \label{lem:expectedPosteriorStd}
    \begin{equation}
        \sumt \E \left[ \sum_{a \in \a_t} \sqrt{\beta_t} \sigma_{t-1}(a) \right]
        \leq \sqrt{ 2 (\lambda^*_K + \varsigma^2) T K \beta_T  \gamma_{TK}}
    \end{equation}
    for GP-TS or any GP-UCB method with increasing confidence parameter $\beta_t$.
\end{restatable}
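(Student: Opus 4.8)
The plan is to reduce the statement to the combinatorial information-gain bound \cref{lem:infogain} by two elementary manipulations: peeling off $\sqrt{\beta_t}$ using monotonicity of the confidence parameter, and a Cauchy--Schwarz step that turns a sum of posterior standard deviations into the square root of a sum of posterior variances.

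First I would use that $\beta_t$ is increasing, so $\sqrt{\beta_t} \le \sqrt{\beta_T}$ for every $t \le T$, giving
$$\sumt \E\Big[\sum_{a \in \a_t}\sqrt{\beta_t}\,\sigma_{t-1}(a)\Big] \le \sqrt{\beta_T}\,\E\Big[\sumt \sum_{a \in \a_t}\sigma_{t-1}(a)\Big].$$
Next, setting $N_T = \sumt |\a_t|$ and noting $N_T \le TK$ since every super arm has size at most $K$, Cauchy--Schwarz over the $N_T$ index pairs $(t,a)$ with $a \in \a_t$ yields
$$\sumt\sum_{a\in\a_t}\sigma_{t-1}(a) \le \sqrt{N_T}\,\Big(\sumt\sum_{a\in\a_t}\sigma_{t-1}^2(a)\Big)^{1/2} \le \sqrt{TK}\,\Big(\sumt\sum_{a\in\a_t}\sigma_{t-1}^2(a)\Big)^{1/2}.$$

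Then I would invoke \cref{lem:infogain} (the combinatorial version, following \citet{nika_contextual_2022}, of the variance-versus-information-gain inequality of \citet{srinivas_information-theoretic_2012}) to bound $\sumt\sum_{a\in\a_t}\sigma_{t-1}^2(a) \le 2(\lambda^*_K+\varsigma^2)\gamma_{TK}$. A point worth stressing is that in the Bayesian GP setting the posterior standard deviations $\sigma_{t-1}(a)$ depend only on the sequence of selected locations $\a_1,\dots,\a_{t-1}$ (all base arms of round $t$ share the same posterior $\sigma_{t-1}$) and not on the observed rewards; hence the entire chain above holds deterministically for every realization of the (possibly adaptive, possibly randomized) arm selections, and since $\beta_T$, $T$, $K$, $\lambda^*_K$ and $\gamma_{TK}$ are all non-random, the expectation passes through with no appeal to Jensen. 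Combining the three displays,
$$\sumt \E\Big[\sum_{a \in \a_t}\sqrt{\beta_t}\,\sigma_{t-1}(a)\Big] \le \sqrt{\beta_T}\,\sqrt{TK}\,\sqrt{2(\lambda^*_K+\varsigma^2)\gamma_{TK}} = \sqrt{2(\lambda^*_K+\varsigma^2)\,TK\,\beta_T\,\gamma_{TK}}.$$

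I do not anticipate a serious obstacle; the only non-routine ingredient, \cref{lem:infogain}, is already available. The steps needing mild care are (i) that the number of Cauchy--Schwarz terms is $N_T$, correctly bounded by $TK$, and (ii) the pathwise nature of the estimate, which trivializes the expectation. The part I would check most carefully is the constant bookkeeping in applying \cref{lem:infogain} --- the emergence of the factor $\lambda^*_K+\varsigma^2$ (via an inequality of the form $x/\log(1+x/\varsigma^2) \le x + \varsigma^2$) rather than a $1/\log(1+\varsigma^{-2})$-type constant, since this is exactly where the $C_K = 2(\lambda^*_K+\varsigma^2)$ appearing in \cref{thm:FinRegretBounds,thm:InfRegretBounds} originates.
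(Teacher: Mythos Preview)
Your proposal is correct and follows essentially the same route as the paper: Cauchy--Schwarz over the at most $TK$ base-arm indices together with $\beta_t \le \beta_T$, then \cref{lem:infogain}. The only cosmetic difference is that the paper keeps $\sqrt{\beta_t}$ inside the Cauchy--Schwarz step (bounding $\sum_{t,a}\beta_t \le TK\beta_T$ afterwards) whereas you peel off $\sqrt{\beta_T}$ first; the two orderings give the identical constant.
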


\begin{proof}
    \begin{align}
         &\sumt \E \left[ 
        \sum_{a \in \a_t} \sqrt{\beta_t} \sigma_{t-1}(a)
        \right] \label{eq:TSStart}\\
        &= \E \left[ \sumt \sum_{a \in \a_t} \sqrt{\beta_t}
            \sigma_{t-1}(a)
        \right] \\
        &\leq \E \left[ 
            \sqrt{ \sumt \sum_{a \in \a_t} \beta_t} \sqrt{ \sumt \sum_{a \in \a_t} \sigma^2_{t-1}(a) }
        \right] 
        && \left(\text{Cauchy-Schwarz inequality}\right) \\
        &\leq \E \left[ 
            \sqrt{ T K \beta_T } \sqrt{\sumt \sum_{a \in \a_t} \sigma^2_{t-1}(a) }
        \right] 
        && \left( |\a_t| \leq K, \max_{t \in [T]} \beta_t = \beta_T \right) \\
        &= \sqrt{ T K \beta_T } \E \left[ 
            \sqrt{\sumt \sum_{a \in \a_t} \sigma^2_{t-1}(a) }
        \right] \\
        &\leq \sqrt{ T K \beta_T } \E \left[ 
            \sqrt{ 2 (\lambda^*_K + \varsigma^2) \gamma_{TK} }
        \right] 
        && \left( \text{\cref{lem:infogain}} \right) \\
        &\leq \sqrt{ 2 (\lambda^*_K + \varsigma^2) T K \beta_T  \gamma_{TK}}. \label{eq:TSEnd}
    \end{align}
\end{proof}

Next, we show how the right term in \cref{lem:BRDecomposition} can be rewritten in terms of the confidence radius for any GP-UCB method.

\begin{restatable}{lemma}{lemUMinusfUCB} \label{lem:UMinusfUCB}
    \begin{equation}
        \sumt \E \left[ 
            U_t(\a_t) - f(\a_t)
        \right] = \sumt \E \left[ \sqrt{\beta_t} \sigma_{t-1}(\a_t) \right]
    \end{equation}
    for any GP-UCB method with confidence parameter $\beta_t$.
\end{restatable}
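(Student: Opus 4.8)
The plan is to expand $U_t(\a_t)$ via its definition $U_t(\a) = \mu_{t-1}(\a) + \sqrt{\beta_t}\,\sigma_{t-1}(\a)$, so that
$\sumt \E[U_t(\a_t) - f(\a_t)] = \sumt \E[\mu_{t-1}(\a_t) - f(\a_t)] + \sumt \E[\sqrt{\beta_t}\,\sigma_{t-1}(\a_t)]$.
The second sum is already the claimed right-hand side, so the whole statement reduces to showing that $\E[\mu_{t-1}(\a_t) - f(\a_t)] = 0$ for each $t \in [T]$.

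To establish this, I would condition on the history $H_t$ and invoke the tower property. For any GP-UCB method, $\a_t = \argmax_{\a \in \S_t} U_t(\a)$ is a deterministic function of $H_t$: the confidence parameter $\beta_t$ is deterministic, the posterior quantities $\mu_{t-1}$ and $\sigma_{t-1}$ are computed from the arms and rewards recorded in $H_t$ (cf. \cref{sec:GPsAndBandits}), and $\S_t$ is part of $H_t$. Hence $\a_t$, and therefore $\mu_{t-1}(\a_t) = \sum_{a \in \a_t} \mu_{t-1}(a)$, are $H_t$-measurable. Moreover, by the defining property of the GP posterior, $\mu_{t-1}(a) = \E[f(a) \mid H_t]$ for every fixed $a \in \A$; since the available arm sets are exogenous (chosen independently of $f$ given the past), linearity of conditional expectation over the $H_t$-measurable index set $\a_t$ gives $\E_t[f(\a_t) \mid H_t] = \sum_{a \in \a_t} \E[f(a) \mid H_t] = \mu_{t-1}(\a_t)$. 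Consequently $\E[\mu_{t-1}(\a_t) - f(\a_t)] = \E_{H_t}[\mu_{t-1}(\a_t) - \E_t[f(\a_t) \mid H_t]] = 0$.

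Summing over $t \in [T]$ and recombining with the leftover term $\sumt \E[\sqrt{\beta_t}\,\sigma_{t-1}(\a_t)]$ yields the identity. The only delicate point is the measurability bookkeeping — namely that $\a_t$ is $H_t$-measurable (which is precisely where the argument would fail for GP-TS, whose selected super arm additionally depends on the fresh posterior sample $\hat f_t$, and which is why the analogous statement for GP-TS is handled separately) and that the GP posterior mean coincides with the conditional expectation given $H_t$. Both facts are standard in the Bayesian GP-bandit setup, so beyond stating them carefully I do not anticipate any genuine obstacle.
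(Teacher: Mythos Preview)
Your proposal is correct and follows essentially the same argument as the paper: expand $U_t(\a_t)$, condition on $H_t$, use that $\a_t$ is $H_t$-measurable for any GP-UCB method, and invoke $\E_t[f(\a_t)\mid H_t]=\mu_{t-1}(\a_t)$ so that the mean term cancels. The paper's proof is terser but identical in substance, including the observation that $\a_t$ being deterministic given $H_t$ is what distinguishes this case from GP-TS.
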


\begin{proof}
    Note that given the history $H_t$, $\a_t := \argmax_{\a \in \S_t} U_t(\a)$ is deterministic. Thus,
    \begin{align}
    \sumt \E \left[ U_t(\a_t) - f(\a_t) \right] 
    &= \sumt \E_{H_t} \left[ \E_t \left[ 
        U_t(\a_t) - f(\a_t) | H_t
    \right] \right]\\
    &= \sumt \E_{H_t} \left[ \E_t \left[ 
        \mu_{t-1}(\a_t) + \sqrt{\beta_t} \sigma_{t-1}(\a_t) - f(\a_t) \Big| H_t
    \right] \right] \\
    &= \sumt \E_{H_t} \left[ \E_t \left[ 
        \mu_{t-1}(\a_t) + \sqrt{\beta_t} \sigma_{t-1}(\a_t) - \mu_{t-1}(\a_t) \Big| H_t
    \right] \right] \\
    &= \sumt \E \left[ 
        \sqrt{\beta_t} \sigma_{t-1}(\a_t) 
    \right].
    \end{align}
\end{proof}

For the final lemma in the finite case, we bound the right term in \cref{lem:BRDecomposition} for Thompson sampling using the previous results.

\begin{restatable}{lemma}{lemUMinusfThompson} \label{lem:UMinusfThompson}
    \begin{equation}
        \sumt \E \left[ U_t(\a_t) - f(\a_t) \right] \leq 
        2 \sqrt{C_K TK \beta_T \gamma_{TK}} + \frac{\pi^2}{6}
    \end{equation}
    holds for GP-TS with $\beta_t = 2 \log \left( |\A| t^2 / \sqrt{2 \pi} \right)$.
\end{restatable}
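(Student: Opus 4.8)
The plan is to reuse the GP-UCB machinery of \cref{lem:UMinusfUCB,lem:expectedPosteriorStd}, but to pay an extra factor of two and an additive constant, because for GP-TS the played super arm $\a_t$ maximizes a posterior sample rather than $U_t$, so we can no longer collapse $U_t(\a_t) - f(\a_t)$ to the confidence radius $\sqrt{\beta_t}\sigma_{t-1}(\a_t)$ in expectation as in \cref{lem:UMinusfUCB}. First I would introduce the matching lower confidence bound $L_t(\a) := \sum_{a\in\a}\big(\mu_{t-1}(a) - \sqrt{\beta_t}\sigma_{t-1}(a)\big)$ and split, for each $t$,
\[
  U_t(\a_t) - f(\a_t) \;=\; \big(U_t(\a_t) - L_t(\a_t)\big) + \big(L_t(\a_t) - f(\a_t)\big) \;=\; 2\sum_{a\in\a_t}\sqrt{\beta_t}\,\sigma_{t-1}(a) \;+\; \big(L_t(\a_t) - f(\a_t)\big).
\]
Summing the first term over $t\in[T]$ and taking expectations, \cref{lem:expectedPosteriorStd} bounds it by $2\sqrt{C_K TK\beta_T\gamma_{TK}}$, which is the leading term of the claim.

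The second step is to control the residual $\sumt\E[L_t(\a_t) - f(\a_t)]$. Here I would bound $L_t(\a_t) - f(\a_t) = \sum_{a\in\a_t}\big(L_t(a) - f(a)\big) \le \sum_{a\in\a_t}\big(L_t(a) - f(a)\big)_+ \le \sum_{a\in\A}\big(L_t(a) - f(a)\big)_+$, where the last step uses $\a_t\subseteq\A$ and, importantly, eliminates the dependence on the random set $\a_t$ — the same device as in \cref{lem:fMinusUStarFiniteUpperBound}, which sidesteps any need to reason about independence between $\a_t$ and $f$. Conditioned on $H_t$, $L_t(a) - f(a)\mid H_t \sim \N\big(-\sqrt{\beta_t}\sigma_{t-1}(a),\,\sigma^2_{t-1}(a)\big)$, so the bound $\E[(X)_+]\le \tfrac{\sigma}{\sqrt{2\pi}}e^{-\mu^2/(2\sigma^2)}$ for $\mu\le0$ (as in \cref{lem:fMinusUStarFiniteUpperBound}), together with $\sigma^2_{t-1}(a)\le k(a,a)\le1$ and $\beta_t = 2\log(|\A|t^2/\sqrt{2\pi})$, gives $\E[(L_t(a) - f(a))_+\mid H_t]\le \tfrac{1}{\sqrt{2\pi}}e^{-\beta_t/2} = \tfrac{1}{|\A|t^2}$. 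Summing over the at most $|\A|$ base arms and then over $t$ yields $\sumt\E[L_t(\a_t) - f(\a_t)] \le \sum_{t=1}^{\infty} t^{-2} = \pi^2/6$. Combining the two contributions gives the stated bound.

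The argument is essentially routine once \cref{lem:expectedPosteriorStd,lem:fMinusUStarFiniteUpperBound} are available; the only real decision is how to dispatch the randomness of $\a_t$. One could alternatively note that the posterior sample $\hat f_t$, and hence $\a_t$, is independent of $f$ given $H_t$, which would even remove the factor of two — but I would keep the lower-confidence-bound sandwich above, since it parallels the GP-UCB proof, keeps the randomness of $\a_t$ out of the delicate positive-part estimate, and is the form that carries over to the discretization used for the infinite-arm case in \cref{thm:InfRegretBounds}. The one point to be careful about is aligning the constant inside $\beta_t$ so that the leftover term is exactly $\sum_t t^{-2}$.
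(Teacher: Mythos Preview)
Your proposal is correct and matches the paper's proof essentially step for step: both introduce the lower confidence bound $L_t(\a_t)$, split $U_t(\a_t)-f(\a_t)$ into $2\sqrt{\beta_t}\sigma_{t-1}(\a_t)$ plus $L_t(\a_t)-f(\a_t)$, bound the first part via \cref{lem:expectedPosteriorStd}, and bound the second part by $\pi^2/6$ via the positive-part argument of \cref{lem:fMinusUStarFiniteUpperBound} by symmetry. Your aside that conditional independence of $\a_t$ and $f$ given $H_t$ would save the factor of two is correct (and would in fact yield the same constant as GP-UCB), but neither you nor the paper pursue it.
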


\begin{proof}
    By adding and subtracting the lower bound $L(\a_t)$, we obtain
    \begin{align}
        \sumt \E \left[ U_t(\a_t) - f(\a_t) \right] &= \sumt \E \left[ 
        U_t(\a_t) - f(\a_t) + L(\a_t) - L(\a_t)
        \right] \\
        &=\sumt \E \left[ 
        U_t(\a_t) - L(\a_t)
        \right] + 
        \sumt \E \left[ L(\a_t) - f(\a_t) \right]\\
        &= 2 \underbrace{\sumt \E \left[ \sqrt{\beta_t} \sigma_{t-1}(\a_t) \right]}_{(1)}
        + \underbrace{\sumt \E \left[ L(\a_t) - f(\a_t) \right]}_{(2)}.
    \end{align}
    By \cref{lem:expectedPosteriorStd}, $(1) \leq \sqrt{ 2 (\lambda^*_K + \varsigma^2) T K \beta_T  \gamma_{TK}}$. The bound $(2) \leq \frac{\pi^2}{6}$ is obtained using the same steps as in \cref{lem:fMinusUStarFiniteUpperBound} due to the symmetry of $L(\a_t) - f(\a_t)$ and $f(\a_t) - U(\a_t)$.
\end{proof}

Finally, we present and prove the regret bounds for GP-UCB, GP-BUCB and GP-TS using the established lemmas.

\thmFinRegBounds*

\begin{proof}
\begin{enumerate}[wide, label = (\roman*)]
    \item The regret bound for GP-UCB is obtained as follows:  
    \begin{align}
        \BR(T) &\leq \sumt \E \left[ f(\a_t^*) - U_t(\a_t^*)\right]
        + \sumt \E \left[ U_t(\a_t) - f(\a_t) \right] 
        && \left( \text{\cref{lem:BRDecomposition}} \right) \\
        &\leq \frac{\pi^2}{6} + \sumt \E \left[ \sqrt{\beta_t} \sigma_{t-1}(\a_t) \right] 
        && \left( \text{\cref{lem:fMinusUStarFiniteUpperBound,lem:UMinusfUCB}} \right) \\
        &\leq \frac{\pi^2}{6} + \sqrt{ 2 (\lambda^*_K + \varsigma^2) T K \beta_T  \gamma_{TK}}.
        && \left( \text{\cref{lem:expectedPosteriorStd}}\right)
    \end{align}
    \item The regret of GP-BUCB can be decomposed as follows:
    \begin{align}
        \BR(T) &\leq \sumt \E \left[ U_t(\a_t) - f(\a_t) \right] + \sumt \E \left[ f(\a_t^*) - U_t(\a_t^*)\right]
        && \left( \text{\cref{lem:BRDecomposition}} \right) \\
        &\leq \sumt \E \left[ \sqrt{\beta_t} \sigma_{t-1}(\a_t) \right] && \left( \text{\cref{lem:UMinusfUCB}}\right) \\
        &+\left( \frac{\sqrt{\pi} \omega}{ \sqrt{2e (\omega - 1)}}\right)^{1/\omega} \cdot
        \begin{cases}
            \frac{\omega}{\omega - \xi} T^{1 - \frac{\xi}{\omega}} 
            \quad &\text{if } \xi/\omega < 1,\\
            \frac{\xi}{\xi - \omega} 
            \quad &\text{if } \xi/\omega > 1.
        \end{cases}
        && \left( \text{\cref{lem:fMinusUStarBayesUCB}}\right)
    \end{align}
From \cref{lem:expectedPosteriorStd}, $\sumt \E \left[ \sqrt{\beta_t} \sigma_{t-1}(\a_t) \right] \leq \sqrt{ 2 (\lambda^*_K + \varsigma^2) T K \beta_T  \gamma_{TK}}$ and we obtain the desired result.
\item The regret of GP-TS is obtained as follows:
    \begin{align}
        \BR(T) &= \sumt \E \left[ f(\a_t^*) - U_t(\a_t^*)\right]
        + \sumt \E \left[ U_t(\a_t) - f(\a_t) \right] 
        && \left( \text{\cref{lem:BRDecomposition}} \right) \\
        &\leq \frac{\pi^2}{6} + \frac{\pi^2}{6} + 2 \sqrt{C_K TK \beta_T \gamma_{TK}}.
        && \left( \text{\cref{lem:fMinusUStarFiniteUpperBound,lem:UMinusfThompson}} \right)
    \end{align}
\end{enumerate}
\end{proof}

\subsection{Infinite case} \label{app:infiniteproofs}
Similar to the finite case, we establish lemmas that hold for all bandit algorithms and finally state and prove the regret bounds. 

Before stating the first lemma, we restate the assumptions for convenience:

\assPrior*
\begin{remark} \label{rem:AssumptionLipschitz}
    By Thm. 5 of \citet{ghosal_posterior_2006}, the high probability bound holds if $\mu$ is continuously differentiable and $k$ is 4 times differentiable, which would also imply the Lipschitzness of $\mu$ and $k$. As discussed by \citet{srinivas_information-theoretic_2012}, this holds for the Matérn kernel if $\nu \geq 2$ by a result of \citet{stein_interpolation_1999} and holds trivially for the squared exponential kernel. Thus, the Lipschitz assumption of $\mu$ and $k$ is not particularly restrictive.
\end{remark}
\assDiscretizationSize*

\begin{remark}
    For the theorems to be relevant, the assumptions imposed on $\tau_t$ must be satisfiable for some $\tau_t$.
    If $\beta_t = 2 \log \left( \frac{\tau_t^d t^2}{\sqrt{2 \pi}} \right)$, then \cref{ass:DiscretizationSize} is satisfied by
    \begin{equation}
        \tau_t = \max 
        \begin{cases}
        2 KL d C_1 (1 + tK \varsigma^{-1}) t^2,\\
        \left( \left( 16 t^4 K^2 L d C_1 \right) \left( d + \log \left( \frac{t^2}{\sqrt{2\pi}} \right)\right) \right)^{\frac{1}{1-1/e}}, \\
        \left( \left( 16 t^5 K^3 L^2 d^2 C_1^2 \varsigma^{-2} \right) \left( d + \log \left( \frac{t^2}{\sqrt{2\pi}} \right)\right) \right)^{\frac{1}{2-1/e}}, \\
        t^2 K d C_1 C_3  \left( \sqrt{\log (C_2 d)} + \frac{\sqrt{\pi}}{2} \right).
        \end{cases}
    \end{equation}
    This can be shown by noting that $\log \tau_t \leq \sqrt[e]{\tau_t}$ and $1 \leq \sqrt[e]{\tau_t}$ and then deriving that $\frac{1}{\beta_t} \geq \frac{1}{\tau_t^{1/e}(d + \log(t^2 / \sqrt{2\pi})}$.
    
    Similarly, if $\beta_t = 2\left( \erf^{-1} \left( 1 - 2 \eta_t \right) \right)^2$ and $\eta_t = \frac{\sqrt{2\pi}^\omega}{ 2 \tau_t^{d \omega} t^\xi}$, $\omega > 1$, then \cref{ass:DiscretizationSize} is satisfied by
    \begin{equation}
    \tau_t = \max
    \begin{cases}
        2 t^2 KL d C_1 (1 + tK \varsigma^{-1}),\\
        \left( \left( 16 t^4 K^2 L d C_1 \right) \left( d \omega + \log \left( \frac{t^\xi}{2\sqrt{2\pi}^\omega} \right)\right) \right)^{\frac{1}{1-1/e}}, \\
        \left( \left( 16 t^5 K^3 L^2 d^2 C_1^2 \varsigma^{-2} \right) \left( d \omega + \log \left( \frac{t^\xi}{2\sqrt{2\pi}^\omega} \right)\right) \right)^{\frac{1}{2-1/e}}, \\
        t^2 K d C_1 C_3  \left( \sqrt{\log (C_2 d)} + \frac{\sqrt{\pi}}{2} \right).
    \end{cases}
    \end{equation}
    This is shown similarly as before but using \cref{lem:erfInvUB} to upper bound $\erf^{-1}(1 - 2 \eta_t)$ in $\beta_t$.
\end{remark}

Next, we present a lemma that bounds the discretization error of the expected reward of optimal super arm. 

\begin{restatable}{lemma}{lemfDiscretizationError} \label{lem:fDiscretizationError}
    Let $\D_t \subset \A$ be a finite discretization with each dimension equally divided into $\tau_t = t^2 K d C_1 C_3 \left( \sqrt{\log (C_2 d)} + \sqrt{\pi}/2 \right)$ such that $|\D_t| = \tau_t^d$. Then,
    \begin{equation}
        \sumt \E \left[ f(\a_t^*) - f([\a_t^*]_{\D_t}) \right]
        \leq \frac{\pi^2}{6}.
    \end{equation}
\end{restatable}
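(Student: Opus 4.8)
The plan is to bound the discretization error $f(\a_t^*) - f([\a_t^*]_{\D_t})$ uniformly over all super arms and then sum over $t$. Since $f(\a) = \sum_{a \in \a} f(a)$ and $[\a]_{\D_t} = \{[a]_{\D_t} : a \in \a\}$, and every super arm has size at most $K$, it suffices to control $|f(a) - f([a]_{\D_t})|$ for a single base arm $a \in \A$. By the mean value theorem (applied coordinatewise) together with the sample-path derivative tail bound in \cref{eq:sampleDerivativeTailBound} of \cref{ass:Prior}, on the event $\{\sup_{a} |\partial f / \partial a^{(j)}| \leq l_t \text{ for all } j\}$ we have $|f(a) - f([a]_{\D_t})| \leq \sqrt{d}\, l_t \, \|a - [a]_{\D_t}\|_2 \leq d\, l_t \, C_1 / \tau_t$, using that the grid has $\tau_t$ points per dimension on a set of diameter at most $C_1$ in each coordinate, so $\|a - [a]_{\D_t}\|_\infty \leq C_1/\tau_t$. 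Hence $f(\a_t^*) - f([\a_t^*]_{\D_t}) \leq K d\, l_t\, C_1 / \tau_t$ on that event.

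Next I would take expectations. Writing $L_t := \sup_{a \in \A} \max_j |\partial f/\partial a^{(j)}|$, a union bound over $j \in \{1,\dots,d\}$ gives $\Pb(L_t > l) \leq C_2 d \exp(-l^2/C_3^2)$. Therefore
\begin{align}
    \E\left[ f(\a_t^*) - f([\a_t^*]_{\D_t}) \right]
    &\leq \frac{K d C_1}{\tau_t} \E[L_t] \nonumber \\
    &\leq \frac{K d C_1}{\tau_t} \int_0^\infty \min\left(1, C_2 d\, e^{-l^2/C_3^2}\right) \d l. \nonumber
\end{align}
The remaining integral is a standard Gaussian-tail computation: split at $l_0 = C_3\sqrt{\log(C_2 d)}$ (assuming $C_2 d \geq 1$; otherwise the bound is even easier), bounding the integrand by $1$ on $[0, l_0]$ and by $C_2 d\, e^{-l^2/C_3^2}$ on $[l_0,\infty)$, and using $\int_{l_0}^\infty e^{-l^2/C_3^2}\d l \leq \frac{C_3 \sqrt{\pi}}{2} e^{-l_0^2/C_3^2} = \frac{C_3\sqrt{\pi}}{2(C_2 d)}$. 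This yields $\E[L_t] \leq C_3\big(\sqrt{\log(C_2 d)} + \sqrt{\pi}/2\big)$.

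Plugging in the prescribed $\tau_t = t^2 K d C_1 C_3\big(\sqrt{\log(C_2 d)} + \sqrt{\pi}/2\big)$ makes the per-round bound exactly $1/t^2$, so that
\begin{equation}
    \sumt \E\left[ f(\a_t^*) - f([\a_t^*]_{\D_t}) \right] \leq \sumt \frac{1}{t^2} \leq \sum_{t=1}^\infty \frac{1}{t^2} = \frac{\pi^2}{6}. \nonumber
\end{equation}
The main obstacle — though a mild one — is being careful that the derivative bound controls $f$ itself: one needs the sample paths to be (almost surely) continuously differentiable so that the mean value theorem applies and $f(a) - f([a]_{\D_t})$ can be written as a line integral of the gradient along the segment joining $a$ and $[a]_{\D_t}$, which stays in $\A$ by convexity (\cref{ass:Prior}); this regularity is exactly what is discussed in \cref{rem:AssumptionLipschitz}. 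A secondary point is handling the degenerate case $C_2 d < 1$ in the tail integral, and making sure the $\sqrt{d}$ from the Euclidean norm versus the $d$ in $\tau_t$ are reconciled (using $\|\cdot\|_2 \le \sqrt d \|\cdot\|_\infty$ gives a factor $\sqrt d$, which is absorbed since $\tau_t$ carries a full factor $d$). Note the statement does not require $\tau_t$ to satisfy all of \cref{ass:DiscretizationSize}; only \cref{ass:DiscEq4}-type scaling is used here, which is why the lemma pins down $\tau_t$ to this specific value.
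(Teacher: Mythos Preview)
Your proposal is correct and follows essentially the same approach as the paper: decompose the super arm into base arms, bound by $K$ times $\E[\sup_{a \in \A} (f(a) - f([a]_{\D_t}))]$, and sum $1/t^2$. The paper simply cites Lemma~H.2 of \citet{takeno_randomized_2023} (applied with $u_t = Kt^2$) for the per-round bound $\E[\sup_{a \in \A} (f(a) - f([a]_{\D_t}))] \leq 1/(Kt^2)$, whereas you unpack that lemma explicitly via the derivative tail integral and the computation of $\E[L_t]$.
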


\begin{proof}
    \begin{align}
        \sumt \E [f(\a_t^*) - f([\a_t^*]_{\D_t})]
        &= \sumt \E \left[\sum_{a \in \a^*_t} f(a) - f([a]_{\D_t})\right] \\
        &\leq K \sumt \E \left[ \sup_{a \in \A} f(a) - f([a]_{\D_t}) \right] 
        && \left( |\aoptt| \leq K \right) \\
        &\leq K \sumt \frac{1}{K t^2} 
        && \left( \parbox{8em}{ Lemma H.2 of \citet{takeno_randomized_2023} with $u_t = Kt^2$} \right) \\
        &\leq \frac{\pi^2}{6} 
        && \left( \sum_{t = 1}^\infty \frac{1}{t^2} = \frac{\pi^2}{6} \right)
    \end{align}
\end{proof}

In the following lemma, we bound the discretization error of the posterior mean and standard deviation in terms of the regularity parameters, the discretization size and number of arms selected.

\begin{restatable}{lemma}{lemposteriorDiscretizationError} \label{lem:posteriorDiscretizationError}
    Let $\mu_{t-1}$ and $\sigma_{t-1}$ denote the posterior mean and standard deviation of $\GP(\mu, k)$ after sampling $N_{t-1}$ base arms. If $a \in \A$, then
    \begin{equation}
        \mu_{t-1}([a]_{\D_t}) - \mu_{t-1}(a) 
        \leq L \frac{d C_1}{ \tau_t } 
        + L \frac{d C_1}{ \tau_t } \sqrt{N_{t-1}} \varsigma^{-1} \sqrt{ \| \Lb^{-1} (\y - \bmu)\|_2^2}
    \end{equation}
    and 
    \begin{equation}
        \sigma_{t-1} ([a]_{\D_t}) - \sigma_{t-1}(a)
        \leq \sqrt{ L \frac{d C_1}{\tau_t} + N_{t-1} L^2 \left( \frac{d C_1}{\tau_t} \right)^2 \varsigma^{-2} }
    \end{equation}
    for $L-$Lipschitz $\mu$ and $k$ where $\mathbf{L}$ is the Cholesky decomposition of $\K + \varsigma^2 I$ and $\| \Lb^{-1} (\y - \bmu)\|_2^2 \sim \chi^2$ with $N_{t-1}$ degrees of freedom.
\end{restatable}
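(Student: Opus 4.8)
The plan is to bound each of the two discretization errors separately, working directly from the posterior formulas in \cref{sec:GPsAndBandits}. For the posterior mean, I would write
$\mu_{t-1}(a) = \mu(a) + \k(a)^\top (\K + \varsigma^2 I)^{-1}(\y - \bmu)$ and take the difference at $[a]_{\D_t}$ and $a$. This splits into two pieces: the prior mean difference $\mu([a]_{\D_t}) - \mu(a)$, which is at most $L \|[a]_{\D_t} - a\|_2 \le L\, dC_1/\tau_t$ since $\mu$ is $L$-Lipschitz and each coordinate moves by at most $C_1/\tau_t$ (so the $\ell_2$ distance is at most $\sqrt{d}\,C_1/\tau_t \le dC_1/\tau_t$); and the term $(\k([a]_{\D_t}) - \k(a))^\top (\K + \varsigma^2 I)^{-1}(\y - \bmu)$. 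For the latter, I would use the Cholesky factorization $\K + \varsigma^2 I = \Lb\Lb^\top$ to write it as $\big(\Lb^{-1}(\k([a]_{\D_t}) - \k(a))\big)^\top \Lb^{-1}(\y - \bmu)$, apply Cauchy--Schwarz, and bound $\|\Lb^{-1}(\k([a]_{\D_t}) - \k(a))\|_2 \le \varsigma^{-1}\|\k([a]_{\D_t}) - \k(a)\|_2$ using that the smallest eigenvalue of $\Lb\Lb^\top$ is at least $\varsigma^2$. Finally $\|\k([a]_{\D_t}) - \k(a)\|_2 \le \sqrt{N_{t-1}}\cdot L\,dC_1/\tau_t$ since each of the $N_{t-1}$ entries is a difference of the $L$-Lipschitz kernel $k$ evaluated at points within distance $dC_1/\tau_t$. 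Combining gives the stated bound, and the claim that $\|\Lb^{-1}(\y-\bmu)\|_2^2 \sim \chi^2$ with $N_{t-1}$ degrees of freedom follows because, marginally, $\y - \bmu \sim \N(0, \Lb\Lb^\top)$, so $\Lb^{-1}(\y - \bmu) \sim \N(0, I_{N_{t-1}})$.

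For the posterior standard deviation, the cleanest route is to first bound the difference of posterior \emph{variances} and then pass to standard deviations via $\sigma_{t-1}([a]_{\D_t}) - \sigma_{t-1}(a) \le \sqrt{|\sigma_{t-1}^2([a]_{\D_t}) - \sigma_{t-1}^2(a)|}$, which holds because $|\sqrt{x} - \sqrt{y}| \le \sqrt{|x-y|}$ for $x,y \ge 0$. Using $\sigma_{t-1}^2(a) = k(a,a) - \k(a)^\top(\K+\varsigma^2 I)^{-1}\k(a)$, the variance difference is $[k([a]_{\D_t},[a]_{\D_t}) - k(a,a)] - [\k([a]_{\D_t})^\top M \k([a]_{\D_t}) - \k(a)^\top M \k(a)]$ with $M = (\K+\varsigma^2 I)^{-1}$. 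The first bracket is at most $L\,dC_1/\tau_t$ by Lipschitzness of $k$ (the two arguments each move by at most $C_1/\tau_t$; one can absorb the factor into $L\,dC_1/\tau_t$). For the quadratic-form difference I would write it as a telescoping sum $\k([a]_{\D_t})^\top M (\k([a]_{\D_t}) - \k(a)) + (\k([a]_{\D_t}) - \k(a))^\top M \k(a)$ and bound each term; using $\|M\| \le \varsigma^{-2}$, $\|\k(\cdot)\|_2^2 \le N_{t-1}$ (entries bounded by $1$), and $\|\k([a]_{\D_t}) - \k(a)\|_2 \le \sqrt{N_{t-1}}\,L\,dC_1/\tau_t$ yields a bound of order $N_{t-1} L^2 (dC_1/\tau_t)^2 \varsigma^{-2}$ plus possibly lower-order terms. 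Collecting everything under the square root gives the stated expression (after bounding any cross term of the form $\sqrt{N_{t-1}}\,L\,dC_1/\tau_t \cdot \varsigma^{-2}$ by one of the two displayed terms, which is legitimate once $\tau_t$ is large enough as guaranteed by \cref{ass:DiscretizationSize}).

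The main obstacle I anticipate is keeping the bookkeeping on the quadratic-form term tight enough that the final bound has exactly the two terms shown — in particular, making sure the cross term produced by telescoping does not introduce a genuinely new order of magnitude. The resolution is that $2\sqrt{N_{t-1}}\cdot L\,dC_1/\tau_t \cdot \varsigma^{-1}\cdot\sqrt{N_{t-1}}\cdot\varsigma^{-1}$-type products are dominated by $N_{t-1} L^2 (dC_1/\tau_t)^2\varsigma^{-2}$ together with $L\,dC_1/\tau_t$ via AM--GM (or simply by noting $dC_1/\tau_t \le 1$ under the discretization assumption, so squares are smaller than first powers where needed). A secondary subtlety is justifying $\|\Lb^{-1}\v\|_2 \le \varsigma^{-1}\|\v\|_2$: this is immediate from $\Lb^{-\top}\Lb^{-1} = (\K+\varsigma^2 I)^{-1} \preceq \varsigma^{-2} I$ since $\K \succeq 0$. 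None of these steps is deep, but the lemma is stated for an arbitrary (adaptively chosen) sequence of selected arms, so I would be careful that all the bounds above are pathwise and do not secretly use independence of the $\y$ values from the $a_i$'s; indeed every inequality used is a deterministic matrix/vector inequality, and the only distributional claim — the $\chi^2$ statement — is used later (in \cref{lem:posteriorUDiscretizationError}) rather than here.
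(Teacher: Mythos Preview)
Your treatment of the posterior mean matches the paper almost step for step: Lipschitz bound on $\mu$, Cholesky factorization $\K+\varsigma^2 I=\Lb\Lb^\top$, the eigenvalue estimate $\|\Lb^{-1}\|_2\le\varsigma^{-1}$, the entrywise Lipschitz bound $\|\k([a]_{\D_t})-\k(a)\|_2\le\sqrt{N_{t-1}}\,L\,dC_1/\tau_t$, and Cauchy--Schwarz.

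For the variance there is a real gap. Telescoping $u^\top M u - v^\top M v = u^\top M(u-v)+(u-v)^\top M v$ and applying $\|M\|\le\varsigma^{-2}$, $\|u\|_2,\|v\|_2\le\sqrt{N_{t-1}}$, $\|u-v\|_2\le\sqrt{N_{t-1}}\,L\,dC_1/\tau_t$ bounds each piece by $N_{t-1}\,L\,(dC_1/\tau_t)\,\varsigma^{-2}$ --- with only \emph{one} factor of $L\,dC_1/\tau_t$, not two as you claim. That term is \emph{not} dominated by $L\,dC_1/\tau_t + N_{t-1}L^2(dC_1/\tau_t)^2\varsigma^{-2}$: take $L\,dC_1/\tau_t=\epsilon$, $N_{t-1}=1/\epsilon$, $\varsigma=1$, and your cross term is of order $1$ while the lemma's right-hand side is of order $\epsilon$. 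Neither AM--GM nor \cref{ass:DiscretizationSize} closes this, since the assumption controls $\tau_t$ only through $t,K$ and constants, not through the ratio of $N_{t-1}(L\,dC_1/\tau_t)\varsigma^{-2}$ to its square.

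The paper avoids your telescoping altogether: it writes the variance difference as
\[
k([a]_{\D_t},[a]_{\D_t})-k(a,a)+(\k([a]_{\D_t})-\k(a))^\top(\K+\varsigma^2 I)^{-1}(\k([a]_{\D_t})-\k(a))
\]
and bounds the quadratic term by $\|\k([a]_{\D_t})-\k(a)\|_2^2\,\|(\K+\varsigma^2 I)^{-1}\|\le N_{t-1}L^2(dC_1/\tau_t)^2\varsigma^{-2}$, which is exactly the second term in the lemma. Note, however, that this displayed ``identity'' silently drops the cross term $2\k(a)^\top(\K+\varsigma^2 I)^{-1}(\k([a]_{\D_t})-\k(a))$ --- precisely the term you were trying to absorb --- so the paper's derivation carries the same underlying issue; the lemma as stated records the outcome of the paper's route rather than a bound that accounts for that cross term.
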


\begin{proof}
    
    Consider first the difference in posterior mean:
    \begin{align}
        &\mu_{t-1} ([a]_{\D_t}) - \mu_{t-1} (a)\\
        &= \mu([a]_{\D_t}) - \mu(a) + \left( \k([a]_{\D_t}) - \k(a) \right)^\top \left( \K + \varsigma^2 I \right)^{-1} \left( \y - \bmu \right) \\
        &\leq L \sup_{a \in \A} \| a - [a]_{\D_t} \|_1 + 
        \left\| \left( \k([a]_{\D_t} ) - \k(a) \right)^\top (\K + \varsigma^2 I)^{-1} (\y - \bmu) \right\|_2 
        && \left( \mu \text{ } L\text{-Lipschitz} \right) \\
        &\leq L \frac{d C_1}{\tau_t} + 
        \left\| \left( \k([a]_{\D_t} ) - \k(a) \right)^\top (\K + \varsigma^2 I)^{-1} (\y - \bmu) \right\|_2
    \end{align}
    where the last step uses that $\sup_{a \in \A} \| a - [a]_{\D_t}\|_1 \leq \frac{d C_1 }{\tau_t}$.
    
    Next, we will appropriately split the norm into a product of norms and bound the individual factors. Let $\K + \varsigma^2 I = \Lb \Lb^\top$ denote the Cholesky decomposition. Note that $\y - \bmu \sim \N(0, \K+ \varsigma^2 I)$. Then, $\Lb^{-1}(\y - \bmu) \sim \N(0, \Lb^{-1} \Lb \Lb^\top (\Lb^{-1})^\top ) = \N(0, I)$ and thus $\| \Lb^{-1}(\y - \bmu)\|_2$ has a chi distribution with $N_{t-1}$ degrees of freedom.

    Let $\eig(A)$ denote the set of eigenvalues of the square matrix $A$. The matrix norm of the inverted Cholesky decomposition $\Lb^{-1}$ can be bounded as:
    \begin{align}
        \| \Lb^{-1} \|_2 &= \sqrt{\max \eig \left( (\Lb^{-1})^\top \Lb^{-1} \right)} 
        && \left( \parbox{10em}{Eq. $(538)$ of \citet{petersen_matrix_2012}} \right) \\
        &= \sqrt{\max \eig \left( (\K + \varsigma^2 I)^{-1} \right)} \\
        &= \sqrt{ \max \frac{1}{\eig \left( 
            \K + \varsigma^2 I
        \right)}} \\
        &= \sqrt{ \max \frac{1}{\eig \left( \K \right)+ \varsigma^2}}
        \leq \sqrt{ \frac{1}{\varsigma^2}} \leq \frac{1}{\varsigma}.
        && \left( \K \text{ p.s.d.}, \varsigma > 0 \right)
    \end{align}
    Similarly, we also get that 
    \begin{equation}
        \| (\K + \varsigma^2 I)^{-1} \|_2 \leq \varsigma^{-2}. \label{eq:inverseGramNorm}
    \end{equation}
    The kernel difference can be bounded as follows:
    \begin{align}
        \| \k ([a]_{\D_t}) - \k (a) \|_2 &= \sqrt{ \sum_{i = 1}^{N_{t-1}} \left( k([a]_{\D_t}, x_i) - k(a, x_i)  \right)^2 } \\
        &\leq \sqrt{\sum_{i = 1}^{N_{t-1}} L^2 \left(\frac{d C_1}{\tau_t}\right)^2} \leq L \frac{d C_1}{\tau_t}\sqrt{N_{t-1}}   \label{eq:kernelDiscretizationError}
    \end{align}
    where we use the fact that $k$ is $L$-Lipschitz. Applying Cauchy-Schwarz and the obtained bounds, we find that 
    \begin{equation}
        \mu_{t-1}( [a]_{\D_t} ) - \mu_{t-1} (a) \leq L \frac{d C_1}{\tau_t} + L \frac{d C_1}{\tau_t} \sqrt{N_{t-1}} \varsigma^{-1} \| \Lb^{-1} (\y - \bmu) \|_2.
    \end{equation}

    The posterior standard deviation is bounded similarly:
    \begin{align}
        \sigma_{t-1} ([a]_{\D_t}) - \sigma_{t-1}(a) \leq \sqrt{ \left| \sigma_{t-1}^2([a]_{\D_t}) - \sigma_{t-1}^2(a) \right|}.
    \end{align}
    Continuing,
    \begin{align}
        &\left| \sigma_{t-1}^2([a]_{\D_t}) - \sigma_{t-1}^2(a) \right| \\
        &= \Big| k([a]_{\D_t}, [a]_{\D_t}) - k(a,a)  
        + \left( \k ([a]_{\D_t}) - \k(a)\right)^\top \left( \K + \varsigma^2I)\right)^{-1}  \left( \k ([a]_{\D_t}) - \k(a)\right) \Big| \\
        &\leq \left| k([a]_{\D_t}, [a]_{\D_t}) - k(a,a) \right| 
        + \left| \left( \k ([a]_{\D_t}) - \k(a)\right)^\top \left( \K + \varsigma^2I)\right)^{-1}  \left( \k ([a]_{\D_t}) - \k(a)\right) \right| \\
        &\leq L \frac{d C_1}{\tau_t} + \left\| \k ([a]_{\D_t}) - \k(a)\right\|_2^2 \left\| (\K + \varsigma^2I)^{-1}\right\|_2 \\
        &\leq L \frac{d C_1}{\tau_t} + \left( L\frac{d C_1}{\tau_t} \sqrt{N_{t-1}} \right)^2 \varsigma^{-2}.
        \hspace{14em} \left( \text{\cref{eq:inverseGramNorm,eq:kernelDiscretizationError}} \right)
    \end{align}
    Combining the above, the final bound is: 
    \begin{equation}
        \sigma_{t-1} ([a]_{\D_t}) - \sigma_{t-1}(a) \leq \sqrt{ L \frac{d C_1}{\tau_t} + N_{t-1} \left( L \frac{d C_2}{\tau_t} \right)^2 \varsigma^{-2}}.
    \end{equation}
\end{proof}

Using \cref{lem:posteriorDiscretizationError}, we are ready to construct a constant bound for the expected discretization error of the posterior mean:
\begin{restatable}{lemma}{lemposteriorMeanDiscretizationError} \label{lem:posteriorMeanDiscretizationError}
    If \cref{ass:Prior} holds and $\tau_t$ satisfies \cref{ass:DiscEq1} in \cref{ass:DiscretizationSize}, then for any sequence of super arms $\a_t \in \S_t$ $t \geq 1$, the posterior mean $\mu_{t-1}(\a)$ satisfies
    \begin{equation}
        \sumt \E \left[ \mu_{t-1}([\a_t]_{\D_t}) - \mu_{t-1}(\a_t)\right] \leq
        \frac{\pi^2}{12}.
    \end{equation}
\end{restatable}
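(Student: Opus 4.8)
The plan is to bound the sum over $t$ of the expected discretization error $\E[\mu_{t-1}([\a_t]_{\D_t}) - \mu_{t-1}(\a_t)]$ by passing through the per-base-arm bound of \cref{lem:posteriorDiscretizationError} and then controlling the only random quantity that appears, namely $\|\Lb^{-1}(\y-\bmu)\|_2$. First I would write $\mu_{t-1}([\a_t]_{\D_t}) - \mu_{t-1}(\a_t) = \sum_{a \in \a_t} \big( \mu_{t-1}([a]_{\D_t}) - \mu_{t-1}(a) \big)$ and apply \cref{lem:posteriorDiscretizationError} termwise, using $|\a_t| \le K$ and $N_{t-1} \le (t-1)K \le tK$, to get a bound of the form $K\big( L\frac{dC_1}{\tau_t} + L\frac{dC_1}{\tau_t}\sqrt{tK}\,\varsigma^{-1}\|\Lb^{-1}(\y-\bmu)\|_2 \big)$ for each $t$. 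The factor $L\frac{dC_1}{\tau_t}$ without the random part is easily made $\le \frac{1}{2t^2 K}$ by the slack in \cref{ass:DiscEq1} (indeed $\tau_t \ge 2t^2 K L d C_1(1 + tK\varsigma^{-1}) \ge 2t^2 K L d C_1$), so that piece contributes at most $K \sumt \frac{1}{2t^2K} = \frac{1}{2}\cdot\frac{\pi^2}{6} = \frac{\pi^2}{12}$ after summing — wait, this already accounts for the entire claimed bound, so the random term must be absorbed into the same budget. The cleaner route: choose the split so that $K L\frac{dC_1}{\tau_t}(1 + \sqrt{tK}\varsigma^{-1} c) \le \frac{1}{t^2}$ for a suitable deterministic constant $c$ that dominates $\|\Lb^{-1}(\y-\bmu)\|_2$ in expectation, then sum $\sumt \frac{1}{t^2} \le \frac{\pi^2}{6}$; but since the target is $\frac{\pi^2}{12}$, I would instead aim for the per-$t$ bound $\le \frac{1}{2t^2}$.

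The key remaining step is to handle $\E[\|\Lb^{-1}(\y-\bmu)\|_2]$. Since $\|\Lb^{-1}(\y-\bmu)\|_2$ has a chi distribution with $N_{t-1}$ degrees of freedom (as established in \cref{lem:posteriorDiscretizationError}), and $N_{t-1} \le tK$, I would use that the mean of a $\chi$-distribution with $\nu$ degrees of freedom is $\sqrt{2}\,\Gamma((\nu+1)/2)/\Gamma(\nu/2) \le \sqrt{\nu}$, hence $\E[\|\Lb^{-1}(\y-\bmu)\|_2] \le \sqrt{tK}$. (One subtlety: $N_{t-1}$ is itself random because the arm sets are volatile, so I would condition on $H_t$, use that $N_{t-1}$ is $H_t$-measurable, apply the chi-distribution mean bound conditionally, and then take outer expectation — the bound $N_{t-1}\le tK$ holds surely.) Plugging in gives
\begin{equation}
  \E\big[\mu_{t-1}([\a_t]_{\D_t}) - \mu_{t-1}(\a_t)\big] \le K L\frac{dC_1}{\tau_t}\big(1 + tK\varsigma^{-1}\big),
\end{equation}
and then \cref{ass:DiscEq1}, i.e. $\tau_t \ge 2t^2 K L d C_1(1 + tK\varsigma^{-1})$, yields $\le \frac{1}{2t^2}$. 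Summing over $t$ and using $\sum_{t=1}^\infty t^{-2} = \pi^2/6$ gives $\sumt \E[\mu_{t-1}([\a_t]_{\D_t}) - \mu_{t-1}(\a_t)] \le \frac{\pi^2}{12}$.

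The main obstacle I anticipate is the bookkeeping around the randomness of $N_{t-1}$ and making the conditioning rigorous: \cref{lem:posteriorDiscretizationError} is stated for a fixed number of sampled arms, so I need to invoke it conditionally on $H_t$ (which fixes both the sampled arms and $N_{t-1}$), verify that $\y - \bmu \mid H_t$ is still centered Gaussian with the stated Gram-matrix covariance so the chi-distribution claim carries over, and only then use $N_{t-1} \le tK$ and the chi mean bound. Everything else is routine: Cauchy–Schwarz was already used inside \cref{lem:posteriorDiscretizationError}, and the final summation is the standard Basel-series trick used repeatedly elsewhere in the paper.
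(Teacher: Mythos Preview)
Your proof is correct and follows essentially the paper's route: decompose over base arms, apply \cref{lem:posteriorDiscretizationError} with $|\a_t|\le K$ and $N_{t-1}\le tK$, bound $\E[\|\Lb^{-1}(\y-\bmu)\|_2]\le\sqrt{tK}$, then use \cref{ass:DiscEq1} to make each summand $\le \tfrac{1}{2t^2}$ and sum via $\sum_{t\ge 1} t^{-2}=\pi^2/6$. The paper obtains the expectation bound by Jensen, $\E[\sqrt{X}]\le\sqrt{\E[X]}$ with $X=\|\Lb^{-1}(\y-\bmu)\|_2^2$ of mean at most $tK$, rather than via the explicit chi-mean formula; both arrive at $\sqrt{tK}$.

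One correction to your anticipated obstacle: conditioning on $H_t$ makes $\y$ deterministic (the past rewards $\boldr_1,\ldots,\boldr_{t-1}$ are part of $H_t$), so $\y-\bmu\mid H_t$ is a fixed vector and the chi-distribution claim cannot ``carry over'' in that conditioning. The paper's Jensen step sidesteps this entirely by bounding the \emph{unconditional} second moment $\E[\|\Lb^{-1}(\y-\bmu)\|_2^2]\le tK$; no conditioning on $H_t$ is needed, and your main argument already uses only this unconditional bound.
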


\begin{proof}
    Note that the assumption on $\tau_t$ is equivalent to $KL \frac{d C_1}{\tau_t} (1 + tK \varsigma^{-1}) \leq \frac{1}{2t^2}$. Then, we can bound the discretization error of the posterior mean as follows:
    \begin{align}
        &\sumt \E \left[ 
            \sum_{a \in \a_t} \mu_{t-1} ([a]_{\D_t}) - \mu_{t-1} (a)
        \right] \\
        &\leq \sumt \E \left[ 
            K \sup_{a \in \A} \left[ \mu_{t-1} ([a]_{\D_t}) - \mu_{t-1} (a) \right]
        \right]
        && \left( |\a_t| \leq K \right) \\
        &\leq \sumt \E \left[
            K \sup_{a \in \A} L \frac{d C_1}{\tau_t} \left( 1 +  \sqrt{tK} \varsigma^{-1} \sqrt{ \| \Lb^{-1} (\y - \bmu) \|_2^2} \right)
        \right] 
        && \left( \parbox{8em}{\cref{lem:posteriorDiscretizationError} and $N_{t-1} < tK$} \right)\\
        &= \sumt \E \left[
            K L \frac{d C_1}{\tau_t} \left( 1 +  \sqrt{tK} \varsigma^{-1} \sqrt{ \| \Lb^{-1} (\y - \bmu) \|_2^2} \right)
        \right] 
        && \left( \parbox{8em}{$\mathbf{L}^{-1}, \y, \bmu$ independent of $a$} \right) \\
        &= \sumt 
            K L \frac{d C_1}{\tau_t} \left( 1 +  \sqrt{tK} \varsigma^{-1} \E \left[\sqrt{ \| \Lb^{-1} (\y - \bmu) \|_2^2} \right)
        \right]\\
        &\leq \sumt 
            K L \frac{d C_1}{\tau_t} \left( 1 +  \sqrt{tK} \varsigma^{-1} \sqrt{\E \left[\| \Lb^{-1} (\y - \bmu) \|_2^2 \right]}  \right) 
        && \left( \parbox{7em}{Concave Jensen's inequality} \right)\\
        &= \sumt 
            K L \frac{d C_1}{\tau_t} \left( 1 +  tK \varsigma^{-1} \right)
        && \left( \parbox{8em}{$\| \Lb^{-1} (\y - \bmu) \|_2^2 \sim \chi^2$ with at most $(t-1)K$ d.o.f.} \right) \\
        &\leq \sumt \frac{1}{2t^2}
        && \left( \text{Assumption on } \tau_t \right) \\
        &\leq \frac{\pi^2}{12}.
        && \left( \sum_{t = 1}^\infty \frac{1}{t^2} = \frac{\pi^2}{6} \right)
    \end{align}
    See the proof of \cref{lem:posteriorDiscretizationError} for the motivation that $\| \Lb^{-1} (\y - \bmu) \|_2^2 \sim \chi^2$.
\end{proof}

Similar to \cref{lem:posteriorMeanDiscretizationError}, we establish a constant bound for the discretization error of the posterior standard deviation:

\begin{restatable}{lemma}{lemposteriorStdDiscretizationError} \label{lem:posteriorStdDiscretizationError}
    If \cref{ass:Prior} holds; $\tau_t$ and $\beta_t$ satisfy \cref{ass:DiscEq2,ass:DiscEq3} in \cref{ass:DiscretizationSize}
    then, for any sequence of super arms $\a_t \in \S_t$ $t \geq 1$, the posterior standard deviation $\sigma_{t-1}(\a)$ satisfies
    \begin{equation}
        \sumt \E \left[ \sqrt{\beta_t} \left( \sigma_{t-1}([\a_t]_{\D_t}) - \sigma_{t-1}(\a_t) \right) \right]
        \leq \frac{\pi^2}{12}.
    \end{equation}
\end{restatable}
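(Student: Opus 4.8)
The plan is to reduce the super-arm discretization error to the single-base-arm bound of \cref{lem:posteriorDiscretizationError} and then choose the discretization fine enough that the per-round contribution is at most $\tfrac{1}{2t^2}$, so that the sum is controlled by $\sum_{t=1}^\infty t^{-2} = \pi^2/6$. First I would exploit additivity, $\sigma_{t-1}(\a) = \sum_{a \in \a}\sigma_{t-1}(a)$, together with $|\a_t| \leq K$, to get $\sigma_{t-1}([\a_t]_{\D_t}) - \sigma_{t-1}(\a_t) \leq K \sup_{a \in \A}\bigl(\sigma_{t-1}([a]_{\D_t}) - \sigma_{t-1}(a)\bigr)$. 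Applying the second inequality of \cref{lem:posteriorDiscretizationError} and the deterministic bound $N_{t-1} \leq (t-1)K < tK$, this yields
\[
    \sqrt{\beta_t}\bigl(\sigma_{t-1}([\a_t]_{\D_t}) - \sigma_{t-1}(\a_t)\bigr) \leq K\sqrt{\beta_t L \tfrac{dC_1}{\tau_t} + \beta_t tK L^2\bigl(\tfrac{dC_1}{\tau_t}\bigr)^2 \varsigma^{-2}}.
\]
In contrast to the posterior-mean case (\cref{lem:posteriorMeanDiscretizationError}), the right-hand side is already deterministic — no $\chi^2$ factor survives — so the outer expectation is vacuous and no Jensen step is needed.

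Next I would bound each of the two terms inside the root by $\tfrac{1}{8t^4}$. After multiplying through by $K^2$, the first term is $\leq \tfrac{1}{8t^4}$ precisely when $\tau_t/\beta_t \geq 8t^4 K^2 L dC_1$, i.e. \cref{ass:DiscEq2}; the second term equals $K^3 \beta_t t L^2 d^2 C_1^2 \varsigma^{-2}/\tau_t^2$ and is $\leq \tfrac{1}{8t^4}$ precisely when $\tau_t^2/\beta_t \geq 8t^5 K^3 L^2 d^2 C_1^2 \varsigma^{-2}$, i.e. \cref{ass:DiscEq3}. Adding the two bounds gives $K^2(\cdots) \leq \tfrac{1}{4t^4}$, hence $\sqrt{\beta_t}(\sigma_{t-1}([\a_t]_{\D_t}) - \sigma_{t-1}(\a_t)) \leq \tfrac{1}{2t^2}$, and therefore $\sumt \E[\,\cdots\,] \leq \tfrac{1}{2}\sum_{t=1}^\infty t^{-2} = \tfrac{\pi^2}{12}$.

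The argument is essentially bookkeeping; the one point requiring care is the $\sqrt{a+b}$ split, which I would handle by bounding each summand separately by $\tfrac{1}{8t^4}$ (rather than via $\sqrt{a+b}\leq\sqrt a+\sqrt b$), so the two pieces line up cleanly with \cref{ass:DiscEq2,ass:DiscEq3} and the constants match. The only real obstacle is tracking the powers of $t$, $K$ and $\tau_t$ carefully so that the resulting thresholds coincide exactly with the assumed discretization size.
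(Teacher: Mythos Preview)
Your proposal is correct and follows essentially the same route as the paper: expand the super-arm difference into base arms, apply the single-arm bound of \cref{lem:posteriorDiscretizationError} with $N_{t-1}<tK$, pull out the factor $K$, and then use \cref{ass:DiscEq2,ass:DiscEq3} to bound each of the two summands under the square root by $\tfrac{1}{8t^4}$, so that the per-round term is at most $\tfrac{1}{2t^2}$ and the sum is $\pi^2/12$. Your observation that the bound is deterministic (so the expectation is vacuous) is correct and is exactly what the paper uses implicitly.
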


\begin{proof}
    Note that \cref{ass:DiscEq2,ass:DiscEq3} are equivalent to 
    \begin{equation}
        \beta_t K^2 L \frac{d C_1}{\tau_t} \leq \frac{1}{8t^4} \text{ and } \beta_t t K^3 L^2 \frac{d^2 C_1^2}{\tau_t^2} \varsigma^{-2} \leq \frac{1}{8t^4}. \label{eq:stdDiscretizationTauReqs}
    \end{equation}
    Then,
    \begin{align}
        & \sumt \E \left[ \sqrt{\beta_t} \left( 
            \sigma_{t-1}( [\a]_{\D_t}) - \sigma_{t-1} (\a)
        \right)   \right] \\
        &= \sumt \E \left[ \sum_{a \in \a}
            \sqrt{\beta_t} \left( \sigma_{t-1}([a]_{\D_t}) - \sigma_{t-1}(a) \right)
        \right] \\
        &\leq \sumt \E \left[ \sum_{a \in \a} 
            \sqrt{\beta_t}  
            \sqrt{L \frac{d C_1}{\tau_t} + tK L^2 \frac{d^2 C_1^2}{\tau_t^2} \varsigma^{-2}}
        \right]
        && \left( \text{\cref{lem:posteriorDiscretizationError}} \right) \\
        &\leq \sumt  
            K \sqrt{\beta_t}  \sqrt{
                L \frac{d C_1}{\tau_t} + tK L^2 \frac{d^2 C_1^2}{\tau_t^2} \varsigma^{-2} }
             && \left( |\a| \leq K \right) \\
        & = \sumt \sqrt{ \beta_t K^2 L \frac{d C_1}{\tau_t} + \beta_t tK^3 L^2 \frac{d^2 C_1^2}{\tau_t^2} \varsigma^{-2} 
        } \\
        &\leq \sumt \sqrt{ \frac{1}{8t^4} + \frac{1}{8t^4}} 
        && \left( \text{\cref{eq:stdDiscretizationTauReqs}} \right)\\
        & \leq \sumt \frac{1}{2t^2} \leq \frac{\pi^2}{12}. 
        && \left( \sum_{t = 1}^\infty \frac{1}{t^2} = \frac{\pi^2}{6} \right)
    \end{align}
\end{proof}

\lemposteriorUDiscretizationError*
\begin{proof}
    Follows by combining \cref{lem:posteriorMeanDiscretizationError,lem:posteriorStdDiscretizationError}.
\end{proof}

Finally, we are ready to prove the regret bounds for the infinite case:
\thmInfRegBounds*
\begin{proof}
\begin{enumerate}[wide, label = (\roman*)]
    \item Similar to \citet{takeno_randomized_2023,srinivas_information-theoretic_2012}, we use a fixed discretization $\D_t \subset \A$ for $t \geq 1$. Let $\D_t \subset \A$ be a finite set with $|\D_t| = \tau_t^d$ and each dimension equally divided into $\tau_t$ points with $\tau_t$ satisfying \cref{ass:Prior}. Let $[a]_{\D_t}$ denote the nearest point in $\D_t$ for $a \in \A$ and similarly let $[\a]_{\D_t} = \{ [a]_{\D_t} | a \in \a \}$ for $\a \subset \A$.

    As \citet{takeno_randomized_2023}, we decompose the Bayesian regret into several parts:
    \begin{align}
        \BR(T) &= \sumt \E \big[ 
            \underbrace{f(\aoptt) 
            - f([\aoptt]_{\D_t})}_{(1)} + 
            \underbrace{f([\aoptt]_{\D_t}) - U_t([\aoptt]_{\D_t})}_{(2)} \\
            &\qquad \qquad 
            + \underbrace{U_t([\aoptt]_{\D_t})
            - U_t(\aoptt)}_{(3)}
            + \underbrace{U_t(\aoptt) - U_t(\a_t)}_{(4)} \\
            &\qquad \qquad
            + \underbrace{U_t(\a_t)
            - f(\a_t)}_{(5)}
        \big]
    \end{align}
    Term $(1)$ can be bounded using \cref{lem:fDiscretizationError}: $\sumt \E[ f(\a^*_t) - f([\a_t^*]_{\D_t})] \leq \frac{\pi^2}{6}$. Terms $(2)$ and $(5)$ can be bounded using the finite case with $\beta_t = 2 \log(|\D_t| t^2 / \sqrt{2\pi})$. Then, by \cref{lem:fMinusUStarFiniteUpperBound,lem:expectedPosteriorStd,lem:UMinusfUCB}
    \begin{equation}
        \sumt \E[f([\a_t^*]_{\D_t}) - U_t([\a_t^*]_{\D_t}) + U_t(\a_t) - f(\a_t)] \leq \frac{\pi^2}{6} + \sqrt{ 2 (\lambda^*_K + \varsigma^2) T K \beta_T  \gamma_{TK}}.
    \end{equation}
    \citet{takeno_randomized_2023} consider the term $U_t([\aoptt]_{\D_t}) - U_t(\aoptt)$ and argue that it is non-positive since $\a_t = \argmax_{\a \in \S_t} U_t(\a)$. Unlike \citeauthor{takeno_randomized_2023}, we do not assume that all arms are available at time $t$ and thus $[\a^*_t]_{\D_t} \in \S_t$ does not necessarily hold. By further decomposing this term into $(3)$ and $(4)$, the same argument can be applied to term $(4)$: $U_t(\aoptt) - U_t(\a_t) \leq 0$. Then, term $(3)$ can be bounded using \cref{lem:posteriorUDiscretizationError}: $\sumt \E[ U_t([\aoptt]_{\D_t}) - U_t(\aoptt) ] \leq \pi^2/6.$
    
    Finally, by combining the bounds for all terms we get that
    \begin{equation}
        \BR(T) \leq \frac{\pi^2}{2} + \sqrt{C_K TK \beta_T \gamma_{TK}}.
    \end{equation}

    \item The proof for GP-BUCB is shown by following the steps of GP-UCB and using the finite case for Bayes-GP-UCB (\cref{thm:FinRegretBounds} (ii)).

    \item As in the proof for GP-UCB, assume that we have a discretization $\D_t$ and decompose the Bayesian regret into 4 terms:
    \begin{align}
        \BR(T) &= \sumt \E \big[ 
            \underbrace{f(\aoptt) 
            - f([\aoptt]_{\D_t})}_{(1)} + 
            \underbrace{f([\aoptt]_{\D_t}) - U_t([\aoptt]_{\D_t})}_{(2)} \\
            &\qquad \qquad 
            + \underbrace{U_t([\aoptt]_{\D_t})
            - U_t(\a_t)}_{(3)}
            + \underbrace{U_t(\a_t)
            - f(\a_t)}_{(4)}
        \big].
    \end{align}
    As in the proof for GP-UCB, term $(1)$ is dealt with using \cref{lem:fDiscretizationError} and term $(2)$ and $(4)$ are handled as in the finite case (\cref{thm:FinRegretBounds} (iii)):
    \begin{equation}
        \sumt \E[(1) + (2) + (3)] \leq \frac{\pi^2}{6} + \frac{\pi^2}{3} + 2 \sqrt{C_K TK \beta_T \gamma_{TK}}.
    \end{equation}

    To bound term $(3)$, we start by utilizing that $\a^*_t | H_t \overset{d}{=} \a_t | H_t$ and $U_t( [\cdot]_{\D_t} ) | H_t $ is deterministic and thus:
    \begin{align}
        \sumt \E [(3)] &= \sumt \E_{H_t} \left[ \E_t \left[ 
            U_t ([\a^*_t]_{\D_t}) - U_t (\a_t)
        | H_t \right]\right] \\
        &= \sumt \E_{H_t} \left[ \E_t \left[ 
            U_t ([\a_t]_{\D_t}) - U_t (\a_t)
        | H_t\right]\right] \\
        &\leq \frac{\pi^2}{6} 
        && \left(\text{\cref{lem:posteriorUDiscretizationError}}\right)
    \end{align}
    Put together, we have that 
    \begin{equation}
        \BR(T) \leq \frac{2\pi^2}{3} + 2 \sqrt{C_K TK \beta_T \gamma_{TK}}.
    \end{equation}
\end{enumerate}
\end{proof}

\subsection{Additional lemmas}
\label{app:otherlemmas}
\begin{restatable}{lemma}{lemInfoGain}
    For any sequence of superarms $\a_1, \ldots, \a_T$,
    \begin{equation}
        \sum_{t=1}^T \sigma_{t-1}^2 (\a_t) \leq 2 (\lambda^*_K + \varsigma^2) \gamma_{TK}.
    \end{equation}
    where $\lambda^*_K$ is the largest eigenvalue of all possible posterior covariance matrices of size at most $K$.
    \label{lem:infogain}
\end{restatable}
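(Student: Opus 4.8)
The plan is to establish the bound pointwise for any fixed sequence of super arms $\a_1,\dots,\a_T$: since $f\sim\GP(\mu,k)$ has a known kernel, the posterior standard deviation $\sigma_{t-1}$ depends only on the \emph{locations} of the arms selected in rounds $1,\dots,t-1$ and not on the observed rewards, so no expectation is needed. I read the left-hand side as $\sumt\sum_{a\in\a_t}\sigma_{t-1}^2(a)=\sumt\tr\mathbf{B}_t$, where $\mathbf{B}_t:=\big(k_{t-1}(a,a')\big)_{a,a'\in\a_t}$ is the posterior covariance matrix of $f$ at the base arms of $\a_t$ formed from the observations of rounds $1,\dots,t-1$ (i.e., before any arm of round $t$ is observed). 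The two ingredients are: (i) a per-round estimate $\tr\mathbf{B}_t\le 2(\lambda^*_K+\varsigma^2)\cdot\tfrac12\log\det(\I+\varsigma^{-2}\mathbf{B}_t)$; and (ii) the identity $\tfrac12\log\det(\I+\varsigma^{-2}\mathbf{B}_t)=I(\y_{\a_t};f\mid\y_{\a_1},\dots,\y_{\a_{t-1}})$, after which the chain rule for mutual information telescopes the sum.

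For ingredient (i) I would diagonalize $\mathbf{B}_t$: its eigenvalues $\lambda_1,\dots,\lambda_m$ (with $m=|\a_t|\le K$) satisfy $0\le\lambda_i\le\lambda^*_K$ because $\mathbf{B}_t$ is a posterior covariance matrix of size at most $K$. Then $\tr\mathbf{B}_t=\sum_i\lambda_i$ and $\log\det(\I+\varsigma^{-2}\mathbf{B}_t)=\sum_i\log(1+\varsigma^{-2}\lambda_i)$, so it suffices to prove the scalar inequality $\lambda\le(\lambda^*_K+\varsigma^2)\log(1+\varsigma^{-2}\lambda)$ for every $\lambda\in[0,\lambda^*_K]$. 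I would obtain this from two elementary facts: that $u\mapsto u/\log(1+u)$ is nondecreasing on $(0,\infty)$, which applied with $u=\varsigma^{-2}\lambda\le\varsigma^{-2}\lambda^*_K$ gives $\lambda\le\tfrac{\lambda^*_K}{\log(1+\varsigma^{-2}\lambda^*_K)}\log(1+\varsigma^{-2}\lambda)$; and that $\log(1+x)\ge x/(1+x)$ for $x\ge 0$, which applied with $x=\varsigma^{-2}\lambda^*_K$ gives $\tfrac{\lambda^*_K}{\log(1+\varsigma^{-2}\lambda^*_K)}\le\lambda^*_K+\varsigma^2$. Summing over $i$ yields ingredient (i).

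For ingredient (ii) I would use that the observations are jointly Gaussian, so the conditional law of $\y_{\a_t}$ given $\y_{\a_1},\dots,\y_{\a_{t-1}}$ is Gaussian with covariance $\mathbf{B}_t+\varsigma^2\I$ that does not depend on the conditioning values; subtracting the (independent) noise entropy gives $I(\y_{\a_t};f\mid\y_{\a_1},\dots,\y_{\a_{t-1}})=\tfrac12\log\det(\I+\varsigma^{-2}\mathbf{B}_t)$. Combining with (i) and the chain rule, $\sumt\tr\mathbf{B}_t\le 2(\lambda^*_K+\varsigma^2)\,I(\y_{\a_1},\dots,\y_{\a_T};f)$. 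Finally, the $N_T:=\sumt|\a_t|\le TK$ base arms selected across all rounds form a subset of $\A$ of size at most $TK$, and $\gamma_n$ is nondecreasing in $n$, so $I(\y_{\a_1},\dots,\y_{\a_T};f)\le\sup_{A\subset\A,\,|A|\le TK}I(\y_A;f)=\gamma_{TK}$, which completes the proof.

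The main obstacle is ingredient (i): getting \emph{precisely} the constant $2(\lambda^*_K+\varsigma^2)$ — instead of the non-combinatorial factor $2/\log(1+\varsigma^{-2})$ of \citet{srinivas_information-theoretic_2012} — relies on carrying out the eigenvalue argument at the level of the \emph{block} covariance $\mathbf{B}_t$, whose eigenvalues may be as large as $\lambda^*_K=\O(K)$, together with the right pairing of the two scalar facts above. The remaining bookkeeping — verifying that the conditional information gain of a block given the previous blocks equals the stated $\log\det$, and that the chain rule applies to the multiset of all $N_T\le TK$ selected base arms — is routine but should be spelled out, and this is essentially the content of Lemma 3 of \citet{nika_contextual_2022} adapted to the present notation.
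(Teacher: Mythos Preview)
Your proposal is correct and follows essentially the same approach as the paper: both diagonalize the per-round block posterior covariance $\mathbf{B}_t=\bSigma_{t-1}$, use the scalar inequality $\log(1+x)\ge x/(1+x)$ together with $\lambda_i\le\lambda^*_K$ to obtain $\tr\mathbf{B}_t\le(\lambda^*_K+\varsigma^2)\log\det(\I+\varsigma^{-2}\mathbf{B}_t)$, and then invoke the entropy/chain-rule identity to telescope into $I(\y_{[T]};f)\le\gamma_{TK}$. Your two-step derivation of the scalar bound (monotonicity of $u/\log(1+u)$ followed by $\log(1+x)\ge x/(1+x)$) is only a cosmetic variant of the paper's one-step version, which applies $\log(1+x)\ge x/(1+x)$ directly and then enlarges the denominator using $\lambda_{t,k}\le\lambda^*_K$.
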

\begin{proof}
    This proof follows the proof of Lemma 3 of \citep{nika_contextual_2022}.
    Let $K_t = |\a_t|$ denote the number of base arms selected at time $t$. Similarly, let $N_T = \sum_{t \in [T]} K_t$ denote the number of base arms selected {\it up to} time $T$. Note that the information gain can be decomposed into two entropy terms: $I(\boldr_{[T]}; f ) = H(\boldr_{[T]}) - H(\boldr_{[T]} | f )$.

    Since $\boldr_{[T]} | \f_{[T]} \sim \N(\f_{[T]}, \varsigma^2 I_{K_t}$, $H(\boldr_{[T]} | \f_{[T]}) = \frac{1}{2} \log |2 \pi e \varsigma^2 I_{N_T}|$. The first term can be analyzed by using the chain rule of entropy on the superarms:
    \begin{align}
        H(\boldr_{[T]}) &= H(\boldr_T | \boldr_{[T-1]}) + H(\boldr_{[T-1]}) \\
        &= \sum_{t=1}^T H(\boldr_t | \boldr_{[t-1]}).
    \end{align}
    Then, $\boldr_t | \boldr_{[t-1]} \sim \N( \bmu_{t-1}, \bSigma_{t-1} + \varsigma^2 I_{K_t})$ where $\bmu_{t-1} = [\mu_{t-1}(a)]_{a \in \a_t}$ is the posterior mean vector and $\bSigma_{t-1} = (k_{t-1}(a,a'))_{a, a' \in \a_t \times \a_t}$ is the posterior covariance matrix for superarm $\a_t$ after observing $(\a_1, \br_1), \ldots, (\a_{t-1}, \br_{t-1})$. Let $\lambda_{t, k}$ denote the smallest $k$th eigenvalue of $\bSigma_{t-1}$. Then,
    \begin{align}
        H(\boldr_t | \boldr_{[t-1]}) &= \frac{1}{2} \log \left| 2\pi e (\bSigma_{t-1} + \varsigma^2 I_{K_t})\right| \\
        &= \frac{1}{2} \log \left| 2\pi e \varsigma^2 ( \varsigma^{-2} \bSigma_{t-1} + I_{K_t})\right| \\
        &= \frac{1}{2} \log \left| 2\pi e \varsigma^2 I_{K_t} \right| +
        \frac{1}{2} \log \left| \varsigma^{-2} \bSigma_{t-1} + I_{K_t} \right|.
    \end{align}
    Let $\lambda_{t, k}$ denote the smallest $k$th eigenvalue of $\bSigma_{t-1}$. Let $\cM = \{ \bSigma_{t-1} | \forall t \in [T], \forall \a_1, \ldots, \a_t \in \S \}$ be the set of all possible posterior covariance matrices and let $\lambda^*_K = \sup_{\bSigma \in \cM} \max \eig (\bSigma)$ be the largest eigenvalue of all eigenvalues of the matrices in $\cM$.
    Recall that $|A + I_n| = \prod_{k \leq n} (\lambda_k + 1)$ for any real and symmetric matrix $A \in \R^{n \times n}$ with eigenvalues $\lambda_1, ..., \lambda_n$.
    Then,
    \begin{align}
        &\frac{1}{2} \log \left| \varsigma^{-2} \bSigma_{t-1} + I_{K_t} \right| \\
        &=\frac{1}{2} \log \left( \prod_{k=1}^{K_t} \left( \varsigma^{-2} \lambda_{t, k} + 1 \right) \right) \\
        &= \frac{1}{2} \sum_{k=1}^{K_t} \log \left( \varsigma^{-2} \lambda_{t, k} + 1 \right) \\
        &\geq \frac{1}{2} \sum_{k=1}^{K_t} \frac{\varsigma^{-2} \lambda_{t, k}}{ \varsigma^{-2} \lambda_{t,k} + 1} 
        && \left( \log (x+1) \geq x/(x+1), \forall x > 1 \right) \\
        &\geq \frac{\varsigma^{-2}}{2 (\varsigma^{-2} \lambda^* + 1)} \sum_{k=1}^{K_t} \lambda_{t, k} \\
        &= \frac{\varsigma^{-2}}{2 (\varsigma^{-2} \lambda^* + 1)} \sum_{a \in \a_t} \sigma^2_{t-1}(a).
        && \left( \tr(A) = \sum_{\lambda \in \eig(A)} \lambda \right)
    \end{align}
    Put together, we get that $\sum_{t=1}^T \sigma_{t-1}^2 (\a_t) \leq 2 (\lambda^* + \varsigma^2) I(\boldr_{[T]}; f )$. Since the maximum information $\gamma_{T}$ is increasing w.r.t. $T$ and $|\a_t| \leq K$, we get that $\sum_{t=1}^T \sigma_{t-1}^2 (\a_t) \leq 2 (\lambda^* + \varsigma^2) \gamma_{TK}$.
\end{proof}

\begin{restatable}{lemma}{lemerfInvLB}
    The inverse error function is lower bounded by 
    \begin{equation}
        \erf^{-1}(u) \geq \sqrt{- \omega^{-1} \log\left( \frac{1 - u}{\vartheta} \right)}
    \end{equation}
    for $u \in [0, 1)$, $\omega > 1$ and $0 < \vartheta \leq \sqrt{\frac{2e}{\pi}} \frac{\sqrt{\omega - 1}}{\omega}$.
    \label{lem:erfInvLB}
\end{restatable}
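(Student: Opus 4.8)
The plan is to convert the claimed lower bound on $\erf^{-1}$ into an equivalent lower bound on the Gaussian tail $\erfc = 1-\erf$, and then prove that by elementary one-variable calculus. Set $x = \erf^{-1}(u) \ge 0$, so that $u = \erf(x)$ and $1-u = \erfc(x)$. If $1-u \ge \vartheta$ the right-hand side of the asserted inequality is non-positive, so it holds trivially since $\erf^{-1}(u) \ge 0$; hence we may assume $\erfc(x) < \vartheta \le 1$, which forces $x > 0$. In that regime both sides are strictly positive, so the assertion is equivalent (after squaring) to $\omega x^2 \ge \log\big(\vartheta/\erfc(x)\big)$, i.e. to $h(x) := \erfc(x)\,e^{\omega x^2} \ge \vartheta$. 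It therefore suffices to show $h(x) \ge \vartheta$ for every $x \ge 0$.

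To prove this, first note $h(0) = \erfc(0) = 1$, while a standard lower bound on the Gaussian tail (for instance $\erfc(x) \ge e^{-x^2}/(2\sqrt{\pi}\,x)$ for $x \ge 1$) gives $h(x) \ge e^{(\omega-1)x^2}/(2\sqrt{\pi}\,x) \to \infty$ as $x \to \infty$, using $\omega > 1$. Consequently $h$ attains its infimum over $[0,\infty)$ either at $x = 0$ or at an interior critical point. Since $h'(x) = 2e^{\omega x^2}\big(\omega x\,\erfc(x) - \tfrac{1}{\sqrt{\pi}}e^{-x^2}\big)$, any interior critical point $x_\star$ satisfies $\erfc(x_\star) = e^{-x_\star^2}/(\omega\sqrt{\pi}\,x_\star)$, and substituting this back yields $h(x_\star) = e^{(\omega-1)x_\star^2}/(\omega\sqrt{\pi}\,x_\star)$. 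Minimizing $y \mapsto e^{(\omega-1)y^2}/y$ over $y > 0$ (the minimizer is $y = 1/\sqrt{2(\omega-1)}$, with minimum value $\sqrt{2e(\omega-1)}$) shows $h(x_\star) \ge \sqrt{2e(\omega-1)}/(\omega\sqrt{\pi}) = \sqrt{2e/\pi}\cdot\sqrt{\omega-1}/\omega$ at every critical point. Since this quantity is $\le 1$, we obtain $\min_{x\ge 0} h(x) \ge \sqrt{2e/\pi}\cdot\sqrt{\omega-1}/\omega \ge \vartheta$, which is exactly what was needed.

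I do not expect a genuine obstacle here; the only step that needs a little care is justifying that the infimum of $h$ is actually attained (so that it equals $h(0)$ or a critical value), which is where the crude growth estimate $h(x)\to\infty$ — and hence the one-sided Gaussian-tail bound — comes in. It is worth observing that the same computation pins down the admissible range of $\vartheta$: the threshold $\sqrt{2e/\pi}\cdot\sqrt{\omega-1}/\omega$ is precisely the constant produced by the critical-point estimate, so no larger value can be accommodated by this argument, in line with Thm. 2 of \citet{chang_chernoff-type_2011}.
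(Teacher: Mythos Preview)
Your proof is correct. Both your argument and the paper's reduce to the Chernoff-type lower bound $\erfc(x)\ge\vartheta e^{-\omega x^2}$ for $x\ge 0$; the paper simply invokes Theorem~2 of \citet{chang_chernoff-type_2011} for this inequality and then applies the monotone-inversion principle $f\le g \Rightarrow f^{-1}\ge g^{-1}$, whereas you prove the same inequality from scratch by minimizing $h(x)=\erfc(x)e^{\omega x^2}$ over $[0,\infty)$. Your route is more self-contained (and actually makes transparent why the constant $\sqrt{2e/\pi}\,\sqrt{\omega-1}/\omega$ is the correct threshold), at the cost of a short calculus computation; the paper's route is terser but outsources the only substantive step. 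One cosmetic remark: when $1-u>\vartheta$ the right-hand side is not ``non-positive'' but undefined as a real number, so it would be cleaner to say the inequality is vacuous there (or, equivalently, that it suffices to prove $h(x)\ge\vartheta$ for all $x\ge 0$, which already covers every $u\in[0,1)$).
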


\begin{proof}
    According to Theorem 2 of \citet{chang_chernoff-type_2011}, $\erfc(u) \geq \vartheta \exp(-\omega u^2)$ for $\omega > 1$ and $0 < \vartheta \leq \sqrt{\frac{2e}{\pi}} \frac{\sqrt{\omega - 1}}{\omega}$. Since $\erf(u) = 1 - \erfc(u)$, it follows that $\erf(u) \leq 1 - \vartheta \exp( - \omega u^2) =: h(u)$. 

    In general, if $f(x) \leq g(x)$ then $f^{-1}(x) \geq g^{-1}(x)$. Thus, $\erf^{-1}(u) \geq h^{-1}(u) = \sqrt{- \omega^{-1} \log \left( (1 - u) / \vartheta \right)}$.
\end{proof}

\begin{restatable}{lemma}{lemerfInvUB}
    The inverse error function is upper bounded by
    \begin{equation}
        \erf^{-1}(u) \leq \sqrt{- \omega^{-1} \log\left( \frac{1 - u}{\vartheta} \right)}
    \end{equation}
    for $u \in [0, 1)$, $\vartheta \geq 1$ and $0 < \omega \leq 1$.
    \label{lem:erfInvUB}
\end{restatable}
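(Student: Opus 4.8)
The plan is to mirror the proof of \cref{lem:erfInvLB}, with every inequality reversed: there we used a Chernoff-\emph{lower} bound on $\erfc$, and here we need a Chernoff-\emph{upper} bound. The only genuinely new ingredient is the elementary estimate $\erfc(u) \le e^{-u^2}$ for all $u \ge 0$. This can be established by setting $g(u) = e^{-u^2} - \erfc(u)$, observing $g(0) = 0$ and $g'(u) = e^{-u^2}\big(\tfrac{2}{\sqrt{\pi}} - 2u\big)$, so that $g$ increases on $[0, 1/\sqrt{\pi}]$ and strictly decreases on $[1/\sqrt{\pi}, \infty)$; since $g(1/\sqrt{\pi}) > 0$ and $g(u) \to 0$ as $u \to \infty$, monotonicity forces $g(u) \ge 0$ throughout $[0,\infty)$ (alternatively one may simply cite this standard bound).

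Next I would use the hypotheses $0 < \omega \le 1$ and $\vartheta \ge 1$ to inflate the right-hand side: $\erfc(u) \le e^{-u^2} \le e^{-\omega u^2} \le \vartheta e^{-\omega u^2}$, and hence $\erf(u) = 1 - \erfc(u) \ge 1 - \vartheta e^{-\omega u^2}$ for all $u \ge 0$. The cleanest way to finish is by direct substitution rather than by inverting $h(u) := 1 - \vartheta e^{-\omega u^2}$: fix $u \in [0,1)$ and set $v := \sqrt{-\omega^{-1}\log((1-u)/\vartheta)}$, which is a well-defined nonnegative real since $(1-u)/\vartheta \in (0,1]$ makes the logarithm nonpositive. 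Then $e^{-v^2} = \big((1-u)/\vartheta\big)^{1/\omega} \le (1-u)/\vartheta \le 1-u$, where the first inequality uses $1/\omega \ge 1$ together with $(1-u)/\vartheta \in (0,1]$, and the second uses $\vartheta \ge 1$. Consequently $\erf(v) = 1 - \erfc(v) \ge 1 - e^{-v^2} \ge u$, and applying the increasing function $\erf^{-1}$ gives $\erf^{-1}(u) \le \erf^{-1}(\erf(v)) = v$, which is exactly the claimed bound. (Equivalently, one could note that on $[\sqrt{\omega^{-1}\log\vartheta}, \infty)$ the map $h$ is strictly increasing onto $[0,1)$ with inverse $h^{-1}(u) = \sqrt{-\omega^{-1}\log((1-u)/\vartheta)}$, and that $\erf \ge h$ implies $\erf^{-1} \le h^{-1}$, as in \cref{lem:erfInvLB}.)

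There is no serious obstacle: the argument is short once one has $\erfc(u) \le e^{-u^2}$, and the only points requiring mild care are checking that the argument of the logarithm lies in $(0,1]$ so the square root is real, and keeping the monotonicity directions consistent when inverting $\erf$.
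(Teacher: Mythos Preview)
Your proof is correct and follows essentially the same route as the paper: both mirror the argument of \cref{lem:erfInvLB}, replacing the Chernoff-type \emph{lower} bound on $\erfc$ by the \emph{upper} bound $\erfc(u) \le \vartheta e^{-\omega u^2}$ and then inverting. The only difference is that the paper simply cites Theorem~1 of \citet{chang_chernoff-type_2011} for this inequality, whereas you derive it yourself from the elementary estimate $\erfc(u) \le e^{-u^2}$ together with $\omega \le 1$ and $\vartheta \ge 1$.
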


\begin{proof}
    The same arguments as in \cref{lem:erfInvLB} but using Theorem 1 of \citet{chang_chernoff-type_2011}.
\end{proof}

\section{Additional experimental details} \label{app:experimentdetails}

\subsection{Kernel details} \label{app:kernelDetails}
Here, we provide further details on the graph kernel used in the experiments. The original graph Matérn GP of \citet{borovitskiy_matern_2021} defines a GP on the vertices of a weighted and undirected graph. We extend the graph Matérn GP from \citet{borovitskiy_matern_2021} to the edges of a directed graph by considering the incidence graph Laplacian of the line graph $\L(\G)$.

Let $\W_\L \in \R^{|\Ec| \times |\C|}$ denote the weight matrix of $\L(\G) = (\Ec, \C)$ where $\Ec$ is the set of edges and $\C$ is the set of all connections in the network. The weight $W_{\mathcal{L}, e_1, e_2}$ is set to $\bar{\ell} / \ell_{e_1}$ where $\bar{\ell}$ is the average length of all edges and $\ell_{e_1}$ is the length of edge $e_1$. We replace the ordinary graph Laplacian used by \citet{borovitskiy_matern_2021} with the incidence Laplacian:
\begin{equation}
    \bDelta_I = \mathbf{B} \mathbf{B}^\top ,
\end{equation}
where the incidence matrix $\mathbf{B} \in \R^{|\Ec| \times |\mathcal{C}|}$ has entries
\begin{equation}
    B_{e, c} = \begin{cases}
    -W_{\mathcal{L}, e_1, e_2} \quad & \text{if } e = e_1,\\
    W_{\mathcal{L}, e_1, e_2} \quad & \text{if } e = e_2,\\
    0 \quad & \text{otherwise}\\
    \end{cases} \quad \forall e \in \Ec, c  = (e_1, e_2) \in \mathcal{C}.
\end{equation} 
Let $\bDelta_I = \U_I \bLam_I \U_I^\top$ denote the eigendecomposition of $\bDelta_I$, then the graph Matérn GP of the edges is given by
\begin{equation}
    \f \sim \N \left( 0, \U_I \left( \frac{2 \nu_G}{\kappa_G^2} \I + \bLam_I \right)^{-\nu} \U_I^\top \right).
\end{equation}

Recall that $k_f: \R^d \times \R^d \xrightarrow{} \R$ denotes a feature kernel which measures the similarity between the contexts of the edges. The feature kernel is an ordinary Matérn kernel with fixed $\nu = 5/2$ but tunable outputscale $\sigma_f$ and lengthscales $\bm{\ell}_f \in \R^d_+$ for each dimension: 
\begin{align}
    k_{f}(x_e, x_{e'}) &:= \sigma_f \frac{2^{1 - \nu}}{\Gamma(\nu)} \left( \sqrt{2\nu} D \right)^\nu K_\nu \left( \sqrt{2 \nu} D \right), \label{eq:maternKernel}
\end{align}
where $x_e$ denotes the feature vector of edge $e$ and the feature distance $D$ between edge $e$ and $e'$ is given by 
\begin{equation}
    D = \sqrt{ (x_e - x_{e'})^\top \diag( \bm{\ell}_f )^{-2} (x_e - x_{e'}) }. 
\end{equation}
The kernels, the SVGP model and \cref{alg:SVGPOptimization} was implemented using GPyTorch \citep{gardner_gpytorch_2018}.

\subsection{Road network} \label{app:RoadNetwork}
The set of available paths was restricted to edges within the largest strongly connected component. This mainly removed road segments in inaccessible areas and does not affect the navigational challenge. The route Luxembourg A starts in edge {\tt -31118\#2} and ends in edge {\tt -{}-32646\#1}. The route Luxembourg B starts in edge {\tt -30436\#5} and ends in edge {\tt-30946\#0}. Similarly, the route Monaco A starts in edge {\tt -30558} and ends in edge {\tt -32888\#0} whilst Monaco B starts in edge {\tt -32166\#0} and ends in edges {\tt --32940\#0}. For simplicity, the start and end points are edges since the shortest path was computed using the line graph $\L(\G)$.

\subsection{Detailed parameter values} \label{app:parameterValues}
In this section, we further specify the vehicle, environmental and algorithmic parameters used. We use the default parameters for electric vehicles provided by SUMO \citep{lopez_microscopic_2018}, see \cref{tab:vehicleParameters}.

\begin{table}
    \centering
    \caption{Vehicle and environmental parameters for the energy model.}
    \label{tab:vehicleParameters}
    \begin{tabular}{c c c}
        \toprule
        Variable & Value & Unit \\ \midrule
        Mass $m$ & 1830 & kg \\ 
        Rolling resistance coefficient $C_r$ & 0.01 & \\
        Front surface area $A$ & 2.6 & $\textrm{m}^2$ \\
        Air drag coefficient $C_d$ & 0.35 & \\
        Power train efficiency $\eta^+$ & 0.98 & \\
        Recuperation efficiency $\eta^-$ & 0.96 & \\
        Gravitational acceleration $g$ & 9.82 & m/$\textrm{s}^2$ \\
        Air density $\rho$ & 1.2 & kg/$\textrm{m}^3$ \\
        \bottomrule
    \end{tabular}
\end{table}

The graph kernel is initialized with parameters $\nu_G = 2$, $\kappa_G = 1$ and $\sigma_G$ set according to the prior. The natural gradient descent learning rate is set to $0.1$ whilst the Adam learning rate is set to $0.01$. The GP model uses a batch size $B$ of 2500 and 1 gradient step per optimization procedure. The number of inducing points is set to $1000$.

\section{Additional experimental results} \label{app:moreexperiments}
\subsection{Impact of lengthscale} \label{app:lengthscaleExp}
In this section, we provide the full results for the lengthscale experiments in \cref{sec:results}. The cumulative regret over time is visualized in \cref{fig:Lengthscale} and the final cumulative regret as a function on the lengthscale $\ell$ is visualized in \cref{fig:LengthscaleFinal}.
\begin{figure}
    \centering
    \includegraphics[width=\linewidth]{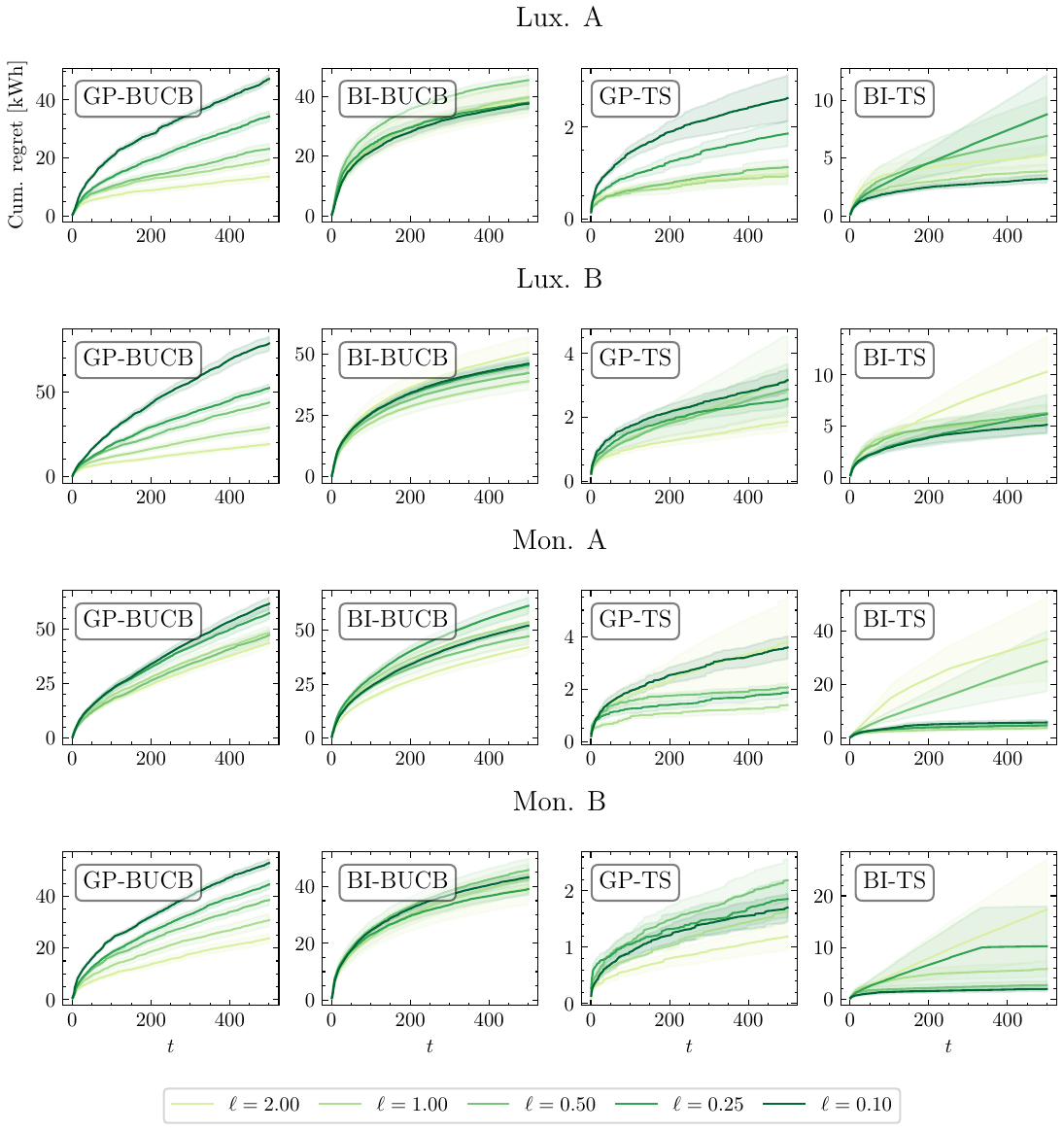}
    \caption{Cumulative regret of GP-BUCB, BI-BUCB, GP-TS and BI-TS for varying prior lengthscale values $\ell$.}
    \label{fig:Lengthscale}
\end{figure}
\begin{figure}
    \centering
    \includegraphics[width=0.8\linewidth]{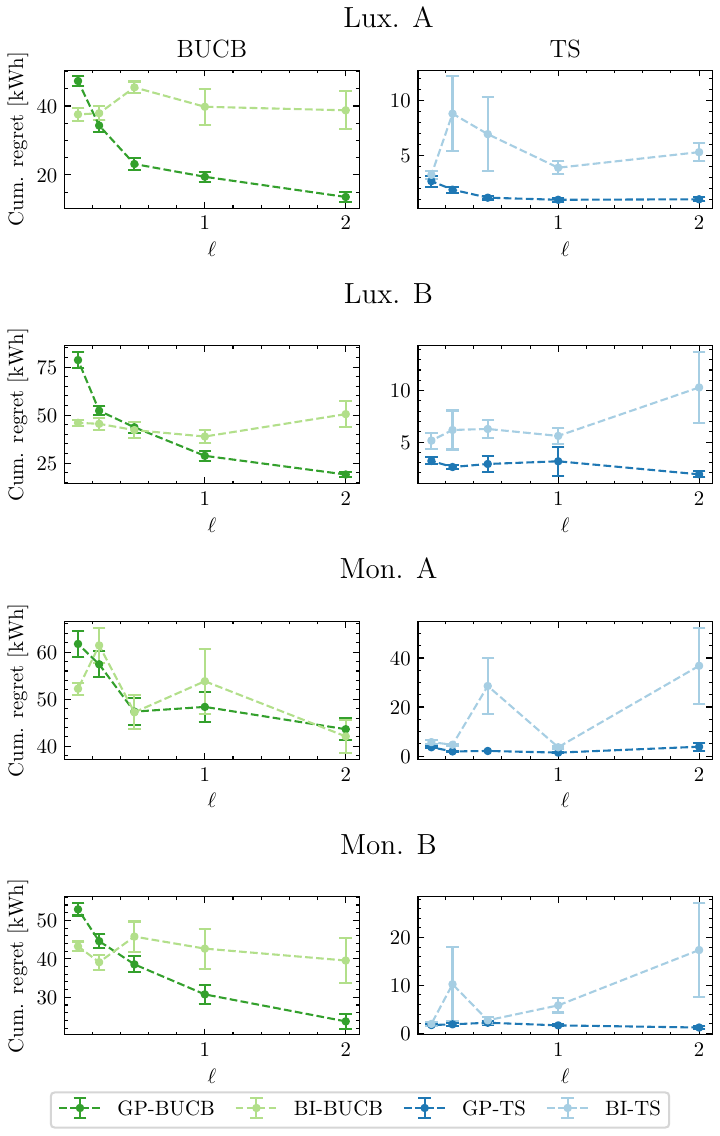}
    \caption{Cumulative regret at $t = 500$ for varying prior lengthscale values. Errorbars correspond to $\pm 1$ standard error.}
    \label{fig:LengthscaleFinal}
\end{figure}

\subsection{Visualization of exploration} \label{app:exploration}
In this section, we provide visualization of the routes selected by the algorithms. See \cref{fig:lustAExploration,fig:lustBExploration,fig:mostAExploration,fig:mostBExploration} for visualization on Lux. A, Lux B, Mon. A and Mon. B, respectively. According to the results, the TS variants are able to find sophisticated paths with significantly less exploration compared to BUCB and UCB. This observation implies the sample efficiency of TS methods. 

\begin{figure}
    \centering
    \begin{subfigure}[c]{0.5\linewidth}
        \includegraphics[width=\linewidth]{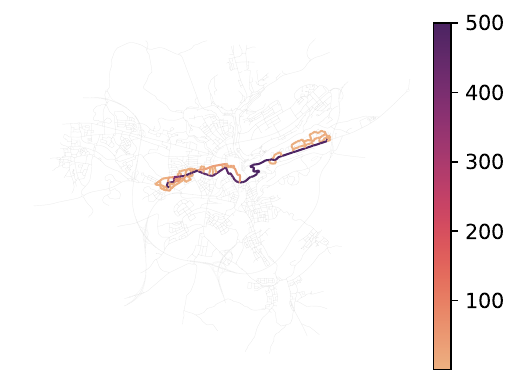}
        \caption{BI-TS}
    \end{subfigure}%
    \begin{subfigure}[c]{0.5\linewidth}
        \includegraphics[width=\linewidth]{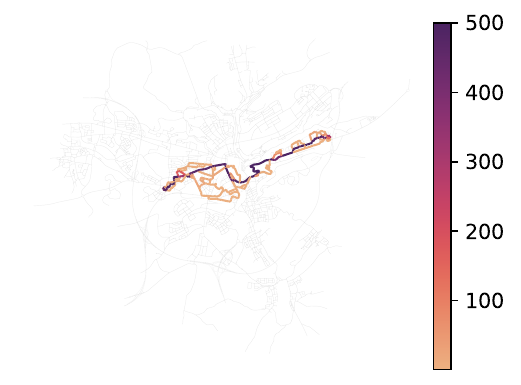}
        \caption{GP-TS}
    \end{subfigure}
    \begin{subfigure}[c]{0.5\linewidth}
        \includegraphics[width=\linewidth]{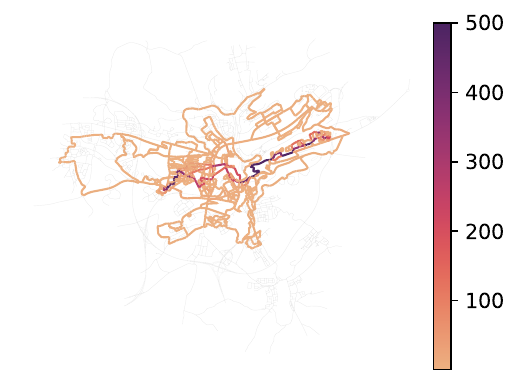}
        \caption{BI-UCB}
    \end{subfigure}%
    \begin{subfigure}[c]{0.5\linewidth}
        \includegraphics[width=\linewidth]{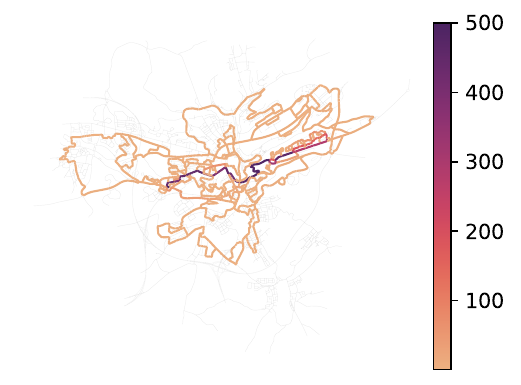}
        \caption{GP-UCB}
    \end{subfigure}
    \begin{subfigure}[c]{0.5\linewidth}
        \includegraphics[width=\linewidth]{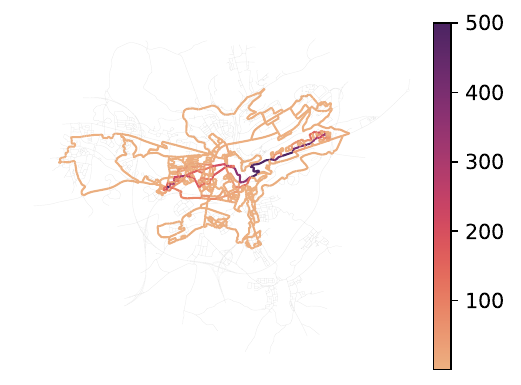}
        \caption{BI-BUCB}
    \end{subfigure}%
    \begin{subfigure}[c]{0.5\linewidth}
        \includegraphics[width=\linewidth]{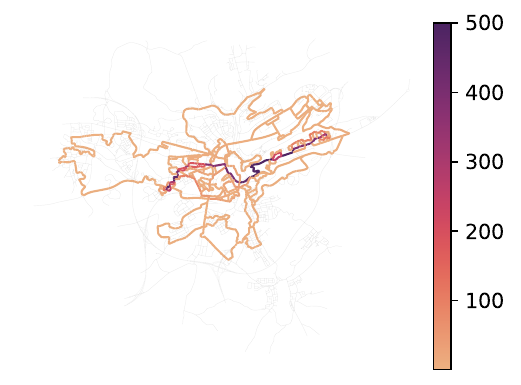}
        \caption{GP-BUCB}
    \end{subfigure}%
    \caption{Exploration of Luxembourg A.}
    \label{fig:lustAExploration}
\end{figure}
\begin{figure}
    \centering
    \begin{subfigure}[c]{0.5\linewidth}
        \includegraphics[width=\linewidth]{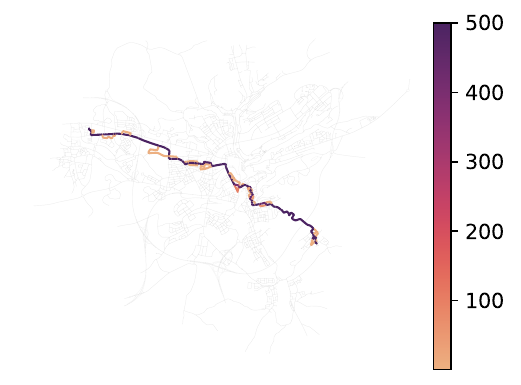}
        \caption{BI-TS}
    \end{subfigure}%
    \begin{subfigure}[c]{0.5\linewidth}
        \includegraphics[width=\linewidth]{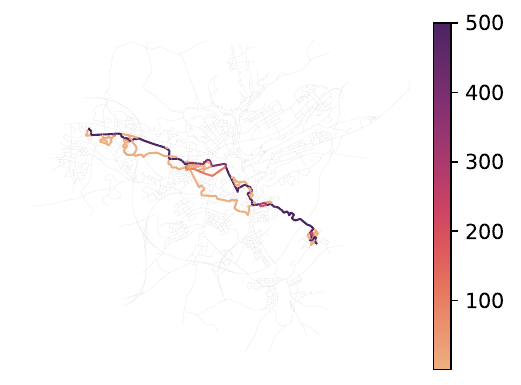}
        \caption{GP-TS}
    \end{subfigure}
    \begin{subfigure}[c]{0.5\linewidth}
        \includegraphics[width=\linewidth]{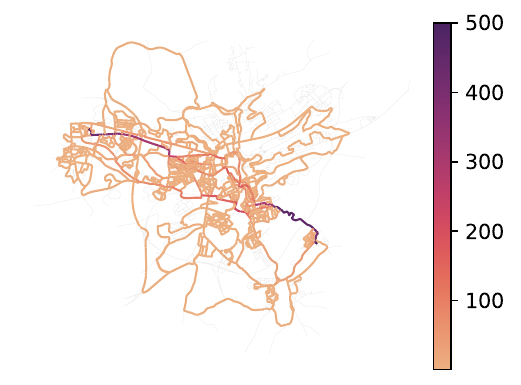}
        \caption{BI-UCB}
    \end{subfigure}%
    \begin{subfigure}[c]{0.5\linewidth}
        \includegraphics[width=\linewidth]{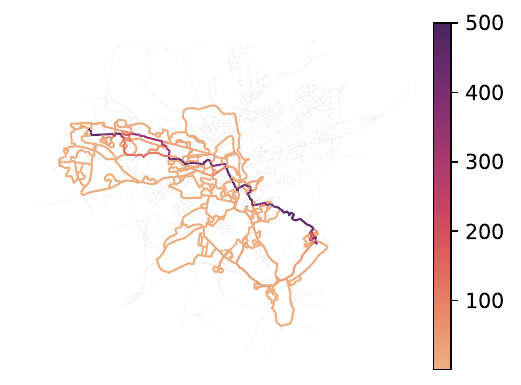}
        \caption{GP-UCB}
    \end{subfigure}
    \begin{subfigure}[c]{0.5\linewidth}
        \includegraphics[width=\linewidth]{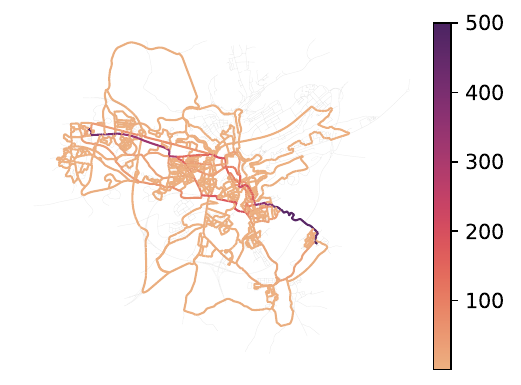}
        \caption{BI-BUCB}
    \end{subfigure}%
    \begin{subfigure}[c]{0.5\linewidth}
        \includegraphics[width=\linewidth]{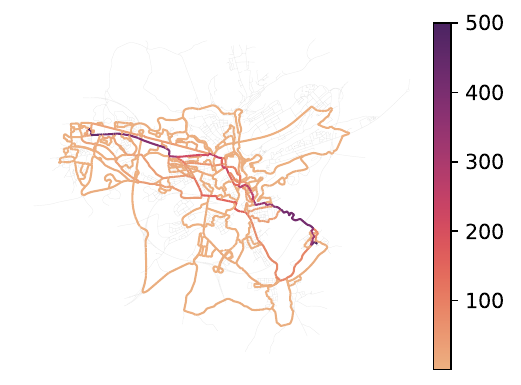}
        \caption{GP-BUCB}
    \end{subfigure}%
    \caption{Exploration of Luxembourg B.}
    \label{fig:lustBExploration}
\end{figure}

\begin{figure}
    \centering
    \begin{subfigure}[c]{0.5\linewidth}
        \includegraphics[width=\linewidth]{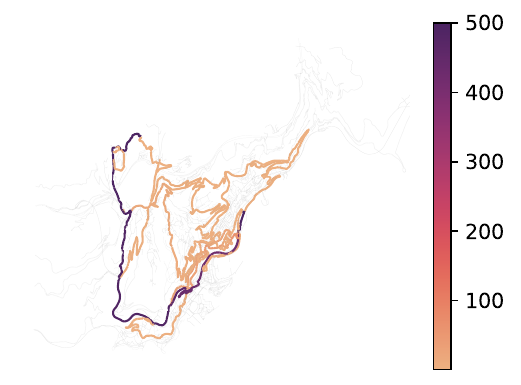}
        \caption{BI-TS}
    \end{subfigure}%
    \begin{subfigure}[c]{0.5\linewidth}
        \includegraphics[width=\linewidth]{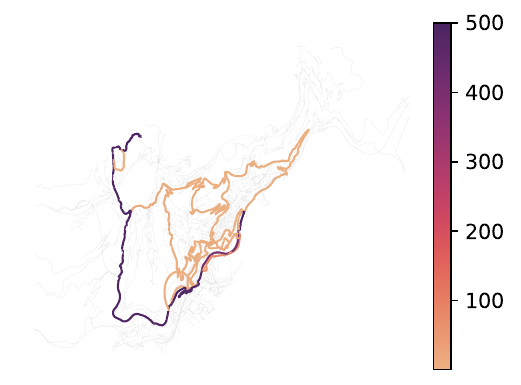}
        \caption{GP-TS}
    \end{subfigure}
    \begin{subfigure}[c]{0.5\linewidth}
        \includegraphics[width=\linewidth]{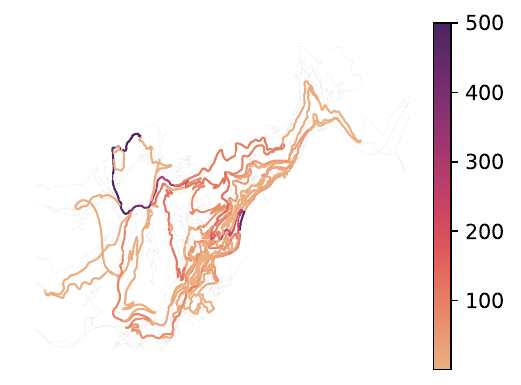}
        \caption{BI-UCB}
    \end{subfigure}%
    \begin{subfigure}[c]{0.5\linewidth}
        \includegraphics[width=\linewidth]{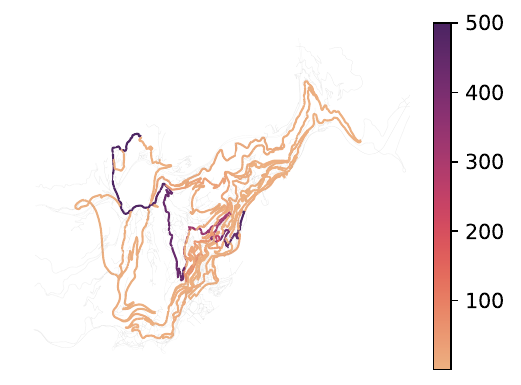}
        \caption{GP-UCB}
    \end{subfigure}
    \begin{subfigure}[c]{0.5\linewidth}
        \includegraphics[width=\linewidth]{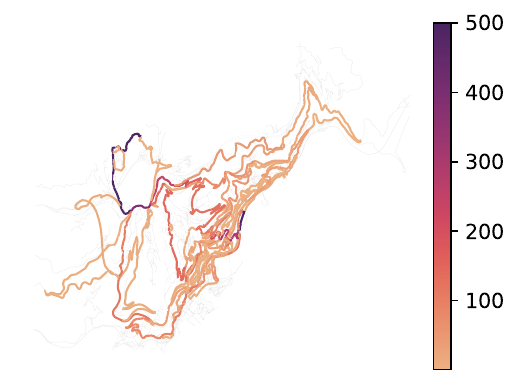}
        \caption{BI-BUCB}
    \end{subfigure}%
    \begin{subfigure}[c]{0.5\linewidth}
        \includegraphics[width=\linewidth]{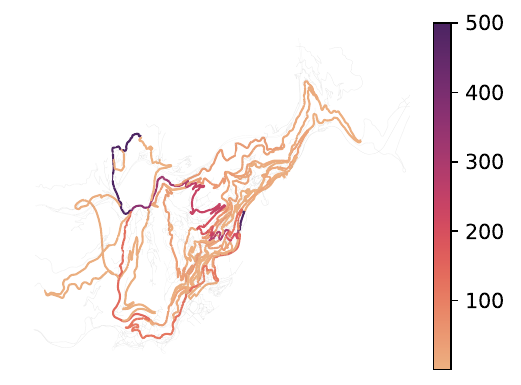}
        \caption{GP-BUCB}
    \end{subfigure}%
    \caption{Exploration of Monaco A.}
    \label{fig:mostAExploration}
\end{figure}
\begin{figure}
    \centering
    \begin{subfigure}[c]{0.5\linewidth}
        \includegraphics[width=\linewidth]{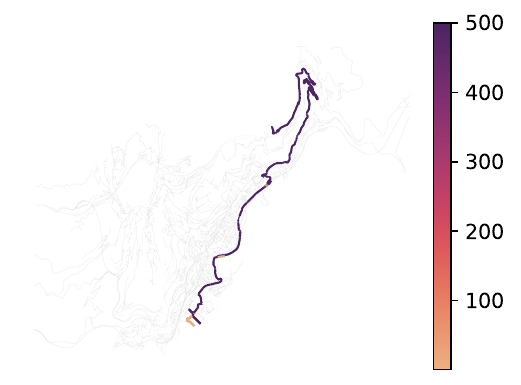}
        \caption{BI-TS}
    \end{subfigure}%
    \begin{subfigure}[c]{0.5\linewidth}
        \includegraphics[width=\linewidth]{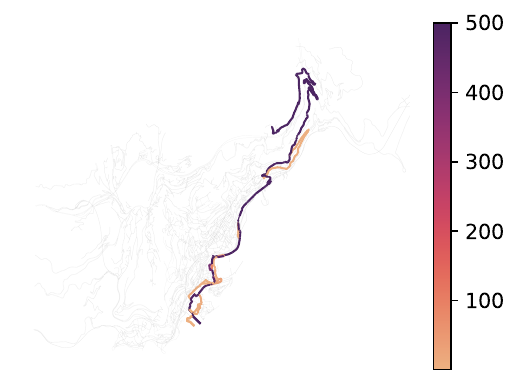}
        \caption{GP-TS}
    \end{subfigure}
    \begin{subfigure}[c]{0.5\linewidth}
        \includegraphics[width=\linewidth]{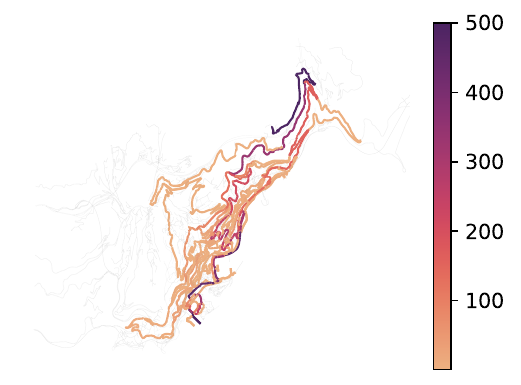}
        \caption{BI-UCB}
    \end{subfigure}%
    \begin{subfigure}[c]{0.5\linewidth}
        \includegraphics[width=\linewidth]{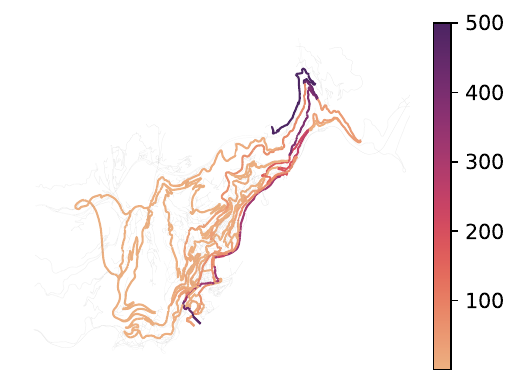}
        \caption{GP-UCB}
    \end{subfigure}
    \begin{subfigure}[c]{0.5\linewidth}
        \includegraphics[width=\linewidth]{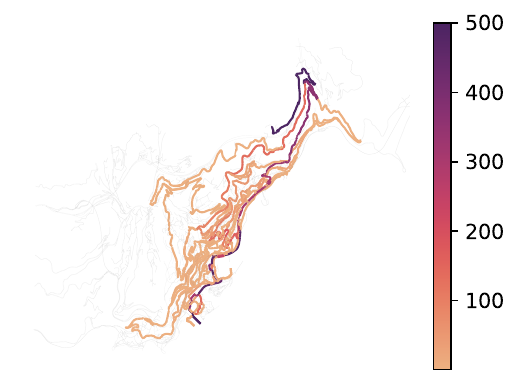}
        \caption{BI-BUCB}
    \end{subfigure}%
    \begin{subfigure}[c]{0.5\linewidth}
        \includegraphics[width=\linewidth]{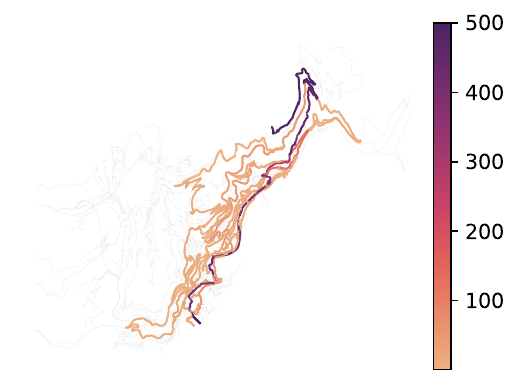}
        \caption{GP-BUCB}
    \end{subfigure}%
    \caption{Exploration of Monaco B.}
    \label{fig:mostBExploration}
\end{figure}

\end{document}